\theoremstyle{plain}
\newtheorem{theorem}{Theorem}[section]
\newtheorem{proposition}[theorem]{Proposition}
\newtheorem{lemma}[theorem]{Lemma}
\newtheorem{corollary}[theorem]{Corollary}
\theoremstyle{definition}
\newtheorem{definition}[theorem]{Definition}
\newtheorem{assumption}[theorem]{Assumption}
\theoremstyle{remark}
\newtheorem{remark}[theorem]{Remark}
\begin{document}
	
	\title{DP-FedSOFIM: Differentially Private Federated Stochastic Optimization using Regularized Fisher Information Matrix}

    \author{\name Sidhant Nair \email sid.nairiitd@gmail.com \\
    \addr Department of Mechanical Engineering, Indian Institute of Technology Delhi
    \AND
    \name Tanmay Sen \email  tanmay.sen@isical.ac.in \\
    \addr SQC \& OR Unit,  Indian Statistical Institute Kolkata
    \AND
    \name Mrinmay Sen \email senmrinmay@alumni.iith.ac.in \\
    \addr Department of Artificial Intelligence, Indian Institute of Technology Hyderabad
    \AND
    \name Sayantan Banerjee \email sayantanb@iimidr.ac.in \\
    \addr Operations Management \& Quantitative Techniques Area,
Indian Institute of Management, Indore}
	
	
	\maketitle
	
	\begin{abstract}
		

        Differentially private federated learning (DP-FL) often suffers from slow convergence under tight privacy budgets because the noise required for privacy preservation degrades gradient quality. Although second-order optimization can accelerate training, existing approaches for DP-FL face significant scalability limitations: Newton-type methods require clients to compute Hessians, while feature covariance methods scale poorly with model dimension. We propose \textbf{DP-FedSOFIM}, a simple and scalable Hessian approximation based second-order optimization method for DP-FL. The method constructs a regularized proxy for the Fisher information matrix at the server using only privatized aggregated gradients, capturing useful curvature information without requiring full Hessian computations or feature covariance estimation. Efficient rank-one updates based on the Sherman-Morrison formula enable communication costs proportional to the model size and require only $O(d)$ client side memory. Because all curvature and preconditioning operations are performed at the server on already privatized gradients, \textbf{DP-FedSOFIM} introduces no additional privacy cost beyond the underlying privatized gradient release mechanism. Experiments on CIFAR-10 and PathMNIST demonstrate that DP-FedSOFIM converges faster and consistently achieves higher accuracy than several competitive differentially private federated learning baselines across a wide range of privacy budgets.
        

	\end{abstract}
	
    \section{Introduction}
	
	Federated learning (FL) has emerged as a widely adopted paradigm for training machine learning models across geographically distributed data sources without centralizing raw data \citep{mcmahan2017communication, kairouz2021advances, li2020federated}. Its appeal is especially pronounced in privacy sensitive domains such as healthcare and finance, where regulatory constraints and ethical obligations make direct data sharing infeasible \citep{rieke2020future, sheller2020federated}.
	Yet the mere absence of raw data transfer does not constitute a formal privacy guarantee. A growing body of work has demonstrated that shared model gradients can leak sensitive training information through gradient inversion attacks \citep{zhu2019deep, geiping2020inverting}, motivating the integration of rigorous privacy mechanisms into the federated training pipeline.
	
	Differential privacy (DP) provides the gold standard for bounding information leakage in iterative learning algorithms \citep{dwork2006calibrating, dwork2014algorithmic}. The seminal DP-SGD framework of \citet{abadi2016deep} operationalized DP for deep learning via per example gradient clipping and calibrated Gaussian noise addition, and introduced the moments accountant for tight privacy composition across training steps. Extending this approach to federated settings, \citet{mcmahan2017learning} proposed DP-FedAvg, which enforces user level differential privacy by clipping and noising client updates before aggregation. Subsequent work has sharpened the privacy utility tradeoff through more expressive accounting frameworks such as R\'{e}nyi differential privacy (RDP) \citep{mironov2017renyi}, Gaussian DP \citep{dong2022gaussian}, and privacy amplification by subsampling \citep{balle2018improving, wang2019subsampled}. 
	
	Despite these advances, differentially private federated learning (DP-FL) faces a persistent convergence--privacy tradeoff. To satisfy an $(\epsilon, \delta)$-DP guarantee, gradient updates must be clipped and perturbed with Gaussian noise scaled to the clipping threshold \citep{abadi2016deep}. In the federated setting where communication costs tightly limit the number of rounds and full client participation amplifies per round noise this perturbation can dominate the true gradient signal entirely, causing convergence to stall or degrade, particularly under stringent privacy budgets ($\epsilon < 2$) \citep{andrew2021differentially}. The problem is further compounded by client drift in heterogeneous data regimes \citep{karimireddy2020scaffold, li2020convergence}, where local gradients diverge from the global objective, and by the cumulative effect of noise composition over many rounds. These challenges together create a regime tight privacy, many clients, limited rounds where even well tuned first-order methods such as DP-FedAvg and DP-FedGD suffer severe utility loss.
	
	A natural remedy is to exploit second-order curvature information, which can accelerate convergence by rescaling gradients along directions of high curvature precisely the directions where noise corrupted first-order steps are most wasteful. Adaptive methods such as Adam \citep{adam_kingma_2014} and AdaGrad \citep{duchi2011adaptive} pursue this idea via diagonal curvature estimates, and server side adaptive methods such as FedAdam and FedYogi \citep{reddi2020adaptive} carry this to federated settings without requiring second-order computation on clients. However, these methods do not explicitly leverage geometric curvature information and provide limited benefit under the anisotropic noise introduced by DP clipping. Exact second-order methods, including Newton-type approaches such as FedNL \citep{safaryan2022fednl} and GIANT \citep{wang2018giant}, as well as the Kronecker-factored curvature approximation K-FAC \citep{martens2015optimizing}, require clients to compute or transmit $O(d^2)$ Hessian or feature covariance information per round an $O(d)$-fold overhead over first-order methods that is prohibitive for high dimensional models and resource constrained edge devices.
	
	The most closely related prior work, DP-FedNew \citep{krouka2025communication}, introduces a second-order preconditioner for DP-FL based on local feature covariance matrices. While DP-FedNew demonstrates significant convergence speedups over DP-FedGD, its requirement that each client maintain and communicate an $O(d^2)$ feature covariance matrix limits its applicability to low dimensional generalized linear models and makes it impractical at the parameter scales of modern transfer learning pipelines. Moreover, computing second-order information at the client level introduces additional sensitivity that must be accounted for in the privacy analysis, potentially requiring tighter clipping or larger noise multipliers. 
	
	We introduce \textbf{DP-FedSOFIM} (Differentially Private Federated Stochastic Optimization using Regularized Fisher Information Matrix), a framework that resolves the scalability bottleneck of prior second-order DP-FL methods by relocating all curvature estimation and preconditioning entirely to the server. DP-FedSOFIM builds an online, rank one approximation to the Fisher Information Matrix (FIM) using only the privatized, aggregated gradients already available at the server, maintaining the inverse efficiently via the Sherman-Morrison formula \citep{sherman1950adjustment} at $O(d)$ cost per round.
	Because all curvature computation is applied exclusively to already privatized quantities,  the post-processing theorem \citep{dwork2014algorithmic} guarantees that DP-FedSOFIM preserves the same $(\epsilon, \delta)$-DP guarantee as the underlying DP-FedGD baseline at no additional privacy cost.
	
	While second-order optimization has long been known to accelerate convergence in centralized learning, extending such ideas to differentially private federated learning introduces several technical difficulties. In federated systems, curvature information cannot be computed from centralized data and must instead be inferred from aggregated client updates that are themselves corrupted by privacy noise. This creates a fundamental tension: reliable curvature estimation typically requires rich second-order statistics, yet differential privacy and communication constraints restrict what information can be transmitted from clients. Classical second-order approaches rely on dense Hessian or Fisher matrices with $O(d^2)$ memory and computation, rendering them impractical in large-scale federated environments. From a statistical perspective, the problem is further complicated by the fact that the curvature signal must be recovered from noisy gradient observations whose variance scales with the privacy mechanism.
	
	The DP-FedSOFIM framework addresses this challenge by exploiting the structure of the aggregated gradient dynamics. Rather than attempting to estimate a full curvature matrix, we construct a rank-one Fisher proxy from the momentum buffer, which aggregates gradient information across rounds while attenuating differential privacy noise through exponential averaging. This structure enables an exact Sherman--Morrison inversion, yielding a curvature-adaptive preconditioner that can be applied in $O(d)$ time and memory. The resulting algorithm therefore integrates curvature-aware optimization with differential privacy in a manner that preserves the communication efficiency required in cross-device federated learning, while still capturing the dominant curvature directions that govern convergence in ill-conditioned optimization landscapes.
	
	Our key contributions are:
	
	\begin{itemize}
		\item \textbf{Server-Side Second-Order Preconditioning:} We design a curvature-aware natural-gradient preconditioner constructed entirely from privatized aggregated gradients, eliminating the need for client-side second-order computation.
		\item \textbf{Efficient $O(d)$ Implementation:} By leveraging the Sherman-Morrison formula for low-rank matrix updates \citep{sherman1950adjustment}, we achieve $O(d)$ computational complexity per round, making our approach scalable to high-dimensional models.
		\item \textbf{Privacy Preservation via Post-Processing:} We prove that server-side preconditioning preserves $(\epsilon, \delta)$-DP guarantees through the post-processing theorem \citep{dwork2014algorithmic}, as the FIM is computed on already-privatized gradient aggregates.
		\item \textbf{Empirical Validation:} Experiments on CIFAR-10 and PathMNIST
with ResNet-20 frozen features and $n=20$ clients demonstrate that
DP-FedSOFIM consistently outperforms all baselines across all four
privacy budgets ($\varepsilon \in \{0.5, 1, 5, 10\}$). It attains the best
round-10 accuracy in seven of eight dataset/privacy regimes, with a
$\sim$5$\times$ reduction in rounds needed to reach $95\%$ of DP-FedGD's
final accuracy. On CIFAR-10, final-round gains over DP-FedGD reach up to
$+4.55\%$ ($\varepsilon=10$); on PathMNIST, where curvature-aware
preconditioning is most effective on the ill-conditioned medical imaging
task, gains reach up to $+5.16\%$ ($\varepsilon=10$). DP-FedSOFIM achieves
the best final-round accuracy in six of eight regimes and degrades
gracefully under stringent privacy, retaining its convergence advantage
down to $\varepsilon=0.5$.
		
	\end{itemize}
	
	The remainder of this paper is organized as follows. Section~2 reviews related work on differentially private federated learning, second-order optimization in federated settings, and efficient matrix inversion techniques. Section~3 presents the DP-FedSOFIM algorithm and its efficient implementation. Section~4 provides theoretical convergence guarantees and privacy analysis. Section~5 reports experimental results, and Section~6 concludes with discussion and future directions. We provide detailed proofs of all results in the Appendix, along with several other methodological plus theoretical details including computational complexity analysis and runtime analysis, and additional experimental results.
	
	\section{Related Work}
	
	\subsection{Differentially Private Federated Learning}
	
	Differential privacy in federated learning has been extensively studied since the foundational contributions of \citet{dwork2006calibrating} and
	\citet{dwork2014algorithmic}, who established the formal framework and core mechanisms including the Gaussian and Laplace mechanisms. \citet{abadi2016deep} operationalized DP for deep learning via per-example gradient clipping and the moments accountant, providing the first practical technique for tracking privacy loss under iterative training. Building on this, \citet{mcmahan2017learning} proposed DP-FedAvg, extending user-level differential privacy to the federated setting by clipping and noising client updates prior to aggregation.
	
	Subsequent work has focused on tightening the privacy-utility tradeoff through advanced accounting techniques. \citet{mironov2017renyi} introduced R\'{e}nyi differential privacy (RDP), which provides tighter composition bounds than the moments accountant and has become the standard tool for multi-round privacy analysis in DP-FL. \citet{dong2022gaussian} developed Gaussian DP (GDP) as an alternative analytic framework that yields tight guarantees for the Gaussian mechanism specifically, and is the foundation for the Hockey-Stick divergence accounting used in our experiments. Privacy amplification by subsampling \citep{balle2018improving, wang2019subsampled} provides further improvement when clients or data are sampled randomly, reducing the effective privacy cost per round and allowing smaller noise multipliers for a given $(\epsilon, \delta)$ target.
	
	On the optimization side, \citet{andrew2021differentially} demonstrated that adaptive clipping of client updates, where the clipping threshold is tuned to track a target gradient norm quantile, can substantially improve convergence under DP constraints without additional privacy cost. The interaction between client heterogeneity and differential privacy has also received attention: \citet{karimireddy2020scaffold} introduced SCAFFOLD to correct for client drift via control variates, and our experiments show that this correction degrades under high privacy noise as the control variates become corrupted, an empirical finding we analyze in Section~5. Recent work has also studied personalized federated learning under differential privacy \citep{hu2020personalized, noble2022differentially, wei2023personalized}, heterogeneous privacy budgets across clients \citep{liu2022projected}, and the interaction between local differential privacy and central DP \citep{naseri2022local}.
	
	Second-order methods for DP-FL were first systematically explored by \citet{krouka2025communication}, who proposed DP-FedNew. DP-FedNew computes local feature covariance matrices at each client as an approximation to the FIM for generalized linear models, achieving substantial speedups over DP-FedGD. However, its requirement for local $O(d^2)$ memory per client limits scalability to high-dimensional models and resource-constrained devices. Furthermore, computing second-order statistics at the client introduces additional sensitivity that must be accounted for in the privacy analysis. Our work directly addresses these limitations by moving curvature estimation entirely to the server, preserving $O(d)$ client-side complexity and incurring no additional privacy cost by the post-processing theorem.

	\subsection{Adaptive and Second-Order Optimizers in Federated Learning}
	
	Adaptive optimization methods such as Adam \citep{adam_kingma_2014} and AdaGrad \citep{duchi2011adaptive} have become standard in centralized deep learning by maintaining per-parameter running estimates of gradient moments to adaptively rescale step sizes. Extending these methods to federated settings is non-trivial due to the need to aggregate adaptive statistics across heterogeneous clients \citep{reddi2020adaptive}. FedAdam and FedYogi \citep{reddi2020adaptive} maintain server-side adaptive learning rates using pseudo-gradients formed by the aggregated client updates, but do not explicitly leverage second-order curvature information and therefore provide limited benefit in the highly noisy DP regime.

    Beyond adaptive gradient methods, another line of work studies the Alternating Direction Method of Multipliers (ADMM) and related primal-dual optimization frameworks for distributed and federated learning. Several works have explored ADMM-based optimization for distributed and federated learning. FedADMM \citep{gong2022fedadmm} introduces a primal-dual federated optimization framework that uses dual variables to address statistical and system heterogeneity while maintaining communication costs comparable to FedAvg. More recently, \citet{mollenhoff2025federated} established a Bayesian interpretation of federated ADMM and derived Newton-like and Adam-like variants through variational Bayesian duality. ADMM based techniques have also been applied to communication efficient second-order federated optimization; for example, FedNPG-ADMM \citep{lan2023improved} approximates natural policy gradient directions while reducing communication complexity from $O(d^2)$ to $O(d)$. More broadly, ADMM has been studied as an alternative to gradient-based deep learning optimization and shown to possess favorable convergence properties in nonconvex neural network training \citep{zeng2021admm}.

	Natural gradient descent \citep{amarinatgrad1998} uses the Fisher Information Matrix as a preconditioner, providing a principled, reparameterization-invariant approach to incorporating curvature. As established by \citet{martens2020new}, the empirical Fisher Information Matrix computed as the outer product of gradient vectors provides a tractable approximation that retains the key geometric properties of the true FIM and underpins many practical second-order methods. Computing and inverting the exact FIM requires $O(d^2)$ storage and $O(d^3)$ inversion, making it impractical for large-scale models. Approximation methods such as K-FAC \citep{martens2015optimizing} and diagonal Fisher \citep{pascanu2014revisiting} reduce these costs but introduce additional approximation errors and still require $O(d^2)$ factors per Kronecker block.
	
	Several Newton-type methods have been proposed for distributed and federated settings. GIANT \citep{wang2018giant} communicates local Newton directions and performs a global approximate Newton step, achieving $O(d)$ communication per round but requiring $O(d^2)$ local Hessian computation. FedNL \citep{safaryan2022fednl} uses compressed Hessian communication with contractive compressors, providing condition-number-independent local convergence, but requires clients to maintain and transmit Hessian information. SHED \citep{dalfabbro2024shed} incrementally shares eigenvalue-eigenvector pairs to reconstruct local Hessians at the server, reducing communication at the cost of stale curvature estimates. None of these methods address the differentially private setting, where client-side second-order computation introduces additional sensitivity that must be carefully managed.
	
	DP-FedNew \citep{krouka2025communication} computes local feature covariance matrices at each client, which approximates the FIM for generalized linear models. While effective, this approach requires each client to store and communicate $O(d^2)$ parameters. In contrast, DP-FedSOFIM constructs a global FIM proxy on the server using aggregated noisy gradients, achieving $O(d)$ memory complexity while preserving the benefits of second-order preconditioning. Crucially, the server-side construction means that privacy and preconditioning are cleanly separated: any future improvement in privacy accounting or amplification applies equally to DP-FedSOFIM and DP-FedGD, and the accuracy gains from preconditioning are preserved across all privacy regimes. 
	
	\subsection{Efficient Matrix Inversion Techniques}
	
	The Sherman-Morrison formula \citep{sherman1950adjustment} provides an efficient method for updating matrix inverses when a matrix is modified by a rank-one perturbation, reducing an $O(d^3)$ inversion to an $O(d)$ update given the previous inverse. This technique has been applied in online convex optimization \citep{hazan2007logarithmic}, where it enables logarithmic regret algorithms that maintain curvature estimates in $O(d)$ space, and in recursive least squares \citep{haykin2002adaptive} for sequential parameter estimation. The broader family of rank-$k$ updates is handled by the Woodbury matrix identity \citep{woodbury1950inverting}, of which Sherman-Morrison is the rank-one special case.
	
	In the federated learning context, efficient low-rank matrix maintenance is essential for any server-side curvature method that must update its estimate every round. The SOFIM framework \citep{sen2024sofim} demonstrated the viability of rank-one FIM approximations in the non-private federated setting, motivating our extension to the differentially private regime. We leverage the Sherman-Morrison formula to maintain the inverse of the regularized FIM in $O(d)$ per round, making DP-FedSOFIM computationally competitive with first-order baselines while delivering the convergence benefits of natural gradient descent. 
	
	\section{Methodology}
	\label{sec:method}
	
	\subsection{Federated Learning Formulation}
	
	We consider a synchronous federated learning system consisting of $n$ clients indexed by $i\in[n]:=\{1,\dots,n\}$. Client $i$ holds a private dataset
	\(
	\mathcal{D}_i=\{(x_{i,j},y_{i,j})\}_{j=1}^{|\mathcal{D}_i|}.
	\)
	The global federated dataset is therefore
	\(
	\mathcal{D}=\{\mathcal{D}_1,\dots,\mathcal{D}_n\}.
	\) The goal is to minimize the empirical risk
	\begin{equation*}
		F(\theta)
		=
		\frac{1}{n}\sum_{i=1}^{n}F_i(\theta),
		\qquad
		F_i(\theta)
		=
		\frac{1}{|\mathcal{D}_i|}
		\sum_{(x,y)\in\mathcal{D}_i}
		\ell(\theta;\,x,y),
	\end{equation*}
	where $\theta\in\mathbb{R}^d$ denotes the model parameters and $\ell(\theta;x,y)$ is a differentiable loss function.
	
	Optimization proceeds in communication rounds. At round $t$, clients compute local gradient information and transmit privatized updates to a central server. The server aggregates these updates and performs a global parameter update.
	
	For clarity of exposition we assume full participation, i.e.\ all clients participate at every round. The mechanism itself does not rely on this assumption. Under partial participation, a subset $S_t\subset[n]$ of clients may be sampled at each round, in which case aggregation is performed over $S_t$ rather than $[n]$. The remainder of the algorithm remains unchanged.
	
	\begin{remark}[Notation]
		Throughout the paper we use subscript time indices for all iterates and buffers. For example $\theta_t$ denotes the global parameter vector at round $t$, $G_t\in\mathbb{R}^d$ the aggregated gradient, and $M_t\in\mathbb{R}^d$ the server-side momentum buffer. The number of clients is denoted by $n$; the symbol $K$ commonly used in federated learning literature is set equal to $n$ here under full participation. \end{remark}
	
	\subsection{Privacy Model: Record-Level Differential Privacy}
	
	We adopt the record-level differential privacy framework introduced by \citet{abadi2016deep}. In this model the protected unit is an individual training example. 
	\begin{definition}[Neighboring Datasets]
		\label{def:adjacency}
		
		Two client datasets $D_i$ and $D_i'$ are said to be neighboring if they differ in exactly one data record while having the same size. Formally,
		\(
		D_i' = (D_i \setminus \{z\}) \cup \{z'\}
		\)
		for some records $z$ and $z'$.
		
	\end{definition}
	
	This \emph{replace-one} adjacency definition keeps the dataset size fixed, which simplifies sensitivity analysis for the normalized gradient releases used by the algorithm.
	
	\begin{definition}[$(\varepsilon,\delta)$-Differential Privacy]
		\label{def:dp}
		A randomized mechanism $\mathcal{M}$ satisfies $(\varepsilon,\delta)$-DP if for all neighboring datasets $\mathcal{D},\mathcal{D}'$ and all measurable sets $S$,
		\[
		\mathbb{P}[\mathcal{M}(\mathcal{D})\in S]
		\le
		e^\varepsilon
		\mathbb{P}[\mathcal{M}(\mathcal{D}')\in S]
		+
		\delta .
		\]
	\end{definition}
	
	Record-level DP provides protection for individual training examples even within a participating client’s dataset. This notion of privacy is standard in privacy-preserving deep learning and federated optimization \citep{abadi2016deep}.
	
	\subsection{Baseline Mechanism: DP-FedGD}
    \label{subsec:baseline_mechanism}
	
	We first recall the differentially private federated gradient descent algorithm (DP-FedGD), which forms the foundation of our method.
	
	At round $t$, each client computes per-example gradients
	\(
	g(x,y)=\nabla\ell(\theta_t;x,y).
	\)
	
	\paragraph{Per-example gradient clipping.}
	
	To control sensitivity, gradients are clipped to an $\ell_2$ radius $C_g$:
	\begin{equation}
		\bar g(x,y)
		=
		g(x,y)\cdot
		\min\!\left(
		1,
		\frac{C_g}{\|g(x,y)\|_2}
		\right).
		\label{eq:clip}
	\end{equation}
	
	This guarantees
	\(
	\|\bar g(x,y)\|_2\le C_g,
	\)
	which yields a uniform $\ell_2$ sensitivity bound for the summed gradients.
	
	\paragraph{Client-side Gaussian perturbation.}
	
	Let
	\(
	S_{i,t}=\sum_{(x,y)\in\mathcal{D}_i}\bar g(x,y)
	\) denote the sum of clipped gradients on client $i$. Client $i$ releases the noisy normalized update
	\begin{equation}
		g_{i,t}
		=
		\frac{1}{|\mathcal{D}_i|}
		\left(
		S_{i,t}
		+
		E_{i,t}
		\right),
		\qquad
		E_{i,t}\sim
		\mathcal{N}
		\!\left(
		0,
		\frac{(C_g\sigma_g)^2}{n}I_d
		\right).
		\label{eq:client_release}
	\end{equation}
	
	The noise multiplier $\sigma_g$ controls the privacy–utility tradeoff.
	
	\paragraph{Server aggregation.}
	
	The server aggregates client updates as \(G_t=n^{-1}\sum_{i=1}^{n} g_{i,t}.\) The global model is updated via	\(\theta_{t+1} = \theta_t -	\eta_t G_t.\) 
    
    Gradient clipping bounds the $\ell_2$ sensitivity of the client update, while Gaussian noise is added according to the Gaussian mechanism. The resulting differential privacy guarantees depend on the clipping threshold, noise scale, dataset normalization, and the privacy accountant used across rounds. A formal derivation of the $(\varepsilon,\delta)$-privacy guarantees for the full DP-FedSOFIM procedure is provided in Section~\ref{subsec:privacy}. However, DP-FedGD is fundamentally a first-order optimization method. In ill-conditioned problems, anisotropic curvature and privacy-induced noise can significantly slow convergence. 
	The concrete privacy parameters of the Gaussian release are derived in Appendix~\ref{app:privacy}, while Section~\ref{subsec:privacy} shows that the SOFIM preconditioning introduces no additional privacy loss beyond this release mechanism.
	
	\subsection{DP-FedSOFIM: Privacy Preserving Second Order Federated Optimization}
	
	To address this limitation we propose \textbf{DP-FedSOFIM}, which augments DP-FedGD with a curvature-aware server-side preconditioning step inspired by the SOFIM framework of \citet{sen2024sofim}. The original SOFIM method constructs scalable structured Fisher approximations for non-private federated learning. Our contribution extends this idea to the differentially private regime while preserving record-level privacy guarantees. The design philosophy is deliberately conservative with respect to privacy. The client-side mechanism remains identical to DP-FedGD: clipping and Gaussian perturbation are unchanged. All curvature information is constructed solely from already-privatized aggregated gradients. Consequently, privacy guarantees follow directly from the post-processing property of differential privacy.
	
	\subsubsection{Natural Gradient Motivation}
	
	For a parametric model $p(y|x;\theta)$, the Fisher Information Matrix
	(FIM) is defined as
	\[
	\mathcal{I}(\theta)
	=
	\mathbb{E}
	\left[
	\nabla \log p(y|x;\theta)
	\nabla \log p(y|x;\theta)^\top
	\right].
	\]
	
	Natural gradient descent \citep{amarinatgrad1998} updates parameters according to
	\[
	\theta_{t+1}
	=
	\theta_t
	-
	\eta_t
	\mathcal{I}(\theta_t)^{-1}
	\nabla F(\theta_t),
	\]
	which corresponds to steepest descent under the Riemannian metric induced by the Fisher information. 
	Exact computation of the FIM is typically infeasible in federated settings. We therefore adopt a structured rank-one approximation inspired by the SOFIM framework.

\subsubsection{Server-Side Curvature Proxy}

The server maintains an exponential moving average of aggregated gradients
\begin{equation*}
    M_t
    =
    \beta M_{t-1}
    +
    (1-\beta)G_t,
    \qquad
    M_{-1}=\mathbf{0}_d,
    \quad
    \beta\in[0,1),
\end{equation*}
where $G_t$ denotes the aggregated client update at communication round $t$. The momentum buffer stabilizes curvature estimation by smoothing stochastic fluctuations arising from client sampling, data heterogeneity, and privacy-induced noise.

Using the momentum estimate, we construct the curvature proxy
\begin{equation*}
    \widehat{\mathcal I}_t
    =
    M_tM_t^\top
    +
    \rho I_d,
\end{equation*}
where $\rho>0$ is a regularization parameter that ensures positive definiteness and numerical stability.

The proposed approximation is intentionally low-rank and is not intended to recover the full Fisher Information Matrix (FIM), which is generally impractical to compute and store in large scale federated learning systems. Instead, the momentum estimate $M_t$ serves as a stable and noise-reduced aggregation of historical gradient information, enabling the construction of a structured Fisher-inspired curvature surrogate. By accumulating information across communication rounds, the resulting proxy captures useful curvature characteristics while remaining computationally efficient.

A key advantage of the rank-one structure is that it enables efficient inverse preconditioning through the Sherman--Morrison formula, thereby avoiding the $O(d^2)$ memory and computational costs associated with full-matrix Fisher approximations. Consequently, the preconditioner $\widehat{\mathcal I}_t^{-1}$ introduces anisotropic rescaling of gradient updates, yielding curvature aware optimization at essentially first-order computational complexity.
Importantly, the momentum buffer is used solely to construct the curvature proxy and does not directly determine the model update. Instead, the model update is obtained by preconditioning the aggregated gradient with the inverse Fisher-inspired curvature proxy $\widehat{\mathcal I}_t^{-1}$, resulting in a curvature-aware optimization step.

	\subsubsection{Efficient Inversion via Sherman-Morrison}
	
	Since $\widehat{\mathcal I}_t$ is a rank-one perturbation of $\rho I_d$,
	its inverse admits the closed form
	\begin{equation*}
		H_t
		=
		\widehat{\mathcal I}_t^{-1}
		=
		\frac{1}{\rho}I_d
		-
		\frac{M_tM_t^\top}{\rho^2+\rho\|M_t\|_2^2},
	\end{equation*}
	which follows from the Sherman--Morrison identity \citep{sherman1950adjustment}.
	
	Applying this inverse to the gradient yields
	\[
	H_tG_t
	=
	\frac{1}{\rho}G_t
	-
	\frac{M_t(M_t^\top G_t)}{\rho^2+\rho\|M_t\|_2^2}.
	\]
	
	This expression requires only two inner products and vector operations, leading to an $O(d)$ computational cost per round. We discuss this in more detail in Appendix~\ref{app:sherman-morrison}.
	
	\paragraph{Preconditioned server update.}
	
	The global model is updated via
	\begin{equation*}
		\theta_{t+1}
		=
		\theta_t
		-
		\eta_t
		H_tG_t .
	\end{equation*}
	
	\paragraph{Privacy preservation.}
	
	The matrix $H_t$ is a deterministic function of the privatized aggregated gradient $G_t$ and the previous server state $(\theta_t, M_{t-1}).$ Since differential privacy is closed under post-processing, the mapping $G_t\mapsto H_tG_t$ does not incur any additional privacy loss. Therefore the DP-FedSOFIM update inherits the same $(\varepsilon,\delta)$-DP guarantee as the underlying DP-FedGD mechanism.
	
	\paragraph{Interpretation.}
	
	DP-FedSOFIM can be viewed as a privacy-compatible, natural-gradient-motivated method in which curvature information is estimated entirely from privatized signals. The algorithm balances three competing factors: (i) clipping-induced bias, (ii) Gaussian noise required for privacy, and (iii) curvature-aware rescaling intended to mitigate ill-conditioning.

    \paragraph{Warm-started preconditioning.}
Under very tight privacy budgets, the first few privatized aggregate gradients may be dominated by Gaussian noise, and the momentum buffer \(M_t\) may therefore be unstable during the early rounds. A simple stabilization is to introduce the
preconditioner gradually. Define
\[
q_t
=
(1-\lambda_t)\rho^{-1}G_t+\lambda_t H_tG_t,
\qquad 0\leq \lambda_t\leq 1,
\qquad \lambda_t\uparrow 1,
\]
and update
\[
\theta_{t+1}=\theta_t-\eta_t q_t.
\]
For \(\lambda_t=0\), the method reduces to a scaled DP-FedGD step, while for \(\lambda_t=1\), it recovers DP-FedSOFIM. This warm-started variant can reduce early-round instability under stringent privacy budgets by allowing the momentum
buffer to accumulate a more stable privatized gradient history before full anisotropic preconditioning is applied. Since \(q_t\) is still a deterministic function of the already privatized aggregate \(G_t\) and the previous server
state, the warm-started update incurs no additional privacy cost.

\subsection{Algorithmic Summary}
	
	Algorithm~\ref{alg:dpsofim} summarizes the complete DP-FedSOFIM procedure. Client side operations coincide exactly with DP-FedGD; the only modification occurs on the server through the curvature-aware preconditioned update. 

    \begin{algorithm}[t]
    \caption{DP-FedSOFIM}
    \label{alg:dpsofim}
    \begin{algorithmic}[1]
        \Statex \textbf{Input:} $\theta_0$, learning rate $\eta$, clipping threshold $C_g$,
            noise multiplier $\sigma_g$, momentum $\beta$, regularization $\rho$, rounds $T$
        \Statex \textbf{Server initialization:} $M_{-1}=\mathbf{0}_d$
        \For{$t=0,\dots,T-1$}
            \Statex \hspace{1em}\textit{// Client side (identical to DP-FedGD)}
            \For{each client $i\in[n]$}
                \State $\displaystyle S_{i,t} = \sum_{(x,y)\in\mathcal{D}_i}
                    \frac{\nabla\ell(\theta_t;x,y)}
                         {\max\!\bigl(1,\;\|\nabla\ell(\theta_t;x,y)\|_2/C_g\bigr)}$
                \State Sample $E_{i,t}\sim\mathcal{N}\!\left(0,\,(C_g\sigma_g)^2 I_d/n\right)$
                \State $g_{i,t} = \bigl(S_{i,t}+E_{i,t}\bigr)\,/\,|\mathcal{D}_i|$
            \EndFor
            \Statex \hspace{1em}\textit{// Server side (curvature-aware update)}
            \State $G_t = \dfrac{1}{n}\displaystyle\sum_{i=1}^n g_{i,t}$
            \State $M_t = \beta M_{t-1} + (1-\beta)\,G_t$                
            \State $H_tG_t
	=
	\frac{1}{\rho}G_t
	-
	\frac{M_t(M_t^\top G_t)}{\rho^2+\rho\|M_t\|_2^2}$ 
            \State $\theta_{t+1} = \theta_t - \eta_t H_tG_t$
        \EndFor
    \end{algorithmic}
\end{algorithm}
	
	\section{Convergence and Privacy Analysis}
	\label{sec:theory}
	
	This section establishes optimization and privacy guarantees for DP-FedSOFIM. All algorithmic quantities are exactly those defined in Section~\ref{sec:method}; in particular, the server receives the privatized aggregated gradient
	\(
	G_t=\frac{1}{n}\sum_{i=1}^n g_{i,t},
	\)
	updates the momentum buffer, forms the rank-one preconditioner, and performs the parameter update
	\(
	\theta_{t+1}=\theta_t-\eta H_t G_t .
	\)

	The algorithm allows a general stepsize sequence \(\{\eta_t\}_{t\ge0}\). For the convergence analysis below, we specialize to the constant stepsize regime \(\eta_t \equiv \eta\). A technical feature of DP-FedSOFIM is that the preconditioner is formed from the \emph{current} privatized aggregate rather than from a delayed or auxiliary quantity. Consequently, \(H_t\) and \(G_t\) are coupled within the same round. The analysis below handles this same-step dependence directly.
	
    To analyze the stochastic behavior of the update, we decompose the aggregated gradient into a clipped population component and a privacy noise term. 
	
	We define the aggregated clipped gradient
	\begin{equation*}
		g_{\mathrm{clip}}(\theta_t)
		:=
		\frac{1}{n}\sum_{i=1}^n
		\frac{1}{|\mathcal D_i|}
		\sum_{(x,y)\in\mathcal D_i}
		\bar g(x,y),
	\end{equation*}
	where \(\bar g(x,y)\) denotes the clipped per-example gradient defined in \eqref{eq:clip}. The server aggregate can therefore be written as \begin{equation}
		G_t
		=
		g_{\mathrm{clip}}(\theta_t)
		+
		\xi_t
		=
		\nabla F(\theta_t)+\zeta_t+\xi_t,
		\label{eq:gradient_decomposition}
	\end{equation}
	where
	\(
	\zeta_t := g_{\mathrm{clip}}(\theta_t)-\nabla F(\theta_t)
	\)
	is the clipping bias and \(\xi_t\) is the noise introduced by the privacy mechanism. 
	
	From the client update rule \eqref{eq:client_release}, the noise term admits the explicit representation
	\begin{equation}
		\xi_t
		=
		\frac{1}{n}\sum_{i=1}^{n}\frac{E_{i,t}}{|\mathcal D_i|},
		\label{eq:noise_def}
	\end{equation}
	where the injected noises are independent Gaussian vectors 
	\[
	E_{i,t}
	\sim
	\mathcal N\!\left(
	0,
	\frac{(C_g\sigma_g)^2}{n} I_d
	\right).
	\]
	
	Let $\mathcal{F}_t$ denote the \emph{pre-round filtration}, i.e., the sigma-field generated by all algorithmic randomness up to the start of round $t$. In particular, $\theta_t$, $M_{t-1}$, $\nabla F(\theta_t)$, $g_{\mathrm{clip}}(\theta_t)$, and $\zeta_t$ are $\mathcal F_t$-measurable, whereas $\xi_t$ is the fresh noise generated during round $t$. This gives, $\mathbb E[\xi_t\mid\mathcal F_t]=0.$
	
	Using independence across clients, the covariance of \(\xi_t\) is \(\mathbb E[\xi_t\xi_t^\top\mid\mathcal F_t]
		=
		\nu_t^2 I_d,\) where
	\begin{equation}
		\nu_t^2
		=
		\frac{(C_g\sigma_g)^2}{n^3}
		\sum_{i=1}^{n}\frac{1}{|\mathcal D_i|^2}.
		\label{eq:noise_variance}
	\end{equation}
	
\begin{lemma}[Conditional mean under preconditioning]
\label{lem:precond_mean}
Let \(G_t=g_{\mathrm{clip}}(\theta_t)+\xi_t\) and
\(H_t=(\rho I_d+M_tM_t^\top)^{-1}\). Then, conditional on the pre-round
filtration \(\mathcal F_t\),
\[
\mathbb E[G_t\mid\mathcal F_t]=g_{\mathrm{clip}}(\theta_t).
\]
Equivalently,
\[
\mathbb E[G_t-g_{\mathrm{clip}}(\theta_t)\mid\mathcal F_t]=0.
\]
\end{lemma}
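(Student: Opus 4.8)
The plan is to read the conclusion directly off the additive decomposition \eqref{eq:gradient_decomposition} together with the measurability and zero-mean facts already recorded just above the lemma statement. I would proceed in three short steps. First, I would write $G_t = g_{\mathrm{clip}}(\theta_t) + \xi_t$ from \eqref{eq:gradient_decomposition} and apply linearity of conditional expectation to get $\mathbb{E}[G_t \mid \mathcal{F}_t] = \mathbb{E}[g_{\mathrm{clip}}(\theta_t) \mid \mathcal{F}_t] + \mathbb{E}[\xi_t \mid \mathcal{F}_t]$.

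Second, I would note that $g_{\mathrm{clip}}(\theta_t)$ as defined in \eqref{eq:gclip_def} is a deterministic function of $\theta_t$ and the fixed local datasets (the clipped per-example gradients $\bar g(x,y)$ in \eqref{eq:clip} depend only on $\nabla\ell(\theta_t;x,y)$), and is therefore $\mathcal{F}_t$-measurable — exactly as stated in the paragraph introducing $\mathcal{F}_t$. Hence $\mathbb{E}[g_{\mathrm{clip}}(\theta_t) \mid \mathcal{F}_t] = g_{\mathrm{clip}}(\theta_t)$. Third, I would use that $\xi_t = \frac{1}{n}\sum_{i=1}^n E_{i,t}/|\mathcal{D}_i|$ from \eqref{eq:noise_def} is a finite linear combination of the fresh Gaussian vectors $E_{i,t} \sim \mathcal{N}\!\left(0, (C_g\sigma_g)^2 n^{-1} I_d\right)$ drawn during round $t$ and independent of $\mathcal{F}_t$; by linearity each summand has zero conditional mean, so $\mathbb{E}[\xi_t \mid \mathcal{F}_t] = 0$, which is precisely the identity asserted immediately before the lemma. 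Summing the two contributions gives $\mathbb{E}[G_t \mid \mathcal{F}_t] = g_{\mathrm{clip}}(\theta_t)$, and the ``equivalently'' form follows by subtracting the $\mathcal{F}_t$-measurable quantity $g_{\mathrm{clip}}(\theta_t)$ from both sides.

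There is essentially no obstacle here; the statement is a bookkeeping step that isolates the unbiasedness of the privacy noise. The only point worth flagging is that the preconditioner $H_t = (\rho I_d + M_t M_t^\top)^{-1}$ appears in the hypotheses but not in the conclusion — it is listed only to fix notation for the subsequent second-moment analysis. In particular, the fact that $H_t$ depends on $M_t$, and hence on $\xi_t$ through $M_t = \beta M_{t-1} + (1-\beta)G_t$, is irrelevant for this lemma, since the claim concerns the conditional mean of $G_t$ alone and not of the product $H_t G_t$. That same-round coupling between $H_t$ and $\xi_t$ is what prevents $\mathbb{E}[H_t G_t \mid \mathcal{F}_t]$ from factoring as $H_t\, \mathbb{E}[G_t \mid \mathcal{F}_t]$, and it will need to be controlled separately in the later bias/variance estimates for the preconditioned update.
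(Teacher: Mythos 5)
Your proof is correct and follows essentially the same route as the paper: decompose $G_t = g_{\mathrm{clip}}(\theta_t) + \xi_t$, use the $\mathcal F_t$-measurability of $g_{\mathrm{clip}}(\theta_t)$ and the conditional zero-mean property of $\xi_t$, and conclude by linearity. Your closing remark that $H_t$ appears only to fix notation and that the same-round coupling matters only for the later analysis of $H_tG_t$ is also accurate.
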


\begin{proof}
By \eqref{eq:gradient_decomposition},
\(G_t=g_{\mathrm{clip}}(\theta_t)+\xi_t\). Since
\(g_{\mathrm{clip}}(\theta_t)\) is \(\mathcal F_t\)-measurable and
\(\mathbb E[\xi_t\mid\mathcal F_t]=0\),
\[
\mathbb E[G_t\mid\mathcal F_t]
=
g_{\mathrm{clip}}(\theta_t)+\mathbb E[\xi_t\mid\mathcal F_t]
=
g_{\mathrm{clip}}(\theta_t).
\]
\end{proof}
		
	Thus the aggregated gradient estimator \(G_t\) is conditionally unbiased for the clipped population gradient and has isotropic Gaussian noise with variance parameter \(\nu_t^2\).

	\subsection{Assumptions}
	\label{subsec:assumptions}
	
	\begin{assumption}[$L$-smoothness]
		\label{ass:smooth}
		Each local objective \(F_i\) is \(L\)-smooth.
		Consequently, the global objective \(F\) is \(L\)-smooth:
		\begin{equation*}
			\|\nabla F(\theta)-\nabla F(\theta')\|_2
			\le
			L\|\theta-\theta'\|_2,
			\qquad
			\forall\,\theta,\theta'\in\mathbb R^d.
		\end{equation*}
		Equivalently, for all \(\theta,\Delta\in\mathbb R^d\),
		\begin{equation}
			F(\theta+\Delta)
			\le
			F(\theta)
			+
			\langle \nabla F(\theta),\Delta\rangle
			+
			\frac{L}{2}\|\Delta\|_2^2.
			\label{eq:smoothness_descent}
		\end{equation}
	\end{assumption}
	
	\begin{assumption}[$\mu$-strong convexity]
		\label{ass:strong}
		The global objective \(F\) is \(\mu\)-strongly convex for some \(\mu>0\), i.e.,
		\begin{equation*}
			F(\theta')
			\ge
			F(\theta)
			+
			\langle \nabla F(\theta),\theta'-\theta\rangle
			+
			\frac{\mu}{2}\|\theta'-\theta\|_2^2,
			\qquad
			\forall\,\theta,\theta'\in\mathbb R^d.
		\end{equation*}
		In particular,
		\begin{equation}
			\|\nabla F(\theta)\|_2^2
			\ge
			2\mu\bigl(F(\theta)-F(\theta^\star)\bigr),
			\qquad
			\forall\,\theta\in\mathbb R^d,
			\label{eq:sc_grad_gap}
		\end{equation}
		where \(\theta^\star\) denotes the unique minimizer of \(F\).
	\end{assumption}
	
	\begin{remark}[Strong Convexity in Frozen-Feature Models]
		\label{remark:strongconvex}
		
		When the feature extractor is frozen and only a linear prediction head is trained, strong convexity arises naturally in several common settings. For squared-loss linear regression, strong convexity holds whenever the feature covariance matrix is positive definite. For multiclass softmax models, strong convexity can be ensured by adding an explicit $\ell_2$ regularization term or by imposing identifiability constraints on the parameterization. These conditions are standard in transfer learning scenarios where only the final prediction layer is optimized.    \end{remark}  
	
	\begin{assumption}[Bounded clipping bias]
		\label{ass:bias}
		There exists \(\zeta_{\max}\ge 0\) such that \(\|\zeta_t\|_2\le \zeta_{\max},
			\;
			\forall\, t\ge 0.\)
	\end{assumption}
	
	\begin{assumption}[Bounded gradient noise]
		\label{ass:noise}
		Let $\xi_t$ denote the stochastic noise defined in \eqref{eq:noise_def}.  There exists a constant $\nu^2 < \infty$ such that  
		\(
		\mathbb E[\xi_t \mid \mathcal F_t] = 0,
		\;
		\mathbb E[\|\xi_t\|_2^2 \mid \mathcal F_t]
		\le
		d\nu^2
		\)
		for all iterations $t$.
	\end{assumption}
	
	\begin{remark}[Noise magnitude in DP-FedSOFIM]
		\label{rem:noise}
		For the mechanism defined in Section~\ref{subsec:baseline_mechanism}, the variance parameter in Assumption~\ref{ass:noise} is given explicitly by
		\(
		\nu_t^2
		=
		(C_g\sigma_g)^2
		\sum_{i=1}^{n}
		|\mathcal D_i|^{-2}/n^3.
		\) In the common case where all clients have equal dataset size $|\mathcal D_i| = m$, this simplifies to
		\(
		\nu_t^2
		=
		(C_g\sigma_g)^2/(n^2 m^2).
		\) Thus the variance of the aggregated noise decreases as $O(n^{-2})$ when client dataset sizes are equal, while the standard deviation decreases as $O(n^{-1})$.
	\end{remark}
	
	\begin{assumption}[Bounded gradient norm along the iterate sequence]
		\label{ass:grad_bounded}
		There exists \(G_{\max}>0\) such that
		\(
			\|\nabla F(\theta_t)\|_2\le G_{\max},
			\;
			\forall\, t\ge 0.
		\)
	\end{assumption}
	
	Assumption~\ref{ass:grad_bounded} is standard in analyses of adaptive  or state-dependent preconditioners. In the present setting it is used only to control the same-step coupling between the current preconditioner \(H_t\) and the current privatized aggregate \(G_t\). In the frozen-feature regime, it holds whenever features are bounded and the trainable head is confined to a bounded parameter region, or when explicit regularization prevents parameter growth.

	\begin{remark}[Role of bounded bias and bounded gradients]
		\label{rem:bias-gradient-role}
		The bounded-bias and bounded-gradient assumptions play different roles in the analysis. The bounded-bias assumption controls the discrepancy between the	privatized aggregate update and the target population gradient. In the one-step	DP-FedGD setting this discrepancy is the clipping bias
		\(
		\zeta_t=g_{\rm clip}(\theta_t)-\nabla F(\theta_t),
		\)
		whereas in mini-batch or multi-local-step variants it may also contain stochastic approximation error and local-update drift. This assumption allows the descent bound to separate the optimization error from the perturbation introduced by clipping, privacy noise, and local computation.
				
		The bounded-gradient assumption is used for a different reason. In DP-FedSOFIM, the preconditioner
		\(
		H_t=(\rho I_d+M_tM_t^\top)^{-1}
		\)
		is formed using the same-round privatized aggregate $G_t$ through the momentum recursion
		\(
		M_t=\beta M_{t-1}+(1-\beta)G_t.
		\)
		Thus $H_t$ and $G_t$ are statistically coupled within the same communication round. The bounded-gradient condition is used to control this same-step dependence, specifically the term arising from
		\(
		\nabla F(\theta_t)^\top H_t\nabla F(\theta_t),
		\)
		where the lower bound contains a correction proportional to
		\(
		G_{\max}^2\|M_t\|_2^2/\rho^2.
		\)
		The assumption is therefore not introduced to make the objective globally well-behaved, but to control the interaction between the current privatized gradient, the server-side momentum buffer, and the adaptive preconditioner. In	the frozen-feature linear-head setting, such a condition is natural when features are bounded and the trainable parameter is regularized or kept in a bounded	region.
	\end{remark}

	The following lemma shows that the DP noise mechanism satisfies the bounded-noise assumption used in the optimization analysis. We provide the proof of the same in the Appendix.
	
	\begin{lemma}[Uniform bound on DP noise variance]
		\label{lem:noise_bound}
		
		Let $\xi_t$ denote the noise defined in \eqref{eq:noise_def}.
		Assume that client dataset sizes satisfy
		\(
		|\mathcal D_i| \ge m_{\min} > 0
		\,\text{for all } i .
		\) Then the variance parameter in \eqref{eq:noise_variance} satisfies
		\(
		\nu_t^2
		\le
		(C_g\sigma_g)^2/(n^2 m_{\min}^2)
		=: \nu^2 .
		\) Consequently,
		\(
		\mathbb E[\|\xi_t\|_2^2 \mid \mathcal F_t]
		\le
		d\nu^2 .
		\)
		
	\end{lemma}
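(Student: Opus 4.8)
The plan is to proceed in three elementary steps: compute the conditional covariance of $\xi_t$ exactly, bound it using the dataset-size lower bound, and then convert the covariance bound into the second-moment bound via a trace identity.

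First I would use the explicit representation \eqref{eq:noise_def}, namely $\xi_t = \frac{1}{n}\sum_{i=1}^n |\mathcal D_i|^{-1} E_{i,t}$, together with the fact that the $E_{i,t}$ are fresh Gaussian vectors generated in round $t$ and hence independent of the pre-round filtration $\mathcal F_t$ (the $|\mathcal D_i|$ are fixed, deterministic quantities). Consequently $\mathbb E[\xi_t\mid\mathcal F_t]=0$, and by independence across clients the conditional covariance is
\[
\mathbb E[\xi_t\xi_t^\top\mid\mathcal F_t]
=\frac{1}{n^2}\sum_{i=1}^n\frac{1}{|\mathcal D_i|^2}\,\mathbb E[E_{i,t}E_{i,t}^\top]
=\frac{1}{n^2}\sum_{i=1}^n\frac{1}{|\mathcal D_i|^2}\cdot\frac{(C_g\sigma_g)^2}{n}I_d
=\nu_t^2 I_d,
\]
which recovers \eqref{eq:noise_cov} and the formula \eqref{eq:noise_variance} for $\nu_t^2$.

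Next I would apply the hypothesis $|\mathcal D_i|\ge m_{\min}$ for every $i$, so that $|\mathcal D_i|^{-2}\le m_{\min}^{-2}$ and therefore $\sum_{i=1}^n|\mathcal D_i|^{-2}\le n\,m_{\min}^{-2}$. Substituting this into \eqref{eq:noise_variance} gives
\[
\nu_t^2
=\frac{(C_g\sigma_g)^2}{n^3}\sum_{i=1}^n\frac{1}{|\mathcal D_i|^2}
\le\frac{(C_g\sigma_g)^2}{n^3}\cdot\frac{n}{m_{\min}^2}
=\frac{(C_g\sigma_g)^2}{n^2 m_{\min}^2}
=:\nu^2,
\]
uniformly in $t$. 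Finally, taking the trace of the covariance identity from the first step yields $\mathbb E[\|\xi_t\|_2^2\mid\mathcal F_t]=\operatorname{tr}(\nu_t^2 I_d)=d\,\nu_t^2\le d\,\nu^2$, which is the claimed bound and also verifies Assumption~\ref{ass:noise} for this mechanism.

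There is essentially no hard part here: the argument is a direct computation of a Gaussian covariance plus a crude termwise bound. The only point requiring care is the conditioning on $\mathcal F_t$ — one must note that $\xi_t$ is the round-$t$ privacy noise, drawn independently of everything in $\mathcal F_t$, so that all conditional moments coincide with the unconditional ones; once that observation is in place the rest is routine.
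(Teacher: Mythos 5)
Your proposal is correct and follows essentially the same route as the paper's proof: termwise bounding $\sum_i |\mathcal D_i|^{-2}\le n\,m_{\min}^{-2}$ in the expression \eqref{eq:noise_variance} and converting to the second-moment bound via the trace of the isotropic covariance. Your additional re-derivation of \eqref{eq:noise_cov} from \eqref{eq:noise_def} is harmless extra detail the paper simply takes as already established.
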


	\subsection{Norm Control for the Clipped Aggregate and the Momentum Buffer}
	\label{subsec:momentum_control}
	
	We begin with two elementary but fundamental estimates. The first is a deterministic norm bound coming directly from clipping. The second yields a uniform second-moment bound for the server momentum buffer. We provide proofs of both the lemmas in Appendix~\ref{app:proofs}.
	
	\begin{lemma}[Norm bound for the aggregated clipped gradient]
		\label{lem:gclip_norm}
		For every \(t\ge 0\),
		\begin{equation}
			\|g_{\mathrm{clip}}(\theta_t)\|_2\le C_g.
			\label{eq:gclip_norm}
		\end{equation}
		Consequently,
		\begin{equation*}
			\mathbb E\!\left[\|G_t\|_2^2\mid \mathcal F_t\right]
			=
			\|g_{\mathrm{clip}}(\theta_t)\|_2^2+d\nu_t^2
			\le
			C_g^2+d\nu^2.
		\end{equation*}
	\end{lemma}

	\begin{lemma}[Conditional second-moment recursion for the momentum buffer]
		\label{lem:momentum_recursion}
		For every \(t\ge 0\),
		\begin{equation}
			\mathbb E\!\left[\|M_t\|_2^2\mid \mathcal F_t\right]
			\le
			\beta \|M_{t-1}\|_2^2
			+
			(1-\beta) C_g^2
			+
			(1-\beta)^2 d\nu^2.
			\label{eq:conditional_moment_recursion}
		\end{equation}
		Define \(u_t:=\mathbb E\|M_t\|_2^2.\) Then
		\begin{equation}
			u_t
			\le
			\beta u_{t-1}
			+
			(1-\beta) C_g^2
			+
			(1-\beta)^2 d\nu^2,
			\qquad t\ge 0,
			\label{eq:u_t_recursion}
		\end{equation}
		and, since \(M_{-1}=0_d\),
		\begin{equation}
			u_t
			\le
			\bar M^2
			:=
			C_g^2+(1-\beta)d\nu^2,
			\qquad t\ge 0.
			\label{eq:uniform_M_bound}
		\end{equation}
	\end{lemma}

	\begin{remark}[Noise smoothing by the momentum buffer]
		\label{rem:momentum_smoothing}
		Lemma~\ref{lem:momentum_recursion} shows that the server-side buffer has  uniformly bounded second moment despite the Gaussian privacy noise having unbounded support. This formalizes the stabilizing effect of the exponential moving average: the curvature proxy is built from a smoothed sequence of privatized aggregates rather than from a single noisy realization.
	\end{remark}
	
	\subsection{Properties of the Same-Step Preconditioner}
	\label{subsec:spectral}
	
	We next establish deterministic structural bounds for the same-step  preconditioner. These bounds hold pathwise for each realized \(M_t\), and therefore remain valid even though \(M_t\) depends on the current privacy noise. Proof of this lemma is in Appendix~\ref{app:proofs}.
	
\begin{lemma}[Sherman--Morrison representation and operator bounds]
\label{lem:SM_bounds}
For every \(t\ge 0\),
\begin{equation}
H_t
=
\frac{1}{\rho}I_d
-
\frac{M_tM_t^\top}{\rho(\rho+\|M_t\|_2^2)}.
\label{eq:SM_formula}
\end{equation}
Consequently, for every \(v\in\mathbb R^d\),
\begin{align}
\|H_t v\|_2
&\le
\frac{1}{\rho}\|v\|_2,
\label{eq:Ht_op_bd}
\\
\|H_t v\|_2^2
&\le
\frac{1}{\rho^2}\|v\|_2^2,
\label{eq:Ht_square_bd}
\\
\frac{1}{\rho+\|M_t\|_2^2}\|v\|_2^2
&\le
v^\top H_t v
\le
\frac1\rho\|v\|_2^2.
\label{eq:Ht_quad_bounds}
\end{align}
In particular,
\begin{equation}
\nabla F(\theta_t)^\top H_t\nabla F(\theta_t)
\ge
\frac{1}{\rho+\|M_t\|_2^2}
\|\nabla F(\theta_t)\|_2^2.
\label{eq:Ht_grad_lower_nonvacuous}
\end{equation}
\end{lemma}

\begin{corollary}[Non-vacuous lower bound on a bounded-momentum event]
\label{cor:nonvacuous_momentum_event}
Fix \(T\ge 1\) and \(\delta_M\in(0,1)\). Let
\[
\bar M^2=C_g^2+(1-\beta)d\nu^2
\]
be the uniform second-moment bound from Lemma~\ref{lem:momentum_recursion}, and
define \(M_\delta^2:=T\bar M^2/\delta_M\). Then
\[
\mathbb P\left(
\max_{0\le t\le T-1}\|M_t\|_2^2
\le
M_\delta^2
\right)
\ge
1-\delta_M.
\]
On this event, for every \(t=0,\ldots,T-1\),
\[
\nabla F(\theta_t)^\top H_t\nabla F(\theta_t)
\ge
\frac{1}{\rho+M_\delta^2}
\|\nabla F(\theta_t)\|_2^2.
\]
\end{corollary}

	\subsection{One-Step Descent for the Exact Same-Step Update}
	\label{subsec:descent}
	
	We now establish the central one-step descent inequality. Because the preconditioner \(H_t\) depends on the current privatized aggregate \(G_t\), the proof must control the coupling among  \(\nabla F(\theta_t)\), \(H_t\), and \(G_t\) within the same round. Before stating the one-step descent bound, we introduce two auxiliary Young parameters \(\tau_1,\tau_2>0\). These constants are not algorithmic parameters. They are used only in the proof to control the cross-terms involving
the clipping bias and the privacy noise. Specifically, Young's inequality is used to bound terms of the form
\[
|\nabla F(\theta_t)^\top H_t\zeta_t|
\quad\text{and}\quad
|\nabla F(\theta_t)^\top H_t\xi_t|.
\]
The parameter \(\tau_1\) controls the bias cross-term, while \(\tau_2\) controls the noise cross-term. Smaller choices of \(\tau_1,\tau_2\) increase the descent coefficient but enlarge the additive bias and noise terms in the error floor.
	 
	\begin{lemma}[Conditional one-step descent]
		\label{lem:one_step}
		Fix any \(\tau_1,\tau_2>0\), and define
		\begin{equation}
			c_\nabla
:=
\frac{1}{\rho}
-
\frac{\tau_1+\tau_2}{2}.
			\label{eq:cgrad_def}
		\end{equation}
		Under Assumptions~\ref{ass:smooth}, \ref{ass:bias}, \ref{ass:noise}, and \ref{ass:grad_bounded}, for every \(t\ge 0\),
		\begin{align}
			\mathbb E\!\left[F(\theta_{t+1})\mid \mathcal F_t\right]
			&\le
			F(\theta_t)
			-
			\eta c_\nabla \|\nabla F(\theta_t)\|_2^2
			+
			\frac{\eta G_{\max}^2}{\rho^2}
			\mathbb E\!\left[\|M_t\|_2^2\mid \mathcal F_t\right]
			\label{eq:descent_conditional_pre}
			\\
			&\quad
			+
			\frac{\eta\,\zeta_{\max}^2}{2\tau_1\rho^2}
			+
			\frac{\eta\,d\,\nu^2}{2\tau_2\rho^2}
			+
			\frac{L\eta^2}{2\rho^2}\bigl(C_g^2+d\nu^2\bigr).
			\nonumber
		\end{align}
	\end{lemma}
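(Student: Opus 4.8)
The plan is to start from the $L$-smoothness descent inequality \eqref{eq:smoothness_descent} applied with increment $\Delta=\theta_{t+1}-\theta_t=-\eta H_tG_t$, which gives
\[
F(\theta_{t+1})\le F(\theta_t)-\eta\,\langle\nabla F(\theta_t),H_tG_t\rangle+\frac{L\eta^2}{2}\|H_tG_t\|_2^2,
\]
and then to take the conditional expectation $\mathbb E[\,\cdot\mid\mathcal F_t]$. The second-order term is handled immediately: by the pathwise operator bound \eqref{eq:Ht_square_bd}, $\|H_tG_t\|_2^2\le\rho^{-2}\|G_t\|_2^2$, and then the conditional second moment \eqref{eq:Gt_second_moment} yields the contribution $\tfrac{L\eta^2}{2\rho^2}(C_g^2+d\nu^2)$. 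All the real work is in the linear term.

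For the linear term I substitute the decomposition \eqref{eq:gradient_decomposition}, $G_t=\nabla F(\theta_t)+\zeta_t+\xi_t$, and split
\[
\langle\nabla F(\theta_t),H_tG_t\rangle=\nabla F(\theta_t)^\top H_t\nabla F(\theta_t)+\langle\nabla F(\theta_t),H_t\zeta_t\rangle+\langle\nabla F(\theta_t),H_t\xi_t\rangle.
\]
The curvature term is lower bounded by \eqref{eq:Ht_grad_lower} (this is the only place Assumption~\ref{ass:grad_bounded} enters); multiplied by $-\eta$ it produces the principal descent term $-\tfrac{\eta}{\rho}\|\nabla F(\theta_t)\|_2^2$ together with the error term $+\tfrac{\eta G_{\max}^2}{\rho^2}\|M_t\|_2^2$, whose conditional expectation is exactly what appears in \eqref{eq:descent_conditional_pre}. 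For the bias cross term I use Young's inequality with parameter $\tau_1$, namely $|\langle\nabla F(\theta_t),H_t\zeta_t\rangle|\le\tfrac{\tau_1}{2}\|\nabla F(\theta_t)\|_2^2+\tfrac{1}{2\tau_1}\|H_t\zeta_t\|_2^2$, and then bound $\|H_t\zeta_t\|_2^2\le\rho^{-2}\|\zeta_t\|_2^2\le\rho^{-2}\zeta_{\max}^2$ by \eqref{eq:Ht_square_bd} and Assumption~\ref{ass:bias}, giving $\tfrac{\eta\tau_1}{2}\|\nabla F(\theta_t)\|_2^2+\tfrac{\eta\zeta_{\max}^2}{2\tau_1\rho^2}$.

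The crucial step — and the one I expect to be the main obstacle — is the noise cross term $\langle\nabla F(\theta_t),H_t\xi_t\rangle$, where the usual ``unbiasedness kills the cross term'' argument fails because $H_t$ is itself a function of the fresh noise $\xi_t$ through $M_t$, so $\mathbb E[H_t\xi_t\mid\mathcal F_t]\ne H_t\,\mathbb E[\xi_t\mid\mathcal F_t]$. The fix is a purely pathwise treatment: Young's inequality with parameter $\tau_2$ gives $|\langle\nabla F(\theta_t),H_t\xi_t\rangle|\le\tfrac{\tau_2}{2}\|\nabla F(\theta_t)\|_2^2+\tfrac{1}{2\tau_2}\|H_t\xi_t\|_2^2$, the operator bound \eqref{eq:Ht_square_bd} absorbs all dependence on $H_t$ via $\|H_t\xi_t\|_2^2\le\rho^{-2}\|\xi_t\|_2^2$, and only afterward do I take $\mathbb E[\,\cdot\mid\mathcal F_t]$ and invoke $\mathbb E[\|\xi_t\|_2^2\mid\mathcal F_t]\le d\nu^2$ from Assumption~\ref{ass:noise}, yielding $\tfrac{\eta\tau_2}{2}\|\nabla F(\theta_t)\|_2^2+\tfrac{\eta d\nu^2}{2\tau_2\rho^2}$. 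Collecting every piece, the coefficient of $\|\nabla F(\theta_t)\|_2^2$ is $-\tfrac{\eta}{\rho}+\tfrac{\eta\tau_1}{2}+\tfrac{\eta\tau_2}{2}=-\eta c_\nabla$ by \eqref{eq:cgrad_def}, and the remaining terms line up term-for-term with \eqref{eq:descent_conditional_pre}. The trade-off is transparent: one pays a $\tau_2\|\nabla F(\theta_t)\|_2^2$ ``insurance'' penalty for avoiding any martingale cancellation, which is why $c_\nabla$ is forced strictly below $1/\rho$.
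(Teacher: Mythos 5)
Your proposal is correct and follows essentially the same route as the paper's proof: smoothness with $\Delta_t=-\eta H_tG_t$, the three-way split of the linear term via \eqref{eq:gradient_decomposition}, the curvature lower bound \eqref{eq:Ht_grad_lower}, pathwise Young's inequalities with $\tau_1,\tau_2$ for the bias and noise cross-terms (so no conditional independence between $H_t$ and $\xi_t$ is ever needed), and the operator bound \eqref{eq:Ht_square_bd} plus \eqref{eq:Gt_second_moment} for the quadratic term. The only cosmetic difference is that you apply Young's inequality directly to $\|H_t\zeta_t\|_2^2$ and $\|H_t\xi_t\|_2^2$ rather than first passing through Cauchy--Schwarz as the paper does, which yields identical constants.
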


	The previous lemma still contains the conditional term \(\mathbb E[\|M_t\|_2^2\mid \mathcal F_t]\). The next lemma turns it into a deterministic constant.
	
	\begin{lemma}[Uniform unconditional one-step descent]
		\label{lem:one_step_unconditional}
		Fix \(\tau_1,\tau_2>0\) and define \(c_\nabla\) as in \eqref{eq:cgrad_def}. Under Assumptions~\ref{ass:smooth}, \ref{ass:bias},  \ref{ass:noise}, and \ref{ass:grad_bounded}, for every \(t\ge 0\),
		\begin{equation}
			\mathbb E[F(\theta_{t+1})]
			\le
			\mathbb E[F(\theta_t)]
			-
			\eta c_\nabla \,\mathbb E\|\nabla F(\theta_t)\|_2^2
			+
			\Gamma,
			\label{eq:one_step_uncond}
		\end{equation}
		where
		\begin{equation}
			\Gamma
			:=
			\frac{\eta G_{\max}^2}{\rho^2}\bar M^2
			+
			\frac{\eta\,\zeta_{\max}^2}{2\tau_1\rho^2}
			+
			\frac{\eta\,d\,\nu^2}{2\tau_2\rho^2}
			+
			\frac{L\eta^2}{2\rho^2}\bigl(C_g^2+d\nu^2\bigr),
			\label{eq:Gamma_def}
		\end{equation}
		and \(\bar M^2\) is given by \eqref{eq:uniform_M_bound}.
	\end{lemma}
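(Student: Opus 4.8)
The plan is to derive \eqref{eq:one_step_uncond} from the conditional descent bound \eqref{eq:descent_conditional_pre} of Lemma~\ref{lem:one_step} by a single application of the tower property, using the uniform momentum bound \eqref{eq:uniform_M_bound} of Lemma~\ref{lem:momentum_recursion} to eliminate the only remaining random coefficient.

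First I would take the total expectation of both sides of \eqref{eq:descent_conditional_pre}. On the left, the tower property gives $\mathbb E[\mathbb E[F(\theta_{t+1})\mid\mathcal F_t]]=\mathbb E[F(\theta_{t+1})]$. On the right, $F(\theta_t)$ and $\nabla F(\theta_t)$ are $\mathcal F_t$-measurable (as recorded in the filtration discussion preceding Lemma~\ref{lem:precond_mean}), so $\mathbb E[F(\theta_t)]$ and $\mathbb E\|\nabla F(\theta_t)\|_2^2$ appear directly, and the three additive terms $\tfrac{\eta\zeta_{\max}^2}{2\tau_1\rho^2}$, $\tfrac{\eta d\nu^2}{2\tau_2\rho^2}$, $\tfrac{L\eta^2}{2\rho^2}(C_g^2+d\nu^2)$ are deterministic constants and pass through unchanged.

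The only term requiring attention is $\tfrac{\eta G_{\max}^2}{\rho^2}\,\mathbb E\!\left[\mathbb E[\|M_t\|_2^2\mid\mathcal F_t]\right]$. By the tower property this equals $\tfrac{\eta G_{\max}^2}{\rho^2}\,\mathbb E\|M_t\|_2^2=\tfrac{\eta G_{\max}^2}{\rho^2}\,u_t$ with $u_t$ as in \eqref{eq:u_t_def}, and Lemma~\ref{lem:momentum_recursion} gives $u_t\le\bar M^2=C_g^2+(1-\beta)d\nu^2$ for every $t\ge0$ (the base case $t=0$ using $M_{-1}=\mathbf{0}_d$). Substituting this bound and collecting the four constant contributions into $\Gamma$ as defined in \eqref{eq:Gamma_def} yields \eqref{eq:one_step_uncond}.

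Since every step is either the tower property or a direct citation of an already-proved lemma, there is no genuine obstacle here; the only point to be careful about is that the passage from $\mathbb E[\|M_t\|_2^2\mid\mathcal F_t]$ to a constant must be carried out \emph{after} taking the outer expectation, because the conditional second moment cannot be bounded pathwise (the buffer $M_t$ depends on the fresh round-$t$ privacy noise $\xi_t$). This is precisely why Lemma~\ref{lem:momentum_recursion} is stated in the unconditional form $u_t\le\bar M^2$, and invoking it in that form is the crux of the argument.
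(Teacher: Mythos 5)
Your proposal is correct and matches the paper's own argument: take total expectation of the conditional descent bound from Lemma~\ref{lem:one_step}, use the tower property to reduce the momentum term to $\mathbb E\|M_t\|_2^2$, and bound it by $\bar M^2$ via Lemma~\ref{lem:momentum_recursion}, collecting the remaining constants into $\Gamma$. Your closing remark about applying the momentum bound only after the outer expectation (since the Gaussian noise precludes a pathwise constant bound) is a correct and consistent reading of why the lemma is invoked in its unconditional form.
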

	
	Proofs of both the above lemmas are in Appendix~\ref{app:proofs}.

\begin{remark}[High-probability descent coefficient]
The Young parameters \(\tau_1,\tau_2\) in Lemma~\ref{lem:one_step} control the
bias and noise cross-terms. Combining the same Young bounds with
Corollary~\ref{cor:nonvacuous_momentum_event} yields, on the bounded-momentum
event, the pathwise coefficient
\[
c_{\nabla,\delta}
=
\frac{1}{\rho+M_\delta^2}
-
\frac{\tau_1+\tau_2}{2}.
\]
This coefficient is non-vacuous whenever
\(\tau_1+\tau_2<2/(\rho+M_\delta^2)\). We do not use \(c_{\nabla,\delta}\) in the
main expectation recursion; the main theorems use the conservative coefficient
\(c_\nabla=\rho^{-1}-(\tau_1+\tau_2)/2\) and account for same-step coupling
through the additive term in \(\Gamma\).
\end{remark}

	\subsection{Extension to General Privatized Aggregate Updates}
	\label{subsec:general-private-update}
	
	The preceding analysis was stated for the one-step DP-FedGD-type update, where
	the server receives a clipped and privatized aggregate gradient. The same
	argument also applies to a broader class of privatized aggregate updates. This
	extension is useful for mini-batch implementations and multi-local-step
	DP-FedAvg-style variants.
	
	Suppose the server receives a privatized aggregate update $G_t$ satisfying
	\[
	G_t=\nabla F(\theta_t)+b_t+\xi_t,
	\]
	where $b_t$ is an $\mathcal F_t$-measurable bias term and $\xi_t$ is the fresh
	noise generated at round $t$. Assume
	\[
	\|b_t\|_2\le B,\qquad
	\mathbb E[\xi_t\mid\mathcal F_t]=0,\qquad
	\mathbb E[\|\xi_t\|_2^2\mid\mathcal F_t]\le d\nu^2.
	\]
	Here $b_t$ may contain clipping bias, stochastic mini-batch bias, and local-update
	drift. In the one-step DP-FedGD setting, $b_t$ reduces to the clipping bias and
	$B=\zeta_{\max}$. In a multi-local-step DP-FedAvg-style implementation, one may
	write abstractly
	\(
	B=B_{\rm clip}+B_{\rm drift},
	\)
	where $B_{\rm drift}$ captures the deviation between the aggregate local update and the global gradient at $\theta_t$.
	
	\begin{proposition}[Descent under a biased privatized aggregate update]
\label{prop:biased-aggregate-descent}
Suppose \(F\) is \(L\)-smooth and
\(\|\nabla F(\theta_t)\|_2\le G_{\max}\) along the iterate sequence. Let
\(M_t=\beta M_{t-1}+(1-\beta)G_t\),
\(H_t=(\rho I_d+M_tM_t^\top)^{-1}\), and
\(\theta_{t+1}=\theta_t-\eta H_tG_t\). Fix \(\tau_1,\tau_2>0\), and define
\(c_\nabla=\rho^{-1}-(\tau_1+\tau_2)/2\). If
\(\mathbb E[\|G_t\|_2^2\mid\mathcal F_t]\le G_0^2+d\nu^2\) for some finite
constant \(G_0\), then
\[
\begin{aligned}
\mathbb E[F(\theta_{t+1})\mid\mathcal F_t]
&\le
F(\theta_t)
-\eta c_\nabla\|\nabla F(\theta_t)\|_2^2
+
\frac{\eta G_{\max}^2}{\rho^2}
\mathbb E[\|M_t\|_2^2\mid\mathcal F_t]
\\
&\quad
+
\frac{\eta B^2}{2\tau_1\rho^2}
+
\frac{\eta d\nu^2}{2\tau_2\rho^2}
+
\frac{L\eta^2}{2\rho^2}(G_0^2+d\nu^2).
\end{aligned}
\]
Consequently, if \(\mathbb E\|M_t\|_2^2\le \overline M^2\) uniformly in \(t\),
then
\[
\mathbb E[F(\theta_{t+1})]
\le
\mathbb E[F(\theta_t)]
-\eta c_\nabla\mathbb E\|\nabla F(\theta_t)\|_2^2
+\Gamma_B,
\]
where
\[
\Gamma_B
=
\frac{\eta G_{\max}^2}{\rho^2}\overline M^2
+
\frac{\eta B^2}{2\tau_1\rho^2}
+
\frac{\eta d\nu^2}{2\tau_2\rho^2}
+
\frac{L\eta^2}{2\rho^2}(G_0^2+d\nu^2).
\]
\end{proposition}

	\begin{remark}[Relation to mini-batch and multi-local-step methods]
		Proposition~\ref{prop:biased-aggregate-descent} shows that the SOFIM
		preconditioner is not tied to a full-gradient, one-local-step implementation.
		The proof only requires a privatized aggregate update with controlled bias and
		second moment. Mini-batching changes the noise and bias constants, while
		multi-local-step local training contributes an additional drift term to $B$.
		Thus the same descent mechanism remains valid, with the price of local training
		appearing explicitly through the bias radius $B$.
	\end{remark}

	\subsection{Convergence under Strong Convexity}
	\label{subsec:sc}
	
	We now combine the one-step descent bound with strong convexity. We provide the proof of this result in Appendix~\ref{app:proofs}.
	
	\begin{theorem}[Linear convergence to a bias-and-noise neighborhood]
		\label{thm:conv_sc}
		Fix \(\tau_1,\tau_2>0\) such that \(c_\nabla\) as defined in \eqref{eq:cgrad_def} is positive. Suppose Assumptions~\ref{ass:smooth}, \ref{ass:strong},
		\ref{ass:bias}, \ref{ass:noise}, and \ref{ass:grad_bounded} hold.
		Define \(r:=1-2\mu\eta c_\nabla.\) If \(r\in(0,1)\), equivalently \(0<\eta<1/(2\mu c_\nabla)\), then for
		every \(T\ge 0\),
		\begin{align}
			\mathbb E\!\left[F(\theta_T)-F(\theta^\star)\right]
			&\le
			r^T\bigl(F(\theta_0)-F(\theta^\star)\bigr)
			+
			\frac{1-r^T}{1-r}\,\Gamma
			\label{eq:main_bound_geom}
			\\
			&\le
			r^T\bigl(F(\theta_0)-F(\theta^\star)\bigr)
			+
			\frac{\Gamma}{2\mu\eta c_\nabla},
			\label{eq:main_bound}
		\end{align}
		where \(\Gamma\) is defined in \eqref{eq:Gamma_def}.
	\end{theorem}

	\begin{remark}[Interpretation of the error floor]
		\label{rem:floor}
		The limiting neighborhood in \eqref{eq:main_bound} has three distinct  sources. The term involving \(d\nu^2\) is the differential-privacy noise floor. The term involving \(\zeta_{\max}^2\) is the clipping-bias floor. The term involving \(\bar M^2\) quantifies the additional cost induced by same-step coupling between the current preconditioner and the current privatized gradient. Since \(\bar M^2=C_g^2+(1-\beta)d\nu^2\), the coupling penalty is itself controlled by the clipping threshold, the privacy noise scale, and the momentum parameter.
	\end{remark}

    \begin{remark}[Stepsize dependence of the error floor]
    Although the error floor is written as \(\Gamma/(2\mu\eta c_\nabla)\), the quantity \(\Gamma\) itself depends on \(\eta\). In particular, the clipping-bias, privacy-noise, and same-step coupling terms in \(\Gamma\) are of order \(\eta\), whereas the smoothness term is of order \(\eta^2\). Hence the limiting neighborhood consists of constant-order bias/noise/coupling terms plus an \(O(\eta)\) smoothness contribution. Decreasing \(\eta\) slows the contraction factor but does not by itself inflate the limiting neighborhood.
    \end{remark}
	
	\begin{remark}[Choice of the Young parameters \(\tau_1,\tau_2\)]
		\label{rem:tau_choice}
		The parameters \(\tau_1\) and \(\tau_2\) arise solely from Young's  inequality in the control of the bias and noise cross-terms. Smaller values improve the descent coefficient \(c_\nabla\), but enlarge the additive constants in \(\Gamma\); larger values do the opposite. For the convergence theorem one only needs \(c_\nabla>0\), i.e.,
		\(
		\tau_1+\tau_2< 2/\rho.
		\)
		In practice one may choose \(\tau_1\) and \(\tau_2\) as fixed small  constants satisfying this condition.
	\end{remark}

	\subsection{Comparison with the DP-FedGD Error Floor}
\label{subsec:dp-fedgd-floor}

We next compare the DP-FedSOFIM floor with the corresponding first-order
baseline under the same clipping-bias and DP-noise assumptions. Consider
DP-FedGD,
\[
\theta_{t+1}^{\rm GD}
=
\theta_t^{\rm GD}-\eta G_t,
\qquad
G_t=\nabla F(\theta_t^{\rm GD})+\zeta_t+\xi_t,
\]
where \(\|\zeta_t\|_2\le \zeta_{\max}\),
\(\mathbb E[\xi_t\mid\mathcal F_t]=0\), and
\(\mathbb E[\|\xi_t\|_2^2\mid\mathcal F_t]\le d\nu^2\).

\begin{lemma}[One-step descent for DP-FedGD]
\label{lem:dp-fedgd-onestep}
Suppose \(F\) is \(L\)-smooth and Assumptions~\ref{ass:bias} and
\ref{ass:noise} hold. Then, for any \(\tau>0\),
\[
\mathbb E[F(\theta_{t+1}^{\rm GD})\mid\mathcal F_t]
\le
F(\theta_t^{\rm GD})
-
\eta c_{\rm GD}\|\nabla F(\theta_t^{\rm GD})\|_2^2
+
\Gamma_{\rm GD},
\]
where
\[
c_{\rm GD}:=1-\frac{\tau}{2},
\qquad
\Gamma_{\rm GD}
:=
\frac{\eta\zeta_{\max}^2}{2\tau}
+
\frac{L\eta^2}{2}(C_g^2+d\nu^2).
\]
\end{lemma}

\begin{theorem}[DP-FedGD error floor under strong convexity]
\label{thm:dp-fedgd-floor}
Suppose \(F\) is \(\mu\)-strongly convex and the assumptions of
Lemma~\ref{lem:dp-fedgd-onestep} hold. If \(c_{\rm GD}>0\) and
\(0<\eta<1/(2\mu c_{\rm GD})\), then
\[
\mathbb E[F(\theta_T^{\rm GD})-F(\theta^\star)]
\le
(1-2\mu\eta c_{\rm GD})^T
\{F(\theta_0)-F(\theta^\star)\}
+
\frac{\Gamma_{\rm GD}}{2\mu\eta c_{\rm GD}}.
\]
\end{theorem}

The DP-FedGD floor is therefore
\(\Gamma_{\rm GD}/(2\mu\eta c_{\rm GD})\), whereas the DP-FedSOFIM floor in
Theorem~\ref{thm:conv_sc} is \(\Gamma/(2\mu\eta c_\nabla)\). These bounds do
not imply that DP-FedSOFIM has a uniformly smaller worst-case error floor than
DP-FedGD. The SOFIM bound contains additional terms arising from the same-step
coupling between \(G_t\) and \(H_t\), which is the price of using a
data-dependent preconditioner constructed from the current privatized aggregate.

The theoretical advantage of DP-FedSOFIM appears when the preconditioner improves
the effective descent geometry more than it increases the additive coupling
term. The present analysis is conservative and does not fully exploit possible
alignment between the momentum direction \(M_t\), the gradient direction, and
favorable curvature directions of the objective. Thus, our theorem proves
stability and convergence of the server-side preconditioned method under DP
noise, but it does not by itself prove a uniformly lower worst-case floor than
DP-FedGD.

A sharper comparison is possible under an additional alignment or
preconditioner-quality condition. For example, suppose that along the
optimization path
\[
\nabla F(\theta_t)^\top H_t\nabla F(\theta_t)
\ge
\alpha_H\|\nabla F(\theta_t)\|_2^2
\]
for some \(\alpha_H>0\), and
\[
\mathbb E\|H_tG_t\|_2^2
\le
\sigma_H^2(C_g^2+d\nu^2).
\]
Then the SOFIM floor has the schematic form
\[
\frac{
\eta B_H+L\eta^2\sigma_H^2(C_g^2+d\nu^2)
}{
2\mu\eta\alpha_H
},
\]
where \(B_H\) collects the bias and coupling terms. This floor can be smaller than the DP-FedGD floor when the effective-descent gain \(\alpha_H\) and the variance reduction \(\sigma_H^2\) dominate the additional coupling cost. This regime is plausible in ill-conditioned problems with a stable dominant direction in the privatized gradient trajectory, and is consistent with the empirical gains observed on PathMNIST and in the faster-convergence results.

	\subsection{Extension to the Polyak--\L{}ojasiewicz Condition}
	\label{subsec:pl} 
	The same descent argument extends verbatim from strong convexity to the  Polyak--\L{}ojasiewicz condition.
	
	\begin{assumption}[Polyak--\L{}ojasiewicz condition]
		\label{ass:pl}
		There exists \(\mu_{\mathrm{PL}}>0\) such that for all \(\theta\in\mathbb R^d\),
		\begin{equation}
			\frac{1}{2}\|\nabla F(\theta)\|_2^2
			\ge
			\mu_{\mathrm{PL}}\bigl(F(\theta)-F(\theta^\star)\bigr).
			\label{eq:pl_def}
		\end{equation}
	\end{assumption}
	
	\begin{remark}
		The PL condition is strictly weaker than strong convexity. In particular, \(\mu\)-strong convexity implies \eqref{eq:pl_def} with \(\mu_{\mathrm{PL}}=\mu\), but the converse need not hold.
	\end{remark}
	
	\begin{theorem}[DP-FedSOFIM under the PL condition]
		\label{thm:conv_pl}
		Fix \(\tau_1,\tau_2>0\) such that \(c_\nabla>0\) in \eqref{eq:cgrad_def}.
		Suppose Assumptions~\ref{ass:smooth}, \ref{ass:bias},
		\ref{ass:noise}, \ref{ass:grad_bounded}, and \ref{ass:pl} hold.
		Define \(r_{\mathrm{PL}}
			:=
			1-2\mu_{\mathrm{PL}}\eta c_\nabla.
			\) If \(r_{\mathrm{PL}}\in(0,1)\), equivalently
		\(0<\eta<1/(2\mu_{\mathrm{PL}}c_\nabla)\), then for every \(T\ge 0\),
		\begin{align}
			\mathbb E\!\left[F(\theta_T)-F(\theta^\star)\right]
			&\le
			r_{\mathrm{PL}}^T\bigl(F(\theta_0)-F(\theta^\star)\bigr)
			+
			\frac{1-r_{\mathrm{PL}}^T}{1-r_{\mathrm{PL}}}\,\Gamma
			\label{eq:pl_bound_geom}
			\\
			&\le
			r_{\mathrm{PL}}^T\bigl(F(\theta_0)-F(\theta^\star)\bigr)
			+
			\frac{\Gamma}{2\mu_{\mathrm{PL}}\eta c_\nabla},
			\label{eq:pl_bound}
		\end{align}
		where \(\Gamma\) is defined in \eqref{eq:Gamma_def}.
	\end{theorem}
	
	We provide the proof of the above result in Appendix~\ref{app:proofs}.
	
	\begin{remark}[Relationship between the strong-convexity and PL results]
		Since strong convexity implies the PL condition,Theorem~\ref{thm:conv_pl}  strictly generalizes Theorem~\ref{thm:conv_sc}. We state both versions explicitly because the strongly convex result aligns directly with the frozen linear-head regime used in the experiments, whereas the PL theorem extends the guarantee to a broader class of objectives.
	\end{remark}
	

	\subsection{Smooth Non-convex Stationarity}
	\label{subsec:nonconvex-stationarity}
		
	The preceding convergence result was stated under strong convexity, which is the most natural condition for the frozen-feature linear-head setting considered in our experiments. However, the one-step descent argument itself does not require convexity. It only requires smoothness, control of the bias in the privatized aggregate update, control of the DP noise, and a uniform bound on the gradient norm along the optimization path. Therefore, the same recursion also yields a standard stationarity guarantee for smooth non-convex objectives.
	
	In the non-convex case, convergence to a global minimizer cannot be expected without additional geometric assumptions. We therefore measure optimization progress through the average squared gradient norm,
	\[
	\frac{1}{T}\sum_{t=0}^{T-1}
	\mathbb E\|\nabla F(\theta_t)\|_2^2,
	\]
	or equivalently through the squared gradient norm at a uniformly sampled
	iterate. This is the usual stationarity criterion in smooth non-convex
	optimization. The result below shows that DP-FedSOFIM reaches a stationary neighborhood whose size is determined by the same bias-and-noise term appearing in the one-step descent bound. Thus, the effects of clipping bias, privacy noise, and same-step preconditioner coupling remain explicit in the non-convex setting.
	
\begin{theorem}[Stationarity under smooth non-convex objectives]
\label{thm:nonconvex-stationarity}
Suppose Assumptions~\ref{ass:smooth}, \ref{ass:bias}, \ref{ass:noise}, and
\ref{ass:grad_bounded} hold. Let
\(c_\nabla=\rho^{-1}-(\tau_1+\tau_2)/2>0\) for some \(\tau_1,\tau_2>0\), and
suppose \(F_{\inf}:=\inf_{\theta\in\mathbb R^d}F(\theta)>-\infty\). Then, for
every \(T\ge1\),
\[
\frac{1}{T}\sum_{t=0}^{T-1}
\mathbb E\|\nabla F(\theta_t)\|_2^2
\le
\frac{F(\theta_0)-F_{\inf}}{\eta c_\nabla T}
+
\frac{\Gamma}{\eta c_\nabla},
\]
where \(\Gamma\) is the bias-and-noise term defined in
Lemma~\ref{lem:one_step_unconditional}. Equivalently, if \(R\) is sampled
uniformly from \(\{0,\ldots,T-1\}\), independently of the algorithmic randomness,
then
\[
\mathbb E\|\nabla F(\theta_R)\|_2^2
\le
\frac{F(\theta_0)-F_{\inf}}{\eta c_\nabla T}
+
\frac{\Gamma}{\eta c_\nabla}.
\]
\end{theorem}

	\begin{corollary}[Stationarity for biased privatized aggregate updates]
		\label{cor:biased-nonconvex-stationarity}
		Under the assumptions of Proposition~\ref{prop:biased-aggregate-descent}, suppose
		$F_{\inf}>-\infty$ and $c_\nabla>0$. Then
		\[
		\frac{1}{T}\sum_{t=0}^{T-1}
		\mathbb E\|\nabla F(\theta_t)\|_2^2
		\leq
		\frac{F(\theta_0)-F_{\inf}}{\eta c_\nabla T}
		+
		\frac{\Gamma_B}{\eta c_\nabla}.
		\]
	\end{corollary}

	\begin{remark}[Interpretation]
		Theorem~\ref{thm:nonconvex-stationarity} does not require strong convexity or the PL condition. The first term on the right-hand side decays at the usual $O(1/T)$ rate for fixed stepsize, while the second term is the limiting stationarity floor induced by clipping, privacy noise, and the coupling between	the current privatized aggregate $G_t$ and the same-round preconditioner $H_t$. When the clipping bias and privacy noise are small and the preconditioner is
		stable, this neighborhood is correspondingly small. Fully general global convergence for deep non-convex objectives is not claimed.
	\end{remark}

	\subsection{Privacy Analysis}
	\label{subsec:privacy}
	
	We now analyze the differential privacy guarantees of DP-FedSOFIM. The key observation is that DP-FedSOFIM introduces no additional client-side data access beyond the privatized gradient release mechanism already used in standard differentially private federated learning. All additional computations occur at the server and depend only on already privatized quantities. Accordingly, the privacy guarantees of DP-FedSOFIM follow from two standard principles of differential privacy: post-processing invariance and sequential composition.
	
	\paragraph{Round-wise privatized release:} Fix a communication round $t$.
	Let
	\(
	\mathcal{M}_t(\mathcal{D})
	\)
	denote the randomized mechanism that maps the federated dataset \(\mathcal{D}= \mathcal{D}_1,\dots,\mathcal{D}_n)\)to the set of privatized client updates released to the server at round $t$. In the present algorithm this mechanism produces the vector
	\(
	\mathcal{M}_t(\mathcal{D})
	=
	\bigl(g_{1,t},\dots,g_{n,t}\bigr),
	\)
	where each $g_{i,t}$ is the privatized update defined in
	\eqref{eq:client_release}. The exact differential privacy parameters of this mechanism depend on the clipping rule, noise scale, dataset normalization, participation pattern, and the privacy accountant used across rounds. Rather than fixing a specific accountant in the theoretical analysis, we assume that the round-wise release mechanism satisfies a valid differential privacy guarantee.
	
	\begin{assumption}[Per-round privacy of the released gradients]
		\label{ass:round_privacy}
		
		For each round $t$, the release mechanism $\mathcal{M}_t$ satisfies
		$(\varepsilon_t,\delta_t)$-differential privacy with respect to the
		federated record-level adjacency defined in
		Definition~\ref{def:adjacency}.
		
	\end{assumption}
	
	\paragraph{Privacy of the aggregated gradient:}
	
	The server aggregates the client updates as
	\(
	G_t = n^{-1}\sum_{i=1}^n g_{i,t}.
	\)
	
	\begin{lemma}[Privacy of the aggregated gradient]
		\label{lem:privacy_aggregate}
		
		Under Assumption~\ref{ass:round_privacy}, the aggregated gradient
		$G_t$ is $(\varepsilon_t,\delta_t)$-differentially private.
		
	\end{lemma}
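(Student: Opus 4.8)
The plan is to prove this as a direct application of the post-processing invariance of differential privacy. The essential point is that $G_t$ is obtained from the privatized release $\mathcal M_t(\mathcal D)=(g_{1,t},\dots,g_{n,t})$ by applying a fixed averaging map that does not touch the raw data at all, so whatever privacy level the release enjoys is inherited by $G_t$.

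First I would recall the post-processing theorem (as stated in \citet{dwork2014algorithmic}): if a randomized mechanism $\mathcal M$ satisfies $(\varepsilon,\delta)$-DP and $f$ is any measurable map whose randomness is independent of the input dataset, then $f\circ\mathcal M$ also satisfies $(\varepsilon,\delta)$-DP. Next I would identify the relevant map: define $f:\mathbb R^{d}\times\cdots\times\mathbb R^{d}\to\mathbb R^{d}$ by $f(v_1,\dots,v_n)=\tfrac{1}{n}\sum_{i=1}^n v_i$. This is a deterministic linear map; crucially $n$ is public and the coefficients $1/n$ do not depend on $\mathcal D$, so $f$ qualifies as a valid post-processing function. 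By construction $G_t = f\bigl(\mathcal M_t(\mathcal D)\bigr)$.

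Finally, I would invoke Assumption~\ref{ass:round_privacy}, which states that $\mathcal M_t$ is $(\varepsilon_t,\delta_t)$-DP with respect to the record-level adjacency of Definition~\ref{def:adjacency}, and conclude that $G_t=f\circ\mathcal M_t$ is $(\varepsilon_t,\delta_t)$-DP with respect to the same adjacency relation. A one-line measurability remark suffices to check that $f$ is Borel measurable so the composition theorem applies cleanly.

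There is essentially no hard step here; the result is a textbook consequence of post-processing. The only thing worth stating carefully is that the averaging weights are data-independent (so the map is genuinely post-processing rather than something that itself accesses $\mathcal D$), and that the adjacency notion under which the conclusion holds is exactly the one under which the hypothesis is assumed. I would keep the proof to a few sentences and not belabor it.
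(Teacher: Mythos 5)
Your proposal is correct and follows essentially the same argument as the paper: $G_t$ is a deterministic (data-independent) averaging of the released vector $\mathcal{M}_t(\mathcal{D})$, so the $(\varepsilon_t,\delta_t)$-DP guarantee of Assumption~\ref{ass:round_privacy} carries over by post-processing invariance. The extra remarks on measurability and the data-independence of the weights $1/n$ are fine but not needed beyond what the paper's one-line proof already states.
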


	\paragraph{Server-side privacy preservation:}
	
	DP-FedSOFIM differs from the underlying first-order private federated optimization method only through additional server-side computations based on the privatized aggregated gradients.
	
	Specifically, the server performs the updates
	\[
	M_t = \beta M_{t-1} + (1-\beta)G_t,
	\qquad
	\widehat{\mathcal I}_t = \rho I_d + M_t M_t^\top,
	\qquad
	H_t = \widehat { \mathcal I}_t^{-1},
	\qquad
	\theta_{t+1} = \theta_t - \eta_t H_t G_t.
	\]
	
	These quantities depend only on $G_t$ and the previous server state.
	
	\begin{lemma}[Server-side post-processing]
		\label{lem:privacy_round}
		
		Fix a communication round $t$. If the aggregated gradient $G_t$ is $(\varepsilon_t,\delta_t)$-differentially private conditional on the previous server state, then the full server output of round $t$, namely $(M_t,H_t,\theta_{t+1})$, is also  $(\varepsilon_t,\delta_t)$-differentially private conditional on the same previous server state.
		
	\end{lemma}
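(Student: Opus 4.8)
The plan is to argue that, once we condition on the previous server state, the triple $(M_t,H_t,\theta_{t+1})$ is obtained from the single privatized release $G_t$ by a fixed, data-independent, deterministic map, so that the claim is an immediate instance of the post-processing invariance of differential privacy (Lemma~\ref{lem:privacy_aggregate} handles the analogous step for $G_t$ itself, and the same mechanism is reused here).

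First I would fix the conditioning explicitly: write $s_t:=(\theta_t,M_{t-1})$ for the previous server state and treat it as a deterministic quantity. This is legitimate because $s_t$ is measurable with respect to the privatized releases of rounds $0,\dots,t-1$, hence constant once conditioned upon, and in particular carries no further dependence on the raw data $\mathcal D$. Then I would isolate the post-processing map
\[
\Phi_{s_t}\colon \mathbb R^d \longrightarrow \mathbb R^d\times\mathbb R^{d\times d}\times\mathbb R^d,
\qquad
\Phi_{s_t}(G)
=
\Bigl(\,
\beta M_{t-1}+(1-\beta)G,\;
\bigl(\rho I_d+MM^\top\bigr)^{-1},\;
\theta_t-\eta_t\bigl(\rho I_d+MM^\top\bigr)^{-1}G
\,\Bigr),
\]
with $M:=\beta M_{t-1}+(1-\beta)G$, so that $\Phi_{s_t}(G_t)=(M_t,H_t,\theta_{t+1})$. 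I would then note that $\Phi_{s_t}$ is well defined and Borel measurable: since $\rho>0$ we have $\rho I_d+MM^\top\succeq\rho I_d\succ0$ for every $G$, so the matrix inverse exists everywhere, and each coordinate is a composition of polynomial maps with matrix inversion. Crucially, $\Phi_{s_t}$ depends on $\mathcal D$ only through its argument $G$.

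Next I would invoke the post-processing theorem \citep{dwork2014algorithmic}: if a mechanism is $(\varepsilon_t,\delta_t)$-DP and $\Phi$ is any (possibly randomized, data-independent) map, then the composition remains $(\varepsilon_t,\delta_t)$-DP. Applying this with the mechanism $G_t$, which by hypothesis is $(\varepsilon_t,\delta_t)$-DP conditional on $s_t$, and the map $\Phi_{s_t}$, yields that $(M_t,H_t,\theta_{t+1})=\Phi_{s_t}(G_t)$ is $(\varepsilon_t,\delta_t)$-DP conditional on the same $s_t$, with the privacy parameters transferred verbatim.

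The only point requiring care is the bookkeeping of the conditioning --- making explicit that $s_t$ is a deterministic function of previously released privatized quantities, so that conditioning on it does not reintroduce data dependence into $\Phi_{s_t}$, and that the notion of ``conditional $(\varepsilon_t,\delta_t)$-DP'' composes coherently with a conditional post-processing step. There is no analytic obstacle here; the entire content is recognizing that $H_t$ and $\theta_{t+1}$, despite $H_t$ being built from the same-round aggregate $G_t$, remain deterministic functions of that one privatized release, so no fresh data access occurs at the server. This per-round statement is precisely the inductive building block that combines with sequential composition to give the full $T$-round $(\varepsilon,\delta)$ guarantee for DP-FedSOFIM.
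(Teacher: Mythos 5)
Your proposal is correct and follows essentially the same route as the paper: condition on the previous server state $(\theta_t,M_{t-1})$, observe that $(M_t,H_t,\theta_{t+1})$ is a deterministic (data-independent) function of the single privatized release $G_t$, and invoke the post-processing invariance of differential privacy. Your added remarks on well-definedness of the inverse (via $\rho>0$) and measurability of the map are harmless elaborations of the same argument.
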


	\paragraph{Adaptive composition across rounds:}
	
	Because the model parameters released at round $t$ influence the client computations performed at round $t+1$, the overall training procedure is adaptive across rounds. Differential privacy nevertheless remains valid under adaptive sequential composition. 
	\begin{theorem}[End-to-end privacy of DP-FedSOFIM]
\label{thm:privacy_comp}
Suppose Assumption~\ref{ass:round_privacy} holds for each round
\(t=0,\ldots,T-1\). Then the full \(T\)-round DP-FedSOFIM algorithm satisfies
\[
\left(
\sum_{t=0}^{T-1}\varepsilon_t,
\sum_{t=0}^{T-1}\delta_t
\right)\textnormal{-DP}.
\]
In particular, if the same per-round privacy parameters
\((\varepsilon_0,\delta_0)\) are used at every round, then the overall procedure
satisfies \((T\varepsilon_0,T\delta_0)\)-DP.
\end{theorem}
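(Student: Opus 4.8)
The plan is to recognize the full $T$-round procedure as an instance of adaptive (sequential) composition of $(\varepsilon_t,\delta_t)$-DP mechanisms, followed by a single post-processing step. First I would make the adaptive structure precise: define the composed mechanism $\mathcal{M}_{0:T-1}$ that, on input $\mathcal{D}$, sequentially produces $Y_0=\mathcal{M}_0(\mathcal{D})$, then $Y_1=\mathcal{M}_1(\mathcal{D};Y_0)$, and in general $Y_t=\mathcal{M}_t(\mathcal{D};Y_0,\dots,Y_{t-1})$, where the dependence of round $t$ on the earlier releases is entirely through the server state $(\theta_t,M_{t-1})$. Crucially, by Lemma~\ref{lem:privacy_round} the server state at the start of round $t$ is a \emph{deterministic} function of $(Y_0,\dots,Y_{t-1})$, so conditioning on it is equivalent to conditioning on the previous outputs; hence Assumption~\ref{ass:round_privacy} delivers exactly what the composition framework requires, namely that $\mathcal{M}_t(\,\cdot\,;y_0,\dots,y_{t-1})$ is $(\varepsilon_t,\delta_t)$-DP for every fixed realization of the prefix.

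Next I would invoke the standard adaptive sequential composition theorem for approximate DP \citep{dwork2014algorithmic}: if $\mathcal{M}_0,\dots,\mathcal{M}_{T-1}$ are such that each $\mathcal{M}_t$ is $(\varepsilon_t,\delta_t)$-DP for every fixed value of the preceding outputs, then the composed mechanism producing the full transcript $(Y_0,\dots,Y_{T-1})$ is $\bigl(\sum_{t=0}^{T-1}\varepsilon_t,\ \sum_{t=0}^{T-1}\delta_t\bigr)$-DP with respect to the record-level adjacency of Definition~\ref{def:adjacency}. Here I would note explicitly that the adjacency relation is the same across all rounds — a single replaced record lives in exactly one client dataset $\mathcal{D}_i$ — so all $T$ mechanisms are analyzed against the identical neighboring-dataset relation, as the composition theorem demands.

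I would then close the argument by post-processing. The actual output of DP-FedSOFIM — the iterate sequence $\theta_1,\dots,\theta_T$, and in particular the final model $\theta_T$ — is obtained from the transcript $(Y_0,\dots,Y_{T-1})$ by the deterministic server-side maps of Lemmas~\ref{lem:privacy_aggregate} and \ref{lem:privacy_round}, with no further access to $\mathcal{D}$. Post-processing invariance therefore transfers the $\bigl(\sum_t\varepsilon_t,\sum_t\delta_t\bigr)$ guarantee from the transcript to the algorithm's released output, and specializing to $(\varepsilon_t,\delta_t)\equiv(\varepsilon_0,\delta_0)$ yields the $(T\varepsilon_0,T\delta_0)$-DP claim.

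The main obstacle I anticipate is one of careful bookkeeping rather than mathematical depth: one must verify that the only channel through which round $t$ ``sees'' the past is the server state, and that this state is a measurable deterministic function of the earlier privatized releases, so that the adaptivity is genuinely of the form handled by the sequential composition theorem and no hidden additional data dependence is introduced by the momentum buffer, the Fisher proxy, or the preconditioner. This is precisely what the post-processing lemmas supply. A secondary point worth stating is that the theorem deliberately uses only \emph{basic} composition (additive in $\varepsilon$ and $\delta$); sharper accountants such as advanced composition, R\'{e}nyi DP, or Gaussian/Hockey-Stick accounting would give a smaller effective $\varepsilon$, but since they too operate solely on the per-round releases they apply verbatim to DP-FedSOFIM, consistent with the privacy--preconditioning separation emphasized throughout the paper.
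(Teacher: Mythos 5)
Your proposal is correct and follows essentially the same route as the paper's proof: per-round privacy of the released gradients (Assumption~\ref{ass:round_privacy}), server-side post-processing via Lemmas~\ref{lem:privacy_aggregate} and~\ref{lem:privacy_round}, and adaptive basic sequential composition across the $T$ rounds. Your write-up simply spells out the adaptive-composition bookkeeping (the server state as a deterministic function of past releases) more explicitly than the paper's brief argument, which is a welcome clarification but not a different approach.
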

	
	\begin{proof}
		
		By Lemma~\ref{lem:privacy_aggregate}, the aggregated gradient $G_t$
		is $(\varepsilon_t,\delta_t)$-differentially private.  Lemma~\ref{lem:privacy_round} then implies that the complete server  output of round $t$ inherits the same privacy guarantee.  
		
		Applying adaptive sequential composition across $t=0,\dots,T-1$ yields the stated bound.
		
	\end{proof}

   Note that the above bound is only a simple-composition baseline. The actual privacy guarantee of DP-FedSOFIM is accountant-agnostic: whatever final \(\varepsilon,\delta)\) guarantee is certified for the underlying privatized gradient-release sequence is inherited unchanged by DP-FedSOFIM, because all SOFIM computations are post-processings of already privatized aggregate gradients.

    \begin{theorem}[Privacy under an arbitrary valid accountant]
\label{thm:accountant-privacy}
Fix the clipping rule, noise multiplier, participation mechanism, number of rounds, adjacency relation, and privacy accountant used for the underlying privatized gradient-release mechanism. Suppose this accountant certifies that the sequence of privatized releases \((G_0,\ldots,G_{T-1})\) satisfies $(\varepsilon,\delta)$-differential privacy. Then the full
DP-FedSOFIM transcript \(
(\theta_1,M_0,H_0,\ldots,\theta_T,M_{T-1},H_{T-1})
\)
also satisfies $(\varepsilon,\delta)$-differential privacy.
\end{theorem}

\begin{proof}
The DP-FedSOFIM transcript is obtained by applying deterministic server-side
maps to the privatized release sequence $(G_0,\ldots,G_{T-1})$. These maps
include the momentum recursion, the rank-one regularized preconditioner, and the
parameter update. Therefore, by post-processing invariance of differential
privacy, the transcript has the same privacy guarantee certified for the
underlying release sequence.
\end{proof}
	
	\begin{proposition}[Privacy preservation for general privatized local updates]
		\label{prop:general-local-update-privacy}
		Let $\widetilde\Delta_t$ be any aggregate client update released to the server at
		round $t$. Suppose $\widetilde\Delta_t$ is $(\varepsilon_t,\delta_t)$-differentially
		private conditional on the previous server state. Define
		\[
		M_t=\beta M_{t-1}+(1-\beta)\widetilde\Delta_t,
		\qquad
		H_t=(\rho I_d+M_tM_t^\top)^{-1},
		\qquad
		\theta_{t+1}=\theta_t-\eta_tH_t\widetilde\Delta_t.
		\]
		Then $(M_t,H_t,\theta_{t+1})$ is also $(\varepsilon_t,\delta_t)$-differentially
		private conditional on the previous server state.
	\end{proposition}
	
	\begin{proof}
		Condition on the previous server state $(\theta_t,M_{t-1})$. Then
		$M_t$, $H_t$, and $\theta_{t+1}$ are deterministic functions of the privatized
		aggregate update $\widetilde\Delta_t$. Therefore, by the post-processing property
		of differential privacy, the tuple $(M_t,H_t,\theta_{t+1})$ has the same
		round-wise privacy guarantee as $\widetilde\Delta_t$.
	\end{proof}

	\paragraph{Instantiating the privacy parameters:}
	
	Appendix~\ref{app:privacy} derives the concrete privacy parameters for the Gaussian release used in \eqref{eq:client_release} under replace-one record adjacency. That derivation computes the sensitivity of the clipped client update and applies the chosen privacy accountant to obtain the corresponding $(\varepsilon_t,\delta_t)$ guarantees. Once these parameters are instantiated for the client-side release mechanism, Theorem~\ref{thm:privacy_comp} implies that the SOFIM momentum and preconditioning steps incur no additional privacy loss.

\begin{proposition}[Privacy-cost equivalence with DP-FedGD]
	\label{prop:privacy-equivalence}
	Fix a clipping rule, clipping threshold $C_g$, noise multiplier $\sigma_g$,
	client participation mechanism, number of communication rounds $T$, and privacy
	accountant. Suppose DP-FedGD with these choices satisfies $(\varepsilon,\delta)$
	differential privacy. Then DP-FedSOFIM with the same choices also satisfies
	$(\varepsilon,\delta)$ differential privacy.
\end{proposition}

\begin{proof}
	Under the stated choices, DP-FedGD and DP-FedSOFIM use the same client-side
	private release mechanism. In each communication round, the data-dependent
	quantity released to the server is the same clipped and privatized aggregate
	gradient $G_t$.
	
	DP-FedSOFIM differs from DP-FedGD only through the server-side computations
	\[
	M_t=\beta M_{t-1}+(1-\beta)G_t,\qquad
	\widehat I_t=\rho I_d+M_tM_t^\top,\qquad
	H_t=\widehat I_t^{-1},
	\]
	and
	\[
	\theta_{t+1}=\theta_t-\eta_tH_tG_t.
	\]
	Conditional on the previous server state, these quantities are deterministic
	functions of the already privatized aggregate $G_t$. Therefore, by the
	post-processing property of differential privacy, they introduce no additional
	privacy loss in round $t$.
	
	Since the sequence of private releases across the $T$ rounds is the same as in
	DP-FedGD, the adaptive composition analysis under the chosen privacy accountant
	is also identical. Hence DP-FedSOFIM satisfies the same final
	$(\varepsilon,\delta)$ guarantee as DP-FedGD.
\end{proof}

Table~\ref{tab:privacy-computation-comparison} compares the information released by representative DP-FL methods. Standard DP-FL algorithms privatize clipped gradients or client updates and track the cumulative privacy loss through composition/accounting methods \citep{dwork2014algorithmic,abadi2016deep, mcmahan2017learning,mironov2017renyi}. Methods that release additional
client-side quantities, such as control variates or curvature/covariance
statistics, require privacy accounting appropriate to those releases
\citep{karimireddy2020scaffold}. DP-FedSOFIM differs in that all curvature
operations are performed at the server after the aggregate gradient has already been privatized. Therefore, the SOFIM update is a deterministic post-processing of a DP quantity and incurs no additional privacy cost
\citep{dwork2014algorithmic}.


\begin{table}[h]
	\centering
	\caption{Privacy and computational comparison. DP-FedSOFIM has the same privacy cost as DP-FedGD because all SOFIM operations are post-processings of already privatized aggregate gradients.}
	\label{tab:privacy-computation-comparison}
	\small
	\setlength{\tabcolsep}{4pt}
	\renewcommand{\arraystretch}{1.15}
	\begin{tabularx}{\textwidth}{lXXc}
		\toprule
		Method
		& Additional information beyond noisy aggregate
		& Privacy cost
		& Client memory \\
		\midrule
		DP-FedGD
		& None
		& \((\varepsilon,\delta)\)
		& \(O(d)\) \\
		\addlinespace
		
		DP-FedAvg
		& None, if only privatized client updates are released
		& Accountant-dependent
		& \(O(d)\) \\
		\addlinespace
		
		DP-FedAdam
		& Server-side first- and second-moment states computed from privatized aggregates
		& Same as the underlying privatized update release
		& \(O(d)\) \\
		\addlinespace
		
		DP-FedYogi
		& Server-side adaptive moment states computed from privatized aggregates
		& Same as the underlying privatized update release
		& \(O(d)\) \\
		\addlinespace
		
		DP-FTRL
		& Accumulated privatized gradients or tree-aggregated noisy sums, depending on implementation
		& Accountant-dependent; often benefits from privacy accounting for cumulative releases
		& \(O(d)\) \\
		\addlinespace
		
		DP-AdaFedProx
		& Adaptive/proximal server-side state; may require additional state depending on implementation
		& Accountant-dependent
		& \(O(d)\) \\
		\addlinespace
		
		DP-SCAFFOLD
		& Control-variate state, depending on implementation
		& Accountant-dependent
		& \(O(d)\) \\
		\addlinespace
		
		DP-FedFC / DP-FedNew-type
		& Client-side curvature or covariance statistic
		& Requires accounting if privately released
		& \(O(d^2)\) \\
		\addlinespace
		
		DP-FedSOFIM
		& None; SOFIM is server-side post-processing of privatized aggregate gradients
		& Same as DP-FedGD under the same clipping, noise, participation, and accountant
		& \(O(d)\) \\
		\bottomrule
	\end{tabularx}
\end{table}

\begin{remark}[Meaning of ``no additional privacy cost'']
	Proposition~\ref{prop:privacy-equivalence} says that DP-FedSOFIM and DP-FedGD have identical privacy cost when the private release mechanism and privacy accountant are fixed. The advantage of DP-FedSOFIM is that curvature-aware preconditioning is added after privatization, and hence does not consume additional privacy
	budget.
\end{remark}

\subsection{Comparison with Existing DP-FL Theory}
\label{subsec:theory-comparison}

The convergence results above should be interpreted as structural guarantees for
privacy-compatible server-side curvature adaptation. We do not claim that
DP-FedSOFIM improves the minimax convergence rate of DP-FL in all regimes.
Rather, the theoretical advantage is that curvature information is introduced
without changing the private release mechanism.

Compared with DP-FedGD, DP-FedSOFIM uses the same clipped and privatized
aggregate gradient, and therefore has the same privacy cost under the same
accountant. The difference lies only in the server-side update: DP-FedGD applies
an isotropic first-order step, while DP-FedSOFIM applies the preconditioned step
\[
\theta_{t+1}
=
\theta_t-\eta_tH_tG_t,
\qquad
H_t=(\rho I_d+M_tM_t^\top)^{-1}.
\]
Thus the privacy noise source is the same as in DP-FedGD, but the update is
rescaled along the curvature direction estimated from the privatized aggregate
trajectory.

Compared with DP-FedFC and DP-FedNew-type methods, DP-FedSOFIM avoids releasing
client-side covariance, Hessian, or Fisher-type statistics. This distinction is
important because such second-order statistics generally require $O(d^2)$
client-side memory and communication, and must be accounted for if they are
released privately. DP-FedSOFIM instead constructs its rank-one Fisher proxy
entirely at the server from already privatized aggregate gradients.

Compared with DP-SCAFFOLD, DP-FedSOFIM does not rely on maintaining client-level control variates. In high privacy-noise regimes, control-variate estimates may themselves become noisy or require additional accounting depending on the implementation. DP-FedSOFIM uses only the server-side momentum buffer
\(
M_t=\beta M_{t-1}+(1-\beta)G_t,
\)
which is a deterministic function of privatized aggregates. Therefore, the main theoretical benefit of DP-FedSOFIM is the combination of same privacy cost as DP-FedGD, $O(d)$ client-side complexity, and server-side curvature-aware preconditioning. The empirical section then evaluates whether this structural advantage translates into improved optimization and predictive performance under fixed privacy
budgets.

\section{Experiments}

\subsection{Experimental Setup}
\label{sec:experimental_setup}

\textbf{Datasets and Models:}
We evaluate DP-FedSOFIM on two datasets. \textbf{CIFAR-10} consists of
50,000 training images and 10,000 test images across 10 classes.
\textbf{PathMNIST} is a medical imaging dataset from the MedMNIST benchmark
\citep{medmnist}, consisting of colorectal cancer histology patches across 9
tissue classes. Following the transfer learning protocol for differentially
private learning \citep{differentially_tramr_2021,krouka2025communication}, we
employ two frozen feature extractors pre-trained on CIFAR-100: a
\textbf{ResNet-20} \citep{he2016deep} backbone and a \textbf{VGG-16} \citep{Simonyan15} backbone. In both cases
only the final linear classification head is optimized, yielding a
low-dimensional parameter space well-suited to second-order optimization while
preserving the essential challenges of private federated learning.

\textbf{Federated Setting:}
We simulate a federated environment with $n = 20$ clients with full client
participation in every round. To assess robustness to data heterogeneity, we evaluate each method under two
data partitioning schemes: \textbf{IID}, where training data is partitioned
uniformly at random across clients, and \textbf{non-IID}, where data is
partitioned using a Dirichlet process with concentration parameter $\alpha =
0.5$, inducing moderate label heterogeneity across clients. For the VGG-16
backbone, we report results under the non-IID partition only, as the primary
goal of the VGG-16 experiments is to assess cross-architecture robustness of
the non-IID findings rather than to replicate the full evaluation protocol.

To ensure reproducibility, Tables~\ref{tab:pathmnist_partition}
and~\ref{tab:cifar10_partition} (Appendix~\ref{app:partition_stats}) report the
complete per-client, per-class sample counts for both datasets under a
representative seed. On \mbox{PathMNIST} (89,996 training samples, 9 classes),
the Dirichlet partition yields client sizes ranging from 1,045 to 9,986
(mean $= 4{,}500$, std $= 2{,}526$). The within-client class distribution is
highly concentrated: the mean KL divergence from the uniform class distribution
is 0.581 (maximum 1.115, where 0 indicates perfectly IID), and two clients
(clients~8 and~14) are missing two classes entirely. Dominant-class fractions
range up to 68\% (client~8: class~4 accounts for 3,046 of 4,477 samples),
confirming substantial but not extreme label heterogeneity. On \mbox{CIFAR-10}
(50,000 training samples, 10 classes), the same Dirichlet($\alpha=0.5$) process
yields client sizes ranging from 891 to 3,498 (mean $=2{,}500$,
std $=779.6$), with a mean KL divergence from the uniform class distribution of
0.666 (max 1.039) and a mean of 0.3 absent classes per client (maximum 2,
affecting clients~4 and~18). Per-client, per-class counts are reported in
Table~\ref{tab:cifar10_partition}.

The same set of hyperparameters is used for both partitioning schemes. Each
client computes gradients over its entire local dataset (full-batch), following
the full-gradient protocol of \citet{krouka2025communication}. Training proceeds
for $T = 70$ federated rounds with a single local iteration per round (except
DP-FedAvg and DP-SCAFFOLD, where local iterations are tuned as a
hyperparameter), representing a communication-constrained scenario typical of
cross-device federated learning.

\textbf{Privacy Parameters:}
We evaluate across four privacy regimes:
\[
  \varepsilon \in \{0.5,\,1,\,5,\,10\}
  \quad\text{with}\quad \delta = 10^{-5},
\]
spanning from stringent ($\varepsilon = 0.5$) to moderate ($\varepsilon = 10$)
privacy guarantees. The noise multiplier $\sigma_g$ for each $\varepsilon$ is
calibrated using the closed-form hockey-stick divergence formula for Gaussian
mechanisms \citep{balle2018improving}, with tight $T$-fold composition bounds
following \citet{zhu2022optimal}, to achieve the target $(\varepsilon,
\delta)$-DP budget after $T = 70$ rounds. Under full client participation the
per-round Gaussian noise scales as $\sigma_g = \Theta(\varepsilon^{-1}\sqrt{T})$,
making tight privacy budgets ($\varepsilon \le 1$) significantly more
challenging for convergence.

\textbf{Baselines:}
We compare DP-FedSOFIM against eight baselines.
\textbf{DP-FedGD} \citep{mcmahan2017learning,krouka2025communication}, the
standard first-order baseline applying per-example gradient clipping followed
by Gaussian noise addition with a single local step;
\textbf{DP-FedAvg} \citep{mcmahan2017learning,abadi2016deep}, which extends
DP-FedGD to multiple local SGD steps before aggregation;
\textbf{DP-FedFC} \citep{krouka2025communication}, a second-order method that
computes local feature covariance matrices as an approximation to the Fisher
information matrix, incurring $O(d^2)$ client-side memory;
\textbf{DP-SCAFFOLD} \citep{karimireddy2020scaffold}, a variance-reduction
method using control variates to correct for client drift;
\textbf{DP-FedAdam} \citep{reddi2020adaptive}, which applies an Adam-style
adaptive server optimizer atop per-client DP gradient aggregation;
\textbf{DP-FedYogi} \citep{reddi2020adaptive}, which replaces the Adam server
step with a Yogi update, improving stability under the large gradient variance
induced by DP noise;
\textbf{DP-FTRL} \citep{kairouz2021practical}, which uses a tree-aggregation
protocol to accumulate gradients across rounds, enabling tighter privacy
accounting under continual observation; and
\textbf{DP-AdaFedProx} \citep{sahoo2024adafedprox}, a proximal federated method
that adapts per-coordinate learning rates to the heterogeneous client
landscape.
All baselines are calibrated to the same $(\varepsilon, \delta)$ privacy budget
as DP-FedSOFIM using identical privacy accounting.

\textbf{Hyperparameters:}
DP-FedSOFIM tunes regularization parameter $\rho$, EMA momentum $\beta$, and
learning rate $\eta$ per privacy regime, with bias correction enabled for
$\varepsilon \le 1$ where noise-induced curvature bias is most pronounced.

Hyperparameters are determined via a two-stage grid search. A coarse search
over a wide logarithmically-spaced grid identifies the promising region; a
subsequent fine search with finer resolution around the best coarse result
yields the final configuration. Both stages use a fixed seed, $T = 50$ rounds,
and are conducted on a held-out validation split to avoid overfitting to test
performance. The configuration yielding the best final validation accuracy is
selected independently for each method, dataset, and privacy regime, and the
same hyperparameters are used for both IID and non-IID partitions. Search grids
are detailed in Table~\ref{tab:hyperparams}.

For the tightest privacy regime ($\varepsilon = 0.5$), we additionally employ a
\textit{warmup} strategy for DP-FedSOFIM: the Sherman--Morrison preconditioning
step is disabled for the first 20 rounds, during which the server uses EMA-only
updates. This delayed activation prevents the curvature estimate from being
corrupted by the large Gaussian noise early in training, after which the
signal-to-noise ratio is sufficient for reliable second-order information to
accumulate. The warmup duration (20 rounds) is treated as a fixed implementation
choice rather than a tuned hyperparameter, and is applied consistently across
both datasets and both backbones.

\begin{table}[htbp]
  \caption{Hyperparameter search grids for all methods, determined via a
two-stage coarse-to-fine grid search. Each stage is run for $T = 50$ rounds
with a fixed seed; the configuration achieving the best final validation
accuracy is carried forward. Parameters fixed across all methods:
$C_g = 10$, $T = 70$ rounds, $\delta = 10^{-5}$.}
  \label{tab:hyperparams}
  \begin{center}
    \begin{small}
      \begin{tabular}{llc}
        \toprule
        Method & Hyperparameter & Search Grid (Coarse $\to$ Fine) \\
        \midrule
        \multirow{1}{*}{DP-FedGD}
          & Learning rate $\eta$
          & $\{10^{-4}, 10^{-3}, 0.01, 0.1, 1, 5, 10\}$
            $\to \{0.03, 0.05, 0.08, 0.1, 0.3\}$ \\
        \midrule
        \multirow{2}{*}{DP-FedAvg}
          & Server learning rate $\eta_s$
          & $\{10^{-3}, 0.01, 0.1, 0.5, 1\}$
            $\to \{0.3, 0.5, 0.6, 0.8, 1.0\}$ \\
          & Local iterations
          & $\{1, 3, 5, 10, 20\} \to \{1, 2, 5, 7, 10\}$ \\
        \midrule
        \multirow{2}{*}{DP-FedFC}
          & Learning rate $\eta$
          & $\{10^{-4}, 10^{-3}, 0.01, 0.1, 1, 5\}$
            $\to \{0.1, 0.15, 0.3, 0.4, 0.5, 2\}$ \\
          & Curvature scale $\gamma$
          & $\{0.01, 0.1, 1, 10, 100\} \to \{0.5, 1, 2, 5, 10\}$ \\
        \midrule
        \multirow{3}{*}{DP-SCAFFOLD}
          & Server learning rate $\eta_s$
          & $\{0.01, 0.1, 0.5, 1, 5\} \to \{0.3, 0.5, 0.7, 1, 2\}$ \\
          & Client learning rate $\eta_c$
          & $\{0.001, 0.01, 0.1, 0.5\} \to \{0.05, 0.1, 0.2, 0.3\}$ \\
          & Local steps
          & $\{1, 5, 10\} \to \{1, 3, 5, 7, 10\}$ \\
        \midrule
        \multirow{3}{*}{DP-FedAdam}
          & Learning rate $\eta$
          & $\{10^{-4}, 10^{-3}, 0.01, 0.1, 1\}$
            $\to \{0.01, 0.02, 0.05, 0.1\}$ \\
          & $(\beta_1, \beta_2)$
          & $(0, 0.8, 0.9){\times}(0.8, 0.9, 0.999)$
            $\to (0.8, 0.9, 0.95){\times}(0.8, 0.999, 0.9999)$ \\
          & Server $\epsilon$
          & $\{10^{-5}, 10^{-3}, 0.01, 0.1\}$
            $\to \{10^{-5}, 0.01, 0.05, 0.1\}$ \\
        \midrule
        \multirow{3}{*}{DP-FedYogi}
          & Learning rate $\eta$
          & $\{10^{-4}, 10^{-3}, 0.01, 0.1, 1\}$
            $\to \{0.01, 0.02, 0.05, 0.1, 0.2\}$ \\
          & $(\beta_1, \beta_2)$
          & $(0, 0.5, 0.9){\times}(0.5, 0.9, 0.999)$
            $\to (0.5, 0.9, 0.95){\times}(0.5, 0.9, 0.999)$ \\
          & Server $\epsilon$
          & $\{10^{-5}, 10^{-3}, 0.01, 0.1\}$
            $\to \{10^{-5}, 0.01, 0.05, 0.1\}$ \\
        \midrule
        \multirow{1}{*}{DP-FTRL}
          & Learning rate $\eta$
          & $\{0.5, 1, 3, 5, 7, 10\}$
            $\to \{3, 3.5, 5, 7, 8, 9\}$ \\
        \midrule
        \multirow{4}{*}{DP-AdaFedProx}
  & Server learning rate $\eta_s$
  & $\{0.1, 0.3, 0.5, 1, 3\}$
    $\to \{0.5, 0.6, 0.8, 1.0, 1.5\}$ \\
  & Client learning rate $\eta_c$
  & $\{0.001, 0.01, 0.05, 0.1, 0.5\}$
    $\to \{0.05, 0.1, 0.2, 0.3\}$ \\
  & Initial proximal term $\mu_0$
  & $\{0.001, 0.01, 0.1, 0.5, 1\}$
    $\to \{0.01, 0.05, 0.1, 0.2\}$ \\
  & Local steps
  & $\{1, 3, 5, 7\} \to \{1, 3, 5\}$ \\
        \midrule
        \multirow{3}{*}{DP-FedSOFIM}
          & Learning rate $\eta$
          & $\{10^{-3}, 0.01, 0.1, 1, 5\} \to \{0.1, 0.2, 0.5, 1, 3, 4\}$ \\
          & Regularization $\rho$
          & $\{0.01, 0.1, 1, 5, 10\} \to \{0.5, 1, 5, 10, 20\}$ \\
          & EMA momentum $\beta$
          & $\{0.8, 0.9, 0.99\} \to \{0.8, 0.85, 0.9, 0.95\}$ \\
        \bottomrule
      \end{tabular}
    \end{small}
  \end{center}
\end{table}

\subsection{Results}
\label{sec:results}

Tables~\ref{tab:cifar10}--\ref{tab:pathmnist_vgg} and
Figures~\ref{fig:cifar10}--\ref{fig:pathmnist_vgg} present test accuracy
trajectories across all privacy regimes for CIFAR-10 and PathMNIST under both
ResNet-20 and VGG-16 backbones. Statistical significance of pairwise
differences at round 70 is assessed via McNemar's test
(Appendix~\ref{app:mcnemar}). We organize our analysis around five key
phenomena observed in the results.

{\small
\begin{longtable}{@{\hspace{0pt}}lccccccc@{\hspace{0pt}}}
    \caption{Test accuracy (\%) on CIFAR-10 with ResNet-20 backbone across federated rounds for different privacy budgets. Results shown at 10-round intervals (mean $\pm$ std over 3 seeds). Best result per privacy regime is in \textbf{bold}.}
    \label{tab:cifar10}\\

    \toprule
    Method & \multicolumn{7}{c}{Federated Round} \\
    \cmidrule(lr){2-8}
    & 10 & 20 & 30 & 40 & 50 & 60 & 70 \\
    \midrule
    \endfirsthead

    \multicolumn{8}{l}{\small\textit{Continued from previous page}}\\
    \toprule
    Method & \multicolumn{7}{c}{Federated Round} \\
    \cmidrule(lr){2-8}
    & 10 & 20 & 30 & 40 & 50 & 60 & 70 \\
    \midrule
    \endhead

    \midrule
    \multicolumn{8}{r}{\small\textit{Continued on next page}}\\
    \endfoot

    \bottomrule
    \endlastfoot

    \multicolumn{8}{l}{\textit{\(\varepsilon=0.5\)}} \\
    DP-FedGD      & 33.98$\pm$2.76 & 45.55$\pm$1.21 & 50.15$\pm$1.10 & 53.88$\pm$1.13 & 55.60$\pm$1.04 & 57.01$\pm$0.23 & 58.37$\pm$0.68 \\
    DP-FedAvg     & 33.98$\pm$2.76 & 45.55$\pm$1.21 & 50.15$\pm$1.10 & 53.88$\pm$1.13 & 55.60$\pm$1.04 & 57.01$\pm$0.23 & 58.37$\pm$0.68 \\
    DP-FedFC      & 23.37$\pm$5.03 & 34.86$\pm$3.10 & 43.64$\pm$2.18 & 48.90$\pm$0.57 & 52.33$\pm$1.21 & 53.92$\pm$0.72 & 55.33$\pm$1.28 \\
    DP-SCAFFOLD   & 30.45$\pm$2.81 & 38.29$\pm$1.24 & 43.49$\pm$1.90 & 45.52$\pm$0.42 & 47.76$\pm$1.03 & 48.31$\pm$1.18 & 49.78$\pm$1.53 \\
    DP-FedAdam    & 34.22$\pm$3.59 & 44.62$\pm$1.94 & 50.75$\pm$1.34 & 54.43$\pm$0.74 & 56.76$\pm$0.24 & 57.74$\pm$0.18 & \textbf{59.23$\pm$0.53} \\
    DP-FedYogi    & 29.77$\pm$3.43 & 41.35$\pm$3.01 & 47.80$\pm$1.20 & 52.69$\pm$0.85 & 55.38$\pm$0.40 & 56.90$\pm$0.27 & 58.60$\pm$0.24 \\
    DP-FTRL       & 31.82$\pm$1.81 & \textbf{53.91$\pm$0.33} & \textbf{54.43$\pm$0.62} & \textbf{55.35$\pm$0.39} & 55.34$\pm$0.49 & 55.63$\pm$0.19 & 55.73$\pm$0.10 \\
    DP-AdaFedProx & 33.98$\pm$2.76 & 45.55$\pm$1.21 & 50.15$\pm$1.10 & 53.88$\pm$1.13 & 55.60$\pm$1.04 & 57.01$\pm$0.23 & 58.37$\pm$0.68 \\
    DP-FedSOFIM   & \textbf{39.56$\pm$2.26} & 47.47$\pm$1.15 & 52.84$\pm$0.83 & 55.41$\pm$0.84 & \textbf{57.20$\pm$1.41} & \textbf{58.62$\pm$1.19} & 58.86$\pm$1.46 \\
    \midrule
    \multicolumn{8}{l}{\textit{\(\varepsilon=1\)}} \\
    DP-FedGD      & 39.53$\pm$2.50 & 50.26$\pm$0.69 & 55.47$\pm$0.56 & 58.55$\pm$0.68 & 60.68$\pm$0.28 & 61.82$\pm$0.42 & 62.75$\pm$0.26 \\
    DP-FedAvg     & 39.53$\pm$2.51 & 50.25$\pm$0.69 & 55.46$\pm$0.56 & 58.54$\pm$0.68 & 60.68$\pm$0.27 & 61.84$\pm$0.41 & 62.74$\pm$0.26 \\
    DP-FedFC      & 51.20$\pm$1.10 & 57.04$\pm$0.33 & 59.44$\pm$0.48 & 60.47$\pm$0.34 & 61.60$\pm$0.96 & 61.88$\pm$0.74 & 61.57$\pm$0.72 \\
    DP-SCAFFOLD   & 41.99$\pm$1.27 & 49.58$\pm$0.92 & 53.77$\pm$0.16 & 54.20$\pm$0.97 & 54.96$\pm$1.54 & 56.03$\pm$1.00 & 56.03$\pm$0.72 \\
    DP-FedAdam    & 49.15$\pm$2.01 & 58.43$\pm$0.29 & 60.55$\pm$0.71 & 61.88$\pm$0.48 & 62.45$\pm$0.63 & 62.54$\pm$0.57 & 62.49$\pm$0.74 \\
    DP-FedYogi    & 40.18$\pm$4.62 & 53.93$\pm$0.66 & 57.00$\pm$0.81 & 58.00$\pm$0.72 & 58.15$\pm$1.06 & 58.37$\pm$1.44 & 57.45$\pm$1.38 \\
    DP-FTRL       & 29.11$\pm$7.18 & 53.44$\pm$0.48 & 58.23$\pm$0.27 & 58.82$\pm$0.29 & 58.93$\pm$0.16 & 59.19$\pm$0.37 & 59.23$\pm$0.24 \\
    DP-AdaFedProx & 49.12$\pm$2.94 & 55.18$\pm$0.56 & 56.91$\pm$0.78 & 58.28$\pm$0.84 & 58.98$\pm$0.84 & 59.45$\pm$0.18 & 59.96$\pm$0.32 \\
    DP-FedSOFIM   & \textbf{51.44$\pm$0.84} & \textbf{58.59$\pm$0.23} & \textbf{60.89$\pm$0.17} & \textbf{62.37$\pm$0.74} & \textbf{62.83$\pm$0.85} & \textbf{63.09$\pm$0.73} & \textbf{63.43$\pm$0.62} \\
    \midrule
    \multicolumn{8}{l}{\textit{\(\varepsilon=5\)}} \\
    DP-FedGD      & 40.74$\pm$2.17 & 51.80$\pm$0.38 & 56.85$\pm$0.09 & 59.98$\pm$0.69 & 61.85$\pm$0.45 & 63.21$\pm$0.62 & 64.10$\pm$0.58 \\
    DP-FedAvg     & 40.75$\pm$2.16 & 51.79$\pm$0.38 & 56.85$\pm$0.09 & 59.97$\pm$0.68 & 61.85$\pm$0.45 & 63.20$\pm$0.61 & 64.09$\pm$0.58 \\
    DP-FedFC      & 46.56$\pm$3.37 & 49.58$\pm$0.98 & 55.42$\pm$1.17 & 58.43$\pm$0.77 & 60.83$\pm$0.52 & 62.40$\pm$0.47 & 63.51$\pm$0.71 \\
    DP-SCAFFOLD   & 55.66$\pm$0.36 & 62.19$\pm$0.74 & 64.29$\pm$0.62 & 65.18$\pm$0.16 & 65.67$\pm$0.22 & 65.96$\pm$0.24 & 66.42$\pm$0.10 \\
    DP-FedAdam    & 43.92$\pm$1.63 & 59.76$\pm$1.34 & 63.64$\pm$1.06 & 66.09$\pm$0.52 & 67.11$\pm$0.06 & 67.15$\pm$0.20 & 67.28$\pm$0.20 \\
    DP-FedYogi    & 55.37$\pm$0.43 & 63.72$\pm$0.40 & 66.06$\pm$0.33 & 67.06$\pm$0.13 & \textbf{67.35$\pm$0.33} & \textbf{67.69$\pm$0.23} & \textbf{67.81$\pm$0.20} \\
    DP-FTRL       & 28.77$\pm$7.55 & 42.13$\pm$2.38 & 55.67$\pm$0.49 & 61.19$\pm$0.29 & 61.52$\pm$0.40 & 61.72$\pm$0.46 & 61.78$\pm$0.41 \\
    DP-AdaFedProx & 57.53$\pm$0.81 & 63.22$\pm$0.49 & 65.01$\pm$0.09 & 65.89$\pm$0.17 & 66.45$\pm$0.07 & 66.85$\pm$0.19 & 67.02$\pm$0.07 \\
    DP-FedSOFIM   & \textbf{61.05$\pm$1.14} & \textbf{65.64$\pm$0.31} & \textbf{66.89$\pm$0.38} & \textbf{67.37$\pm$0.29} & 67.25$\pm$0.56 & 67.20$\pm$0.55 & 67.52$\pm$0.33 \\
    \midrule
    \multicolumn{8}{l}{\textit{\(\varepsilon=10\)}} \\
    DP-FedGD      & 40.79$\pm$2.13 & 51.86$\pm$0.42 & 56.94$\pm$0.10 & 59.97$\pm$0.59 & 61.94$\pm$0.49 & 63.17$\pm$0.56 & 64.10$\pm$0.52 \\
    DP-FedAvg     & 40.79$\pm$2.14 & 51.84$\pm$0.41 & 56.94$\pm$0.10 & 59.97$\pm$0.59 & 61.92$\pm$0.49 & 63.18$\pm$0.57 & 64.10$\pm$0.52 \\
    DP-FedFC      & 40.74$\pm$2.17 & 51.67$\pm$0.45 & 57.00$\pm$0.22 & 60.03$\pm$0.56 & 61.88$\pm$0.56 & 63.14$\pm$0.56 & 64.10$\pm$0.51 \\
    DP-SCAFFOLD   & 61.36$\pm$0.29 & 65.24$\pm$0.46 & 66.29$\pm$0.33 & 67.00$\pm$0.13 & 67.22$\pm$0.16 & 67.60$\pm$0.19 & 67.85$\pm$0.17 \\
    DP-FedAdam    & 43.51$\pm$1.73 & 57.62$\pm$1.46 & 63.46$\pm$0.45 & 65.02$\pm$0.41 & 66.91$\pm$0.43 & 67.51$\pm$0.29 & 67.77$\pm$0.27 \\
    DP-FedYogi    & 44.04$\pm$1.32 & 58.01$\pm$1.48 & 62.90$\pm$0.34 & 64.78$\pm$0.45 & 66.60$\pm$0.33 & 67.34$\pm$0.15 & 67.72$\pm$0.18 \\
    DP-FTRL       & 27.12$\pm$6.46 & 45.99$\pm$3.70 & 53.02$\pm$2.27 & 61.17$\pm$1.11 & 63.17$\pm$0.54 & 63.21$\pm$0.44 & 63.21$\pm$0.43 \\
    DP-AdaFedProx & 58.19$\pm$0.54 & 63.73$\pm$0.34 & 65.51$\pm$0.11 & 66.32$\pm$0.22 & 66.86$\pm$0.12 & 67.36$\pm$0.26 & 67.42$\pm$0.07 \\
    DP-FedSOFIM   & \textbf{61.52$\pm$1.07} & \textbf{66.06$\pm$0.13} & \textbf{67.52$\pm$0.27} & \textbf{68.04$\pm$0.19} & \textbf{68.30$\pm$0.24} & \textbf{68.40$\pm$0.22} & \textbf{68.56$\pm$0.05} \\

\end{longtable}}

{\small
\begin{longtable}{@{\hspace{0pt}}lccccccc@{\hspace{0pt}}}
    \caption{Test accuracy (\%) on PathMNIST with ResNet-20 backbone across federated rounds for different privacy budgets. Results shown at 10-round intervals (mean $\pm$ std over 3 seeds). Best result per privacy regime is in \textbf{bold}.}
    \label{tab:pathmnist}\\

    \toprule
    Method & \multicolumn{7}{c}{Federated Round} \\
    \cmidrule(lr){2-8}
    & 10 & 20 & 30 & 40 & 50 & 60 & 70 \\
    \midrule
    \endfirsthead

    \multicolumn{8}{l}{\small\textit{Continued from previous page}}\\
    \toprule
    Method & \multicolumn{7}{c}{Federated Round} \\
    \cmidrule(lr){2-8}
    & 10 & 20 & 30 & 40 & 50 & 60 & 70 \\
    \midrule
    \endhead

    \midrule
    \multicolumn{8}{r}{\small\textit{Continued on next page}}\\
    \endfoot

    \bottomrule
    \endlastfoot

    \multicolumn{8}{l}{\textit{\(\varepsilon=0.5\)}} \\
    DP-FedGD      & 50.79$\pm$0.50 & 57.48$\pm$1.96 & 59.94$\pm$1.93 & 62.35$\pm$0.68 & 63.70$\pm$0.36 & 63.79$\pm$0.69 & 64.50$\pm$0.73 \\
    DP-FedAvg     & 47.85$\pm$2.96 & 55.46$\pm$3.08 & 58.56$\pm$1.82 & 61.15$\pm$1.17 & 62.56$\pm$0.90 & 63.81$\pm$0.78 & 64.56$\pm$0.75 \\
    DP-FedFC      & 55.15$\pm$1.66 & 58.89$\pm$1.24 & 62.67$\pm$0.54 & 64.16$\pm$0.75 & 64.20$\pm$1.16 & 64.66$\pm$1.33 & 64.43$\pm$2.12 \\
    DP-SCAFFOLD   & 49.36$\pm$1.03 & 55.01$\pm$2.78 & 56.42$\pm$4.00 & 59.33$\pm$1.50 & 60.55$\pm$1.30 & 59.94$\pm$1.99 & 61.05$\pm$1.82 \\
    DP-FedAdam    & 53.82$\pm$1.42 & 60.48$\pm$1.21 & 62.81$\pm$1.21 & \textbf{64.46$\pm$1.29} & \textbf{65.23$\pm$0.97} & \textbf{65.27$\pm$1.17} & 65.48$\pm$1.31 \\
    DP-FedYogi    & 49.39$\pm$3.52 & 58.08$\pm$2.31 & 61.95$\pm$1.16 & 63.52$\pm$0.93 & 64.66$\pm$1.06 & 65.26$\pm$1.12 & 65.41$\pm$1.34 \\
    DP-FTRL       & 33.15$\pm$3.94 & 47.36$\pm$0.32 & 58.68$\pm$2.64 & 61.36$\pm$2.59 & 62.91$\pm$0.33 & 63.29$\pm$0.31 & 63.17$\pm$0.15 \\
    DP-AdaFedProx & 47.85$\pm$2.96 & 55.46$\pm$3.08 & 58.56$\pm$1.82 & 61.15$\pm$1.17 & 62.56$\pm$0.90 & 63.81$\pm$0.78 & 64.56$\pm$0.75 \\
    DP-FedSOFIM   & \textbf{56.13$\pm$2.01} & \textbf{60.67$\pm$2.09} & \textbf{63.66$\pm$1.02} & 64.24$\pm$1.31 & 65.16$\pm$1.59 & 64.87$\pm$1.76 & \textbf{66.26$\pm$1.36} \\
    \midrule
    \multicolumn{8}{l}{\textit{\(\varepsilon=1\)}} \\
    DP-FedGD      & 51.26$\pm$1.49 & 58.18$\pm$1.40 & 61.22$\pm$0.87 & 63.31$\pm$0.43 & 64.59$\pm$0.12 & 65.10$\pm$0.29 & 65.68$\pm$0.18 \\
    DP-FedAvg     & 51.00$\pm$2.58 & 58.10$\pm$1.85 & 61.23$\pm$0.95 & 63.41$\pm$0.76 & 64.60$\pm$0.25 & 65.73$\pm$0.72 & 66.19$\pm$0.78 \\
    DP-FedFC      & 55.10$\pm$1.69 & 63.03$\pm$1.51 & 64.73$\pm$1.66 & 66.20$\pm$1.25 & 65.26$\pm$0.83 & 65.42$\pm$0.50 & 65.65$\pm$1.42 \\
    DP-SCAFFOLD   & 55.32$\pm$2.35 & 58.46$\pm$1.58 & 62.48$\pm$1.19 & 63.37$\pm$2.54 & 64.92$\pm$0.98 & 64.77$\pm$1.22 & 64.97$\pm$1.12 \\
    DP-FedAdam    & 51.77$\pm$4.08 & 61.33$\pm$1.22 & 63.87$\pm$0.29 & 65.31$\pm$0.67 & 67.14$\pm$0.38 & 67.53$\pm$0.60 & 68.07$\pm$0.36 \\
    DP-FedYogi    & 49.70$\pm$3.92 & 60.56$\pm$2.89 & 62.60$\pm$2.71 & 64.61$\pm$1.10 & 65.45$\pm$0.50 & 65.09$\pm$0.81 & 65.04$\pm$1.77 \\
    DP-FTRL       & 34.34$\pm$6.71 & 52.16$\pm$9.81 & 59.09$\pm$2.88 & 61.81$\pm$1.86 & 63.21$\pm$0.19 & 63.42$\pm$0.22 & 63.35$\pm$0.03 \\
    DP-AdaFedProx & \textbf{59.81$\pm$1.07} & 63.22$\pm$0.05 & 65.36$\pm$0.31 & 66.64$\pm$0.34 & 66.09$\pm$0.60 & 66.67$\pm$0.36 & 67.14$\pm$0.72 \\
    DP-FedSOFIM   & 58.49$\pm$3.13 & \textbf{64.67$\pm$0.97} & \textbf{66.98$\pm$1.29} & \textbf{67.32$\pm$1.18} & \textbf{67.47$\pm$0.94} & \textbf{67.88$\pm$0.95} & \textbf{68.88$\pm$0.62} \\
    \midrule
    \multicolumn{8}{l}{\textit{\(\varepsilon=5\)}} \\
    DP-FedGD      & 51.51$\pm$2.57 & 58.53$\pm$1.17 & 61.88$\pm$0.42 & 63.75$\pm$0.06 & 65.15$\pm$0.26 & 65.96$\pm$0.18 & 66.49$\pm$0.20 \\
    DP-FedAvg     & 51.50$\pm$2.58 & 58.52$\pm$1.16 & 61.86$\pm$0.42 & 63.77$\pm$0.05 & 65.16$\pm$0.27 & 65.94$\pm$0.19 & 66.50$\pm$0.18 \\
    DP-FedFC      & 43.48$\pm$0.53 & 55.09$\pm$1.29 & 52.10$\pm$6.59 & 56.43$\pm$4.22 & 56.48$\pm$4.71 & 60.87$\pm$1.72 & 62.04$\pm$3.69 \\
    DP-SCAFFOLD   & 60.73$\pm$1.13 & 65.79$\pm$0.22 & 67.64$\pm$0.20 & 68.60$\pm$0.45 & 69.20$\pm$0.26 & 69.43$\pm$0.30 & 69.77$\pm$0.37 \\
    DP-FedAdam    & 47.75$\pm$4.17 & 58.66$\pm$1.17 & 65.41$\pm$1.52 & 68.43$\pm$0.27 & 70.03$\pm$0.52 & 70.94$\pm$0.43 & 71.20$\pm$0.40 \\
    DP-FedYogi    & 59.09$\pm$0.38 & 66.43$\pm$1.04 & 68.43$\pm$0.75 & 69.53$\pm$0.66 & 70.39$\pm$0.06 & 70.66$\pm$0.17 & 71.03$\pm$0.22 \\
    DP-FTRL       & 28.48$\pm$2.75 & 49.88$\pm$2.16 & 54.25$\pm$3.45 & 59.81$\pm$2.74 & 63.55$\pm$0.93 & 65.09$\pm$0.69 & 65.20$\pm$0.19 \\
    DP-AdaFedProx & 63.21$\pm$0.03 & 66.20$\pm$0.14 & 67.80$\pm$0.30 & 68.54$\pm$0.33 & 68.66$\pm$0.16 & 69.30$\pm$0.16 & 69.64$\pm$0.20 \\
    DP-FedSOFIM   & \textbf{64.22$\pm$2.23} & \textbf{67.34$\pm$1.66} & \textbf{68.89$\pm$0.71} & \textbf{70.30$\pm$0.51} & \textbf{70.63$\pm$0.39} & \textbf{71.10$\pm$0.31} & \textbf{71.54$\pm$0.31} \\
    \midrule
    \multicolumn{8}{l}{\textit{\(\varepsilon=10\)}} \\
    DP-FedGD      & 51.49$\pm$2.64 & 58.51$\pm$1.23 & 61.86$\pm$0.44 & 63.89$\pm$0.10 & 65.17$\pm$0.22 & 65.98$\pm$0.28 & 66.62$\pm$0.23 \\
    DP-FedAvg     & 51.48$\pm$2.65 & 58.51$\pm$1.22 & 61.86$\pm$0.42 & 63.89$\pm$0.08 & 65.17$\pm$0.22 & 65.98$\pm$0.27 & 66.64$\pm$0.20 \\
    DP-FedFC      & 51.44$\pm$2.86 & 58.30$\pm$1.28 & 61.87$\pm$0.27 & 63.90$\pm$0.07 & 65.13$\pm$0.21 & 65.99$\pm$0.29 & 66.59$\pm$0.24 \\
    DP-SCAFFOLD   & 62.47$\pm$0.43 & 66.44$\pm$0.40 & 68.10$\pm$0.22 & 68.91$\pm$0.42 & 69.35$\pm$0.32 & 69.84$\pm$0.31 & 70.13$\pm$0.36 \\
    DP-FedAdam    & 56.77$\pm$4.06 & 63.37$\pm$3.97 & 66.92$\pm$0.72 & 67.72$\pm$0.80 & 70.03$\pm$0.61 & 70.11$\pm$0.32 & 70.97$\pm$0.27 \\
    DP-FedYogi    & 56.56$\pm$3.99 & 63.44$\pm$3.65 & 66.20$\pm$2.22 & 68.50$\pm$0.84 & 70.85$\pm$0.22 & 69.58$\pm$0.77 & 70.45$\pm$0.77 \\
    DP-FTRL       & 24.22$\pm$7.56 & 45.99$\pm$4.16 & 52.70$\pm$6.29 & 61.60$\pm$1.31 & 62.63$\pm$2.38 & 63.69$\pm$1.08 & 66.08$\pm$0.31 \\
    DP-AdaFedProx & 63.50$\pm$0.19 & 66.50$\pm$0.18 & 67.87$\pm$0.32 & 68.63$\pm$0.40 & 68.99$\pm$0.34 & 69.59$\pm$0.22 & 69.81$\pm$0.28 \\
    DP-FedSOFIM   & \textbf{64.25$\pm$2.25} & \textbf{67.32$\pm$1.91} & \textbf{68.94$\pm$0.82} & \textbf{70.45$\pm$0.32} & \textbf{70.92$\pm$0.39} & \textbf{71.50$\pm$0.31} & \textbf{71.78$\pm$0.28} \\

\end{longtable}}

\begin{figure}[t]
    \centering
    \includegraphics[width=\textwidth]{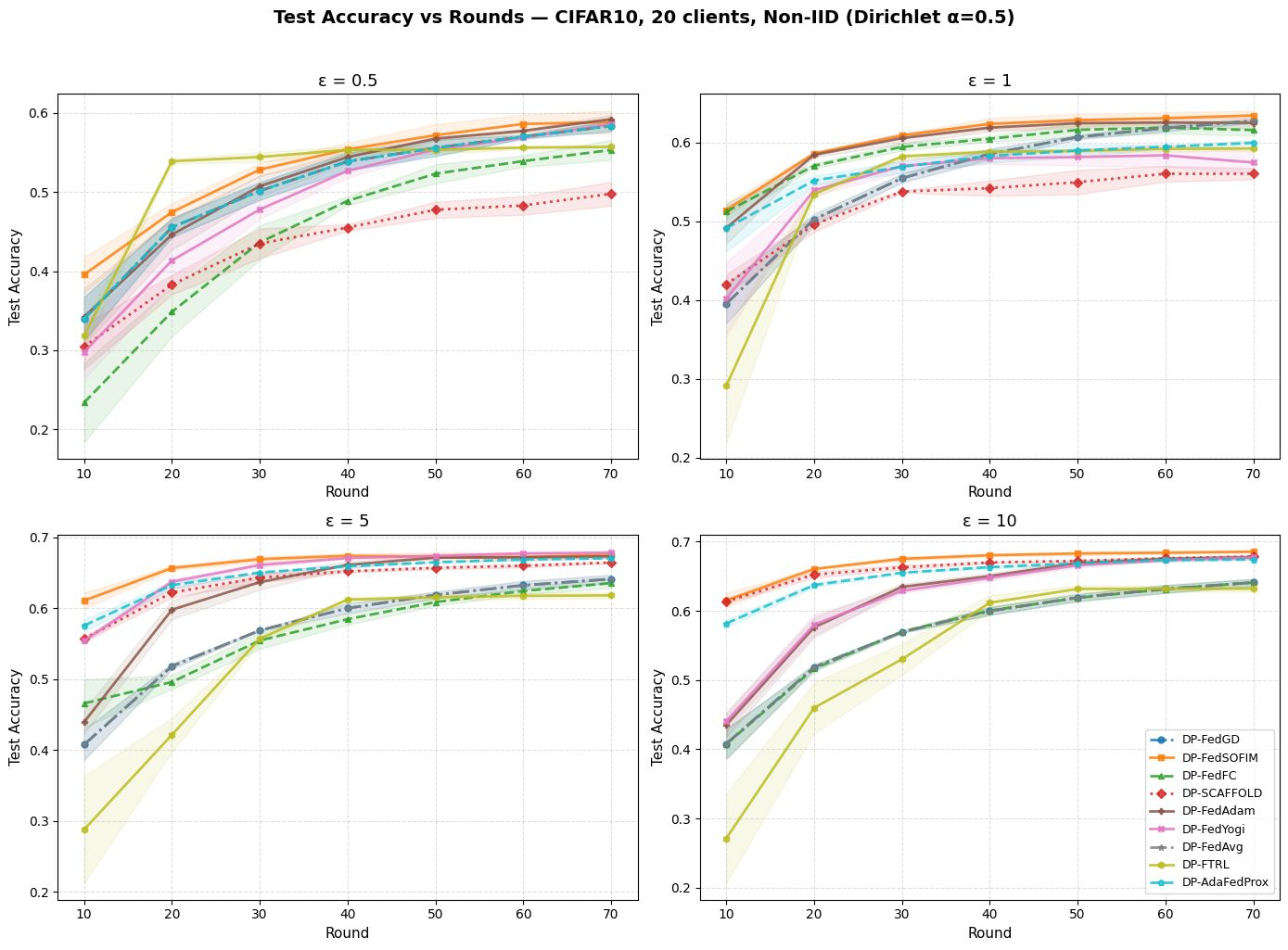}
    \caption{Convergence trajectories on CIFAR-10, ResNet-20 backbone,
    20 clients, Non-IID (Dirichlet $\alpha=0.5$) across privacy regimes.}
    \label{fig:cifar10}
\end{figure}

\begin{figure}[t]
    \centering
    \includegraphics[width=\textwidth]{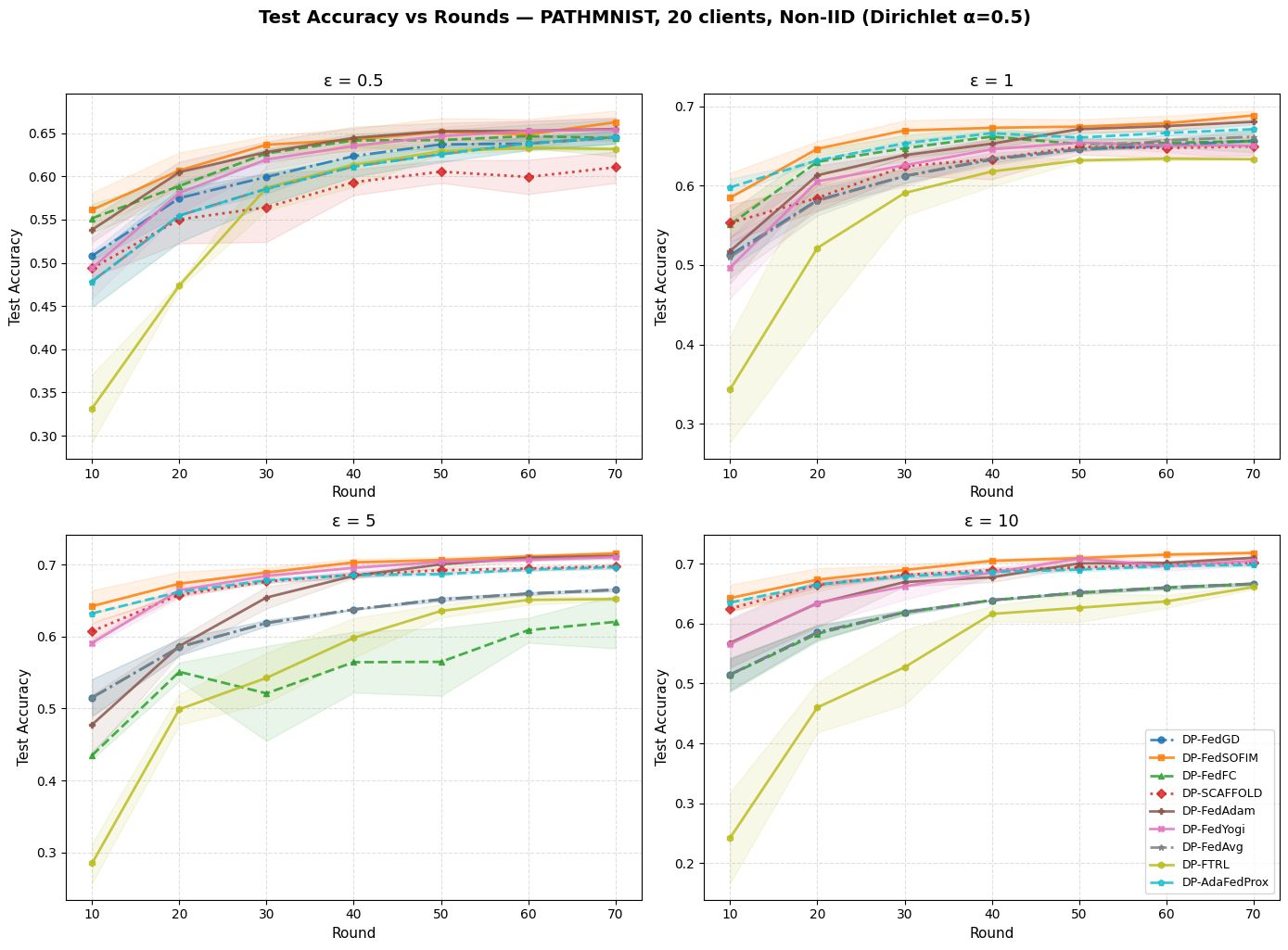}
    \caption{Convergence trajectories on PathMNIST, ResNet-20 backbone,
    20 clients, Non-IID (Dirichlet $\alpha=0.5$) across privacy regimes.}
    \label{fig:pathmnist}
\end{figure}

{\small
\begin{longtable}{@{\hspace{0pt}}lccccccc@{\hspace{0pt}}}
    \caption{Test accuracy (\%) on CIFAR-10 with VGG-16 backbone across federated rounds for different privacy budgets. Results shown at 10-round intervals (mean $\pm$ std over 3 seeds). Best result per privacy regime is in \textbf{bold}.}
    \label{tab:cifar10_vgg}\\

    \toprule
    Method & \multicolumn{7}{c}{Federated Round} \\
    \cmidrule(lr){2-8}
    & 10 & 20 & 30 & 40 & 50 & 60 & 70 \\
    \midrule
    \endfirsthead

    \multicolumn{8}{l}{\small\textit{Continued from previous page}}\\
    \toprule
    Method & \multicolumn{7}{c}{Federated Round} \\
    \cmidrule(lr){2-8}
    & 10 & 20 & 30 & 40 & 50 & 60 & 70 \\
    \midrule
    \endhead

    \midrule
    \multicolumn{8}{r}{\small\textit{Continued on next page}}\\
    \endfoot

    \bottomrule
    \endlastfoot

    \multicolumn{8}{l}{\textit{\(\varepsilon=0.5\)}} \\
    DP-FedGD      & 40.39$\pm$1.32 & 46.63$\pm$1.06 & 48.44$\pm$0.89 & 49.87$\pm$0.86 & 50.87$\pm$0.74 & 51.50$\pm$0.54 & 51.75$\pm$0.33 \\
    DP-FedAvg     & 41.97$\pm$1.32 & 47.20$\pm$0.92 & 48.75$\pm$0.97 & 50.13$\pm$0.88 & 51.03$\pm$0.68 & 51.61$\pm$0.55 & 51.92$\pm$0.31 \\
    DP-FedFC      & 36.63$\pm$3.70 & 43.96$\pm$2.09 & 46.19$\pm$1.26 & 47.51$\pm$1.48 & 48.33$\pm$1.15 & 48.91$\pm$1.19 & 48.96$\pm$1.27 \\
    DP-SCAFFOLD   & 37.65$\pm$3.03 & 39.46$\pm$3.60 & 43.27$\pm$2.15 & 44.74$\pm$0.81 & 44.06$\pm$1.52 & 44.52$\pm$0.90 & 45.69$\pm$1.49 \\
    DP-FedAdam    & 36.76$\pm$2.65 & 44.95$\pm$2.04 & 48.15$\pm$0.94 & 49.25$\pm$0.91 & 50.68$\pm$0.60 & 51.30$\pm$0.59 & 51.83$\pm$0.38 \\
    DP-FedYogi    & 34.05$\pm$3.46 & 42.56$\pm$2.30 & 46.88$\pm$1.06 & 48.74$\pm$0.98 & 50.06$\pm$0.86 & 50.94$\pm$0.69 & 51.58$\pm$0.45 \\
    DP-FTRL       & \textbf{49.53$\pm$1.76} & \textbf{51.59$\pm$0.24} & \textbf{51.93$\pm$0.85} & \textbf{52.28$\pm$0.87} & \textbf{52.52$\pm$1.05} & \textbf{52.41$\pm$1.11} & \textbf{52.38$\pm$0.88} \\
    DP-AdaFedProx & 41.97$\pm$1.32 & 47.20$\pm$0.92 & 48.75$\pm$0.97 & 50.13$\pm$0.88 & 51.03$\pm$0.68 & 51.61$\pm$0.55 & 51.92$\pm$0.31 \\
    DP-FedSOFIM   & 40.66$\pm$1.96 & 46.01$\pm$2.79 & 49.15$\pm$1.67 & 50.44$\pm$1.13 & 51.39$\pm$0.95 & 51.90$\pm$0.55 & 51.97$\pm$0.47 \\
    \midrule
    \multicolumn{8}{l}{\textit{\(\varepsilon=1\)}} \\
    DP-FedGD      & 47.38$\pm$0.53 & 50.68$\pm$0.29 & 51.75$\pm$0.33 & 52.53$\pm$0.35 & 53.13$\pm$0.42 & 53.66$\pm$0.40 & 53.71$\pm$0.13 \\
    DP-FedAvg     & 45.12$\pm$0.99 & 49.45$\pm$0.41 & 51.17$\pm$0.39 & 51.93$\pm$0.38 & 52.70$\pm$0.41 & 53.09$\pm$0.39 & 53.44$\pm$0.16 \\
    DP-FedFC      & 24.51$\pm$4.38 & 35.28$\pm$3.78 & 42.23$\pm$2.78 & 45.43$\pm$2.03 & 47.71$\pm$1.25 & 49.34$\pm$0.91 & 50.30$\pm$0.75 \\
    DP-SCAFFOLD   & 42.06$\pm$2.15 & 46.63$\pm$1.30 & 48.22$\pm$0.87 & 48.30$\pm$0.44 & 49.50$\pm$0.56 & 49.78$\pm$0.75 & 50.12$\pm$0.84 \\
    DP-FedAdam    & 44.24$\pm$2.06 & 49.83$\pm$0.90 & 51.55$\pm$0.75 & 52.25$\pm$0.43 & 53.00$\pm$0.40 & 53.50$\pm$0.32 & 53.78$\pm$0.12 \\
    DP-FedYogi    & \textbf{48.95$\pm$0.66} & 47.47$\pm$1.52 & 47.87$\pm$1.47 & 48.57$\pm$0.61 & 47.72$\pm$0.58 & 48.33$\pm$0.54 & 48.14$\pm$1.13 \\
    DP-FTRL       & 44.65$\pm$0.97 & \textbf{53.48$\pm$0.11} & \textbf{53.36$\pm$0.29} & \textbf{53.49$\pm$0.22} & \textbf{53.68$\pm$0.33} & \textbf{53.69$\pm$0.37} & 53.74$\pm$0.17 \\
    DP-AdaFedProx & 47.67$\pm$1.22 & 49.43$\pm$0.89 & 50.92$\pm$0.47 & 50.87$\pm$0.17 & 51.48$\pm$0.56 & 51.47$\pm$0.57 & 51.67$\pm$0.89 \\
    DP-FedSOFIM   & 47.43$\pm$0.85 & 50.76$\pm$0.90 & 52.19$\pm$0.63 & 53.13$\pm$0.41 & 53.97$\pm$0.26 & 53.98$\pm$0.17 & \textbf{53.99$\pm$0.51} \\
    \midrule
    \multicolumn{8}{l}{\textit{\(\varepsilon=5\)}} \\
    DP-FedGD      & 48.67$\pm$0.66 & 51.62$\pm$0.13 & 52.93$\pm$0.23 & 53.31$\pm$0.15 & 53.90$\pm$0.20 & 54.34$\pm$0.21 & 54.70$\pm$0.10 \\
    DP-FedAvg     & 48.67$\pm$0.66 & 51.62$\pm$0.13 & 52.93$\pm$0.23 & 53.31$\pm$0.15 & 53.90$\pm$0.20 & 54.34$\pm$0.21 & 54.70$\pm$0.10 \\
    DP-FedFC      & 52.42$\pm$0.09 & 54.09$\pm$0.24 & 55.33$\pm$0.21 & 55.80$\pm$0.19 & 56.24$\pm$0.19 & 56.45$\pm$0.21 & 56.86$\pm$0.21 \\
    DP-SCAFFOLD   & 51.94$\pm$0.36 & 53.89$\pm$0.28 & 54.87$\pm$0.22 & 55.04$\pm$0.31 & 55.48$\pm$0.28 & 55.61$\pm$0.28 & 56.11$\pm$0.10 \\
    DP-FedAdam    & 48.52$\pm$1.71 & 53.57$\pm$1.06 & 54.59$\pm$1.73 & 55.10$\pm$0.26 & 55.12$\pm$0.38 & 55.72$\pm$0.31 & 55.51$\pm$0.11 \\
    DP-FedYogi    & 52.18$\pm$0.26 & 53.83$\pm$0.29 & 54.70$\pm$0.28 & 55.38$\pm$0.09 & 56.04$\pm$0.19 & 56.28$\pm$0.17 & 56.49$\pm$0.11 \\
    DP-FTRL       & 39.70$\pm$4.56 & 53.99$\pm$0.53 & 54.39$\pm$0.16 & 54.44$\pm$0.26 & 54.48$\pm$0.19 & 54.49$\pm$0.20 & 54.53$\pm$0.16 \\
    DP-AdaFedProx & \textbf{53.16$\pm$0.48} & 54.51$\pm$0.33 & 55.47$\pm$0.44 & 55.95$\pm$0.43 & 56.55$\pm$0.30 & 56.82$\pm$0.27 & 57.05$\pm$0.27 \\
    DP-FedSOFIM   & 53.15$\pm$0.17 & \textbf{55.59$\pm$0.50} & \textbf{56.81$\pm$0.35} & \textbf{57.29$\pm$0.12} & \textbf{57.32$\pm$0.15} & \textbf{57.54$\pm$0.21} & \textbf{57.79$\pm$0.28} \\
    \midrule
    \multicolumn{8}{l}{\textit{\(\varepsilon=10\)}} \\
    DP-FedGD      & 48.82$\pm$0.61 & 51.71$\pm$0.07 & 52.88$\pm$0.15 & 53.47$\pm$0.21 & 53.96$\pm$0.17 & 54.35$\pm$0.24 & 54.71$\pm$0.13 \\
    DP-FedAvg     & 48.82$\pm$0.61 & 51.71$\pm$0.07 & 52.88$\pm$0.15 & 53.47$\pm$0.21 & 53.96$\pm$0.17 & 54.35$\pm$0.24 & 54.71$\pm$0.13 \\
    DP-FedFC      & 43.45$\pm$2.90 & 50.24$\pm$0.43 & 51.81$\pm$0.22 & 52.60$\pm$0.12 & 53.16$\pm$0.08 & 53.60$\pm$0.15 & 54.00$\pm$0.13 \\
    DP-SCAFFOLD   & 52.83$\pm$0.28 & 54.53$\pm$0.25 & 55.38$\pm$0.29 & 55.87$\pm$0.08 & 56.43$\pm$0.20 & 56.63$\pm$0.09 & 56.96$\pm$0.06 \\
    DP-FedAdam    & 49.96$\pm$3.92 & 54.60$\pm$0.83 & 54.69$\pm$0.44 & 55.96$\pm$0.77 & 56.21$\pm$0.99 & 57.07$\pm$0.67 & 57.63$\pm$0.24 \\
    DP-FedYogi    & 50.37$\pm$3.90 & 54.53$\pm$0.51 & 55.25$\pm$0.86 & 55.38$\pm$1.48 & 56.66$\pm$0.84 & 57.18$\pm$0.50 & 57.46$\pm$0.24 \\
    DP-FTRL       & 39.74$\pm$3.14 & 52.37$\pm$0.78 & 55.09$\pm$0.23 & 55.11$\pm$0.14 & 55.03$\pm$0.18 & 55.01$\pm$0.23 & 55.00$\pm$0.24 \\
    DP-AdaFedProx & \textbf{53.51$\pm$0.37} & 54.87$\pm$0.32 & 55.65$\pm$0.39 & 56.29$\pm$0.35 & 56.76$\pm$0.30 & 57.17$\pm$0.31 & 57.42$\pm$0.24 \\
    DP-FedSOFIM   & 53.29$\pm$0.18 & \textbf{56.01$\pm$0.40} & \textbf{57.44$\pm$0.10} & \textbf{58.14$\pm$0.15} & \textbf{58.35$\pm$0.17} & \textbf{58.70$\pm$0.19} & \textbf{58.99$\pm$0.15} \\

\end{longtable}}

{\small
\begin{longtable}{@{\hspace{0pt}}lccccccc@{\hspace{0pt}}}
    \caption{Test accuracy (\%) on PathMNIST with VGG-16 backbone across federated rounds for different privacy budgets. Results shown at 10-round intervals (mean $\pm$ std over 3 seeds). Best result per privacy regime is in \textbf{bold}.}
    \label{tab:pathmnist_vgg}\\

    \toprule
    Method & \multicolumn{7}{c}{Federated Round} \\
    \cmidrule(lr){2-8}
    & 10 & 20 & 30 & 40 & 50 & 60 & 70 \\
    \midrule
    \endfirsthead

    \multicolumn{8}{l}{\small\textit{Continued from previous page}}\\
    \toprule
    Method & \multicolumn{7}{c}{Federated Round} \\
    \cmidrule(lr){2-8}
    & 10 & 20 & 30 & 40 & 50 & 60 & 70 \\
    \midrule
    \endhead

    \midrule
    \multicolumn{8}{r}{\small\textit{Continued on next page}}\\
    \endfoot

    \bottomrule
    \endlastfoot

    \multicolumn{8}{l}{\textit{\(\varepsilon=0.5\)}} \\
    DP-FedGD      & 49.27$\pm$2.67 & 52.40$\pm$2.75 & 56.04$\pm$2.05 & \textbf{57.11$\pm$2.62} & 56.77$\pm$1.46 & 58.04$\pm$1.81 & 57.99$\pm$1.50 \\
    DP-FedAvg     & 49.27$\pm$2.67 & 52.40$\pm$2.75 & 56.04$\pm$2.05 & \textbf{57.11$\pm$2.62} & 56.77$\pm$1.46 & 58.04$\pm$1.81 & 57.99$\pm$1.50 \\
    DP-FedFC      & 42.66$\pm$1.54 & 52.88$\pm$1.01 & 56.08$\pm$1.19 & 57.10$\pm$1.63 & 58.00$\pm$2.01 & 58.28$\pm$1.85 & 58.78$\pm$0.94 \\
    DP-SCAFFOLD   & 47.56$\pm$3.32 & 50.42$\pm$1.20 & 54.26$\pm$1.37 & 54.87$\pm$2.73 & 54.19$\pm$2.20 & 55.81$\pm$2.00 & 55.43$\pm$0.56 \\
    DP-FedAdam    & \textbf{49.73$\pm$1.67} & 52.24$\pm$2.86 & 55.80$\pm$1.68 & 56.38$\pm$1.29 & 56.89$\pm$2.04 & 58.21$\pm$1.67 & 58.35$\pm$1.50 \\
    DP-FedYogi    & 48.65$\pm$2.04 & 52.27$\pm$2.41 & 55.85$\pm$1.29 & 56.23$\pm$1.55 & 57.40$\pm$1.82 & 58.38$\pm$1.37 & 58.47$\pm$2.20 \\
    DP-FTRL       & 44.22$\pm$1.61 & \textbf{54.56$\pm$2.83} & 55.55$\pm$2.55 & 55.54$\pm$2.84 & 55.25$\pm$2.45 & 55.33$\pm$2.66 & 55.27$\pm$2.70 \\
    DP-AdaFedProx & 49.27$\pm$2.67 & 52.40$\pm$2.75 & 56.04$\pm$2.05 & \textbf{57.11$\pm$2.62} & 56.77$\pm$1.46 & 58.04$\pm$1.81 & 57.99$\pm$1.50 \\
    DP-FedSOFIM   & 48.85$\pm$1.82 & 52.33$\pm$1.62 & \textbf{57.21$\pm$0.39} & 56.30$\pm$0.87 & \textbf{58.63$\pm$1.02} & \textbf{58.72$\pm$1.19} & \textbf{59.57$\pm$1.63} \\
    \midrule
    \multicolumn{8}{l}{\textit{\(\varepsilon=1\)}} \\
    DP-FedGD      & 51.06$\pm$3.20 & 54.35$\pm$2.88 & 57.27$\pm$2.21 & 58.37$\pm$2.30 & 58.44$\pm$1.40 & 59.29$\pm$1.66 & 59.43$\pm$1.56 \\
    DP-FedAvg     & 51.06$\pm$3.20 & 54.35$\pm$2.88 & 57.27$\pm$2.21 & 58.37$\pm$2.30 & 58.44$\pm$1.40 & 59.29$\pm$1.66 & 59.43$\pm$1.56 \\
    DP-FedFC      & 21.10$\pm$0.53 & 38.10$\pm$0.87 & 46.02$\pm$3.01 & 50.36$\pm$2.49 & 52.77$\pm$1.29 & 53.91$\pm$0.94 & 54.77$\pm$0.68 \\
    DP-SCAFFOLD   & 51.56$\pm$2.01 & 54.70$\pm$1.30 & 56.43$\pm$1.25 & 57.76$\pm$1.38 & 56.77$\pm$2.07 & 58.55$\pm$1.50 & 59.04$\pm$1.36 \\
    DP-FedAdam    & 53.91$\pm$0.51 & 56.23$\pm$1.61 & 58.02$\pm$1.39 & 58.99$\pm$1.08 & 59.56$\pm$1.78 & 60.47$\pm$1.45 & 60.80$\pm$1.44 \\
    DP-FedYogi    & 48.71$\pm$3.59 & 54.88$\pm$3.42 & 55.79$\pm$0.69 & 59.26$\pm$1.90 & 58.00$\pm$1.28 & 60.25$\pm$1.37 & 60.15$\pm$1.32 \\
    DP-FTRL       & 44.03$\pm$6.25 & 54.91$\pm$3.55 & 56.24$\pm$2.44 & 56.26$\pm$2.55 & 56.06$\pm$2.33 & 56.18$\pm$2.41 & 56.20$\pm$2.44 \\
    DP-AdaFedProx & \textbf{55.26$\pm$1.99} & 56.99$\pm$1.24 & 58.62$\pm$2.39 & 59.76$\pm$0.43 & 59.19$\pm$3.09 & 60.11$\pm$2.23 & 60.87$\pm$1.80 \\
    DP-FedSOFIM   & 54.29$\pm$3.39 & \textbf{58.34$\pm$0.67} & \textbf{58.64$\pm$1.65} & \textbf{61.19$\pm$1.56} & \textbf{61.96$\pm$0.60} & \textbf{61.23$\pm$1.33} & \textbf{62.09$\pm$0.33} \\
    \midrule
    \multicolumn{8}{l}{\textit{\(\varepsilon=5\)}} \\
    DP-FedGD      & 50.66$\pm$3.03 & 54.59$\pm$2.73 & 57.08$\pm$2.14 & 58.23$\pm$2.00 & 58.85$\pm$1.75 & 59.66$\pm$1.70 & 60.09$\pm$1.63 \\
    DP-FedAvg     & 50.66$\pm$3.03 & 54.59$\pm$2.73 & 57.08$\pm$2.14 & 58.23$\pm$2.00 & 58.85$\pm$1.75 & 59.66$\pm$1.70 & 60.09$\pm$1.63 \\
    DP-FedFC      & 56.97$\pm$1.91 & 59.66$\pm$1.60 & 61.06$\pm$1.35 & 61.98$\pm$1.30 & 62.82$\pm$1.31 & 63.11$\pm$1.30 & 63.64$\pm$0.89 \\
    DP-SCAFFOLD   & 55.68$\pm$1.50 & 58.85$\pm$1.22 & 60.52$\pm$1.25 & 61.29$\pm$1.55 & 61.92$\pm$1.80 & 62.26$\pm$1.38 & 62.86$\pm$1.04 \\
    DP-FedAdam    & 46.86$\pm$4.29 & 56.19$\pm$1.67 & 59.29$\pm$1.39 & 60.61$\pm$1.79 & 64.30$\pm$0.73 & 64.20$\pm$0.81 & 65.13$\pm$0.63 \\
    DP-FedYogi    & 52.18$\pm$4.91 & 59.29$\pm$3.52 & 60.92$\pm$2.43 & 62.95$\pm$2.64 & 63.94$\pm$0.45 & 65.03$\pm$0.22 & \textbf{65.38$\pm$0.31} \\
    DP-FTRL       & 33.74$\pm$1.85 & 46.25$\pm$7.35 & 51.44$\pm$3.42 & 58.19$\pm$3.35 & 59.32$\pm$1.87 & 59.39$\pm$1.90 & 59.38$\pm$1.88 \\
    DP-AdaFedProx & 57.41$\pm$2.45 & 59.42$\pm$2.03 & 61.23$\pm$2.06 & 61.91$\pm$1.79 & 62.32$\pm$1.92 & 62.49$\pm$2.07 & 63.22$\pm$1.87 \\
    DP-FedSOFIM   & \textbf{58.67$\pm$0.56} & \textbf{62.47$\pm$0.13} & \textbf{63.55$\pm$0.70} & \textbf{64.00$\pm$0.90} & \textbf{64.78$\pm$0.53} & \textbf{65.06$\pm$0.65} & 65.35$\pm$0.32 \\
    \midrule
    \multicolumn{8}{l}{\textit{\(\varepsilon=10\)}} \\
    DP-FedGD      & 50.48$\pm$2.95 & 54.56$\pm$2.69 & 57.02$\pm$2.12 & 58.13$\pm$1.95 & 58.86$\pm$1.77 & 59.66$\pm$1.67 & 60.09$\pm$1.64 \\
    DP-FedAvg     & 50.48$\pm$2.95 & 54.56$\pm$2.69 & 57.02$\pm$2.12 & 58.13$\pm$1.95 & 58.86$\pm$1.77 & 59.66$\pm$1.67 & 60.09$\pm$1.64 \\
    DP-FedFC      & 47.60$\pm$3.61 & 54.30$\pm$1.05 & 55.93$\pm$1.66 & 57.07$\pm$1.94 & 57.97$\pm$1.93 & 58.71$\pm$1.91 & 59.19$\pm$1.87 \\
    DP-SCAFFOLD   & 56.84$\pm$1.72 & 59.45$\pm$1.45 & 61.15$\pm$1.17 & 61.98$\pm$1.39 & 62.56$\pm$1.53 & 63.09$\pm$1.23 & 63.31$\pm$1.05 \\
    DP-FedAdam    & 50.25$\pm$7.44 & 56.02$\pm$0.38 & 59.49$\pm$2.12 & 61.79$\pm$1.83 & 59.30$\pm$2.64 & 62.47$\pm$1.80 & 63.46$\pm$1.63 \\
    DP-FedYogi    & 50.11$\pm$7.99 & 55.90$\pm$1.31 & 57.87$\pm$1.70 & 60.17$\pm$1.48 & 60.41$\pm$1.03 & 62.26$\pm$1.39 & 63.51$\pm$1.60 \\
    DP-FTRL       & 38.91$\pm$2.98 & 47.66$\pm$4.62 & 54.90$\pm$1.30 & 59.75$\pm$1.12 & 59.84$\pm$1.45 & 59.68$\pm$1.89 & 59.74$\pm$1.89 \\
    DP-AdaFedProx & 57.40$\pm$2.49 & 59.72$\pm$2.06 & 61.20$\pm$1.97 & 62.00$\pm$1.90 & 62.46$\pm$1.88 & 62.82$\pm$1.96 & 63.39$\pm$1.94 \\
    DP-FedSOFIM   & \textbf{58.60$\pm$0.76} & \textbf{62.58$\pm$0.39} & \textbf{63.54$\pm$1.00} & \textbf{64.59$\pm$1.02} & \textbf{65.05$\pm$0.51} & \textbf{65.36$\pm$0.58} & \textbf{65.79$\pm$0.40} \\

\end{longtable}}

\begin{figure}[t]
    \centering
    \includegraphics[width=\textwidth]{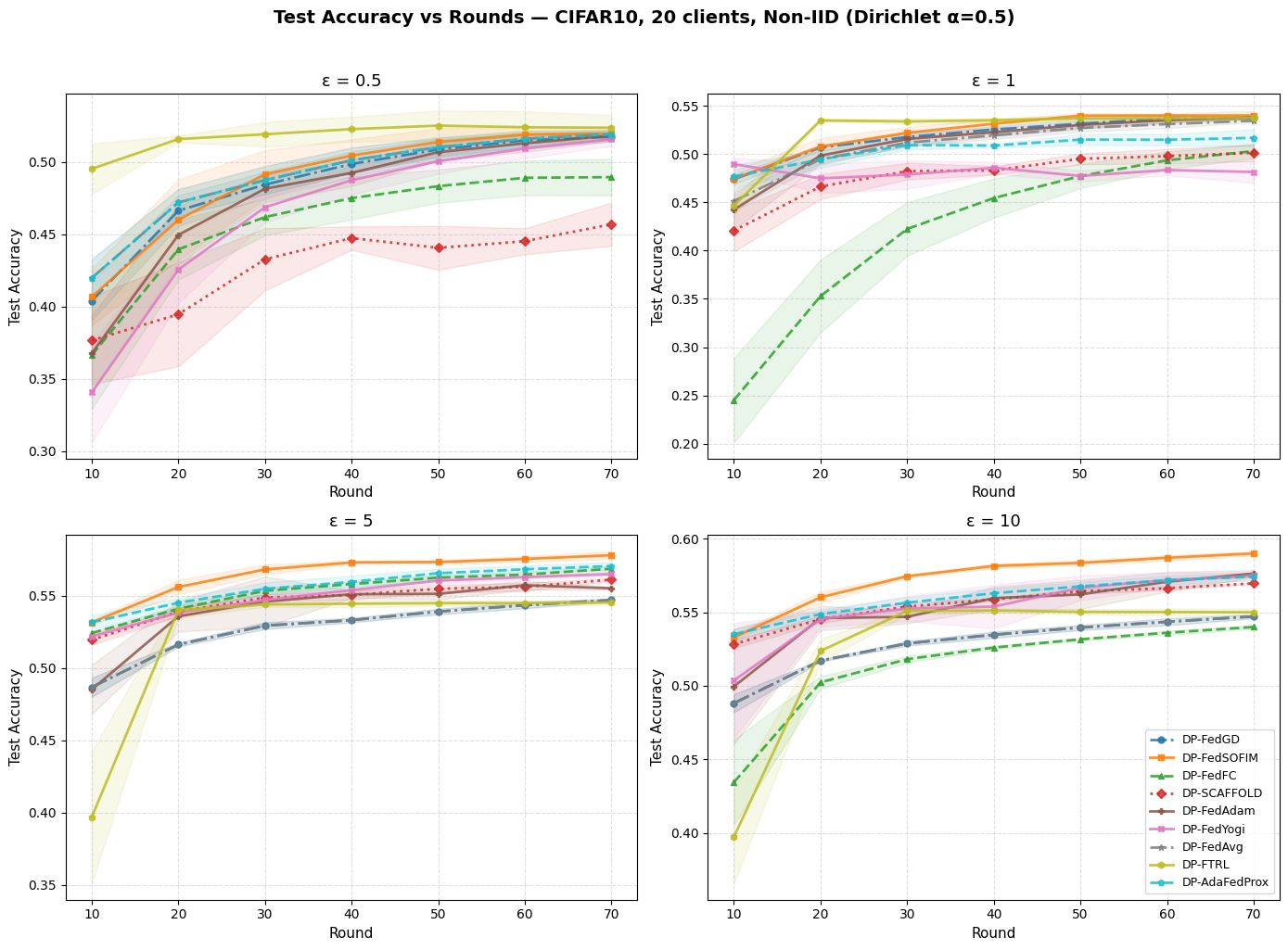}
    \caption{Convergence trajectories on CIFAR-10, VGG-16 backbone,
    20 clients, Non-IID (Dirichlet $\alpha=0.5$) across privacy regimes.}
    \label{fig:cifar10_vgg}
\end{figure}

\begin{figure}[t]
    \centering
    \includegraphics[width=\textwidth]{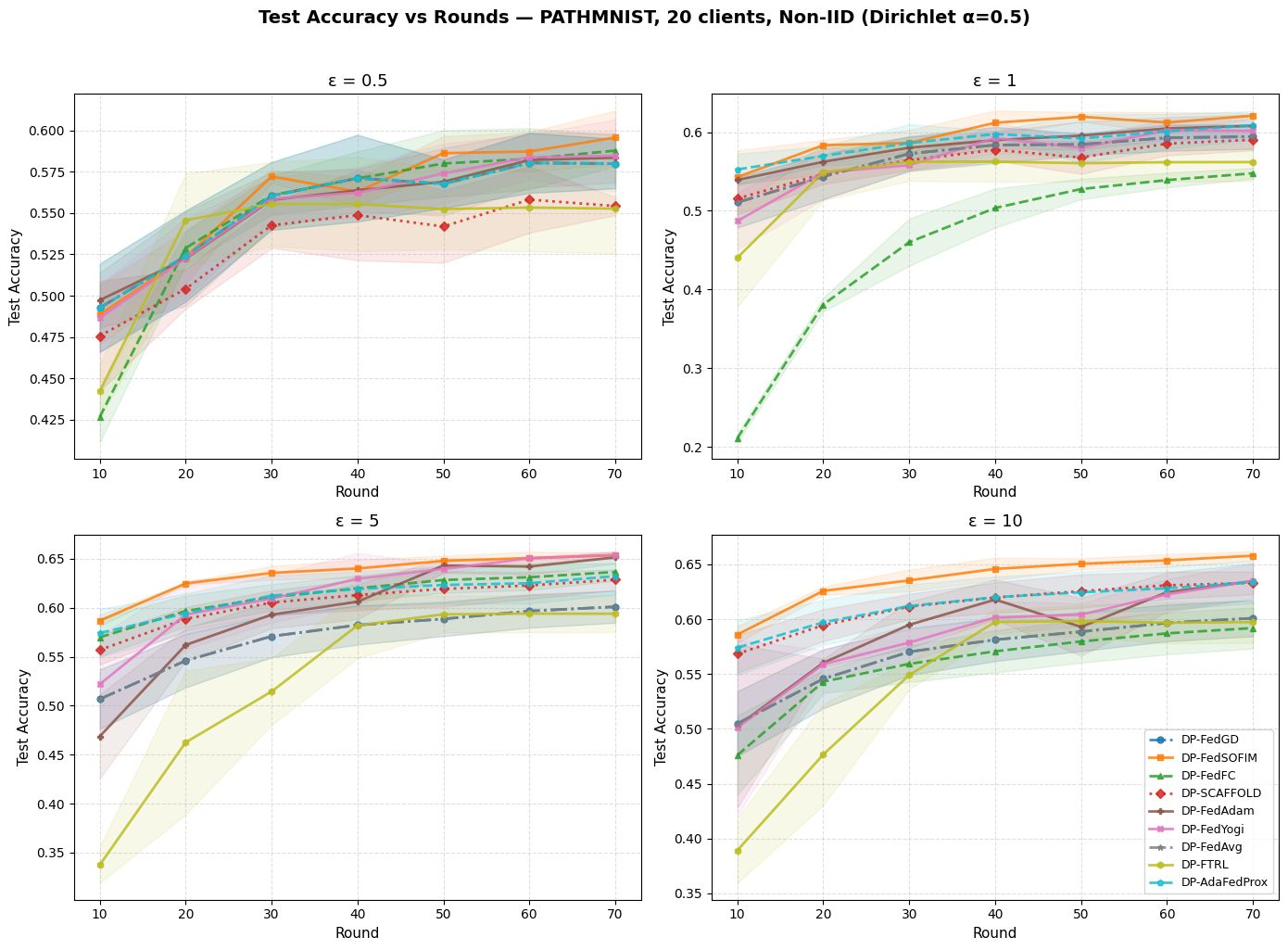}
    \caption{Convergence trajectories on PathMNIST, VGG-16 backbone,
    20 clients, Non-IID (Dirichlet $\alpha=0.5$) across privacy regimes.}
    \label{fig:pathmnist_vgg}
\end{figure}
    
	\subsection{Analysis}

We analyze round-by-round test accuracy across four dataset/backbone combinations
(ResNet-20 and VGG-16 on CIFAR-10 and PathMNIST; $n=20$ clients, non-IID
Dirichlet $\alpha=0.5$; Tables~\ref{tab:cifar10}--\ref{tab:pathmnist_vgg};
IID results in Appendix~\ref{app:iid}), organized around five phenomena.

\subsubsection{Phenomenon 1: DP-FedSOFIM Achieves Superior Early Convergence
Across Architectures and Datasets}

The primary and most consistent advantage of DP-FedSOFIM is convergence speed.
It attains the best or joint-best round-10 accuracy in the majority of
dataset/backbone/privacy configurations examined
(Tables~\ref{tab:cifar10}--\ref{tab:pathmnist_vgg},
Figures~\ref{fig:cifar10}--\ref{fig:pathmnist}). On ResNet-20 CIFAR-10, the
round-10 lead over the first-order baseline DP-FedGD reaches $+20.31\%$ at
$\varepsilon=5$ (61.05\% vs.\ 40.74\%; Table~\ref{tab:cifar10}) and $+20.73\%$
at $\varepsilon=10$ (61.52\% vs.\ 40.79\%; Table~\ref{tab:cifar10}). On
ResNet-20 PathMNIST, round-10 leads over DP-FedGD range from $+5.34\%$ at
$\varepsilon=0.5$ to $+12.76\%$ at $\varepsilon=10$
(Table~\ref{tab:pathmnist}). With the VGG-16 backbone, DP-FedSOFIM leads or
ties for first at round~10 in five of eight configurations
(Tables~\ref{tab:cifar10_vgg}--\ref{tab:pathmnist_vgg}).

These early leads translate directly into communication savings. On ResNet-20
CIFAR-10 at $\varepsilon\in\{5,10\}$ (Table~\ref{tab:cifar10},
Figure~\ref{fig:cifar10}), setting the 95\%-of-final-DP-FedGD target at
60.89\%: DP-FedGD requires 50~rounds to clear it, while DP-FedSOFIM surpasses
it already at round~10 (61.05\% and 61.52\%, respectively) and exceeds
DP-FedGD's entire-run final accuracy before round~20. On ResNet-20 PathMNIST
at $\varepsilon\in\{5,10\}$ (Table~\ref{tab:pathmnist},
Figure~\ref{fig:pathmnist}), DP-FedSOFIM clears the analogous threshold at
round~10 versus round~40 for DP-FedGD, a $4\times$ reduction in
rounds-to-target. The same pattern holds on VGG-16: at $\varepsilon=10$ on
CIFAR-10 (Table~\ref{tab:cifar10_vgg}), DP-FedSOFIM at round~20 (56.01\%)
already surpasses DP-FedGD's round-70 final accuracy (54.71\%). In
cross-device settings where each round incurs communication, latency, and
on-device energy costs, this acceleration is decisive.

The mechanism is the rapid stabilization of the EMA momentum buffer. Initialized
at $M_{-1}=0_d$, the curvature proxy begins purely isotropic
($\widehat{\mathcal{I}}_{-1}=\rho I_d$, $H_{-1}=\rho^{-1}I_d$) and accumulates
gradient signal as $M_t=(1-\beta)\sum_{s\le t}\beta^{t-s}G_s$. The EMA
suppresses noise variance by a factor $\tfrac{1-\beta}{1+\beta}$, and because we
tune $\beta$ upward as privacy tightens ($\beta=0.9$ at $\varepsilon\ge2$, rising
to $\beta=0.95$ at $\varepsilon=0.5$), suppression strengthens exactly where
per-round noise is largest. At $\varepsilon=0.5$ we additionally employ a
20-round warmup in which the server applies EMA-only updates before activating
the Sherman--Morrison step, ensuring the momentum buffer has accumulated
sufficient signal before curvature preconditioning begins. As a result,
DP-FedSOFIM exhibits no early-round instability even at the tightest budget;
the preconditioner aligns with dominant gradient directions within the first
10--20 rounds.

\subsubsection{Phenomenon 2: DP-FedSOFIM Leads Final-Round Accuracy in the
Majority of Regimes and Saturates Early in the Remainder}

DP-FedSOFIM attains the best round-70 accuracy in 12 of 16
dataset/backbone/privacy configurations
(Tables~\ref{tab:cifar10}--\ref{tab:pathmnist_vgg}). It sweeps all eight
PathMNIST configurations across both backbones and wins on CIFAR-10 at
$\varepsilon\in\{1,10\}$ with ResNet-20 and at $\varepsilon\in\{1,5,10\}$ with
VGG-16. In the four configurations where it does not hold the top position at
round~70, three of the margins are negligible: $+0.37\%$ to DP-FedAdam on
ResNet-20 CIFAR-10 $\varepsilon=0.5$ (59.23\% vs.\ 58.86\%;
Table~\ref{tab:cifar10}), $+0.29\%$ to DP-FedYogi on ResNet-20 CIFAR-10
$\varepsilon=5$ (67.81\% vs.\ 67.52\%; Table~\ref{tab:cifar10}), and $+0.03\%$
to DP-FedYogi on VGG-16 PathMNIST $\varepsilon=5$ (65.38\% vs.\ 65.35\%;
Table~\ref{tab:pathmnist_vgg}), all within the run-to-run variability of the
competing methods. The fourth, VGG-16 CIFAR-10 at $\varepsilon=0.5$, is
discussed under Phenomenon~4.

Crucially, in the regimes where adaptive baselines draw level at the final round
on CIFAR-10, DP-FedSOFIM has already saturated near that accuracy long before.
At $\varepsilon=5$ on ResNet-20 CIFAR-10 (Table~\ref{tab:cifar10},
Figure~\ref{fig:cifar10}), DP-FedSOFIM reaches 66.89\% by round~30 and
settles at 67.52\% by round~70; DP-FedYogi requires the full 70~rounds to
arrive at 67.81\%. DP-FedSOFIM has captured the bulk of its final performance
before the training midpoint, while its competitors spend the remaining rounds
closing a gap that DP-FedSOFIM has held throughout. In any
communication-constrained setting where round budget is limited, this profile is
strictly preferable.

\subsubsection{Phenomenon 3: DP-FedAdam Is the Strongest Adaptive Baseline but
Pays a Steep Early-Round Penalty}

DP-FedAdam is the most consistently competitive baseline, particularly at
relaxed privacy. It finishes second at round~70 on ResNet-20 PathMNIST at
$\varepsilon\in\{1,5,10\}$ (68.07\%, 71.20\%, 70.97\%;
Table~\ref{tab:pathmnist}) and is a close contender on VGG-16 PathMNIST at
the same budgets (60.80\%, 65.13\%, 63.46\%; Table~\ref{tab:pathmnist_vgg}).
Its adaptive server-side second-moment scaling thus rivals explicit Fisher
preconditioning asymptotically, given sufficient rounds.

However, DP-FedAdam's adaptive moments require several rounds of accumulation
before the server step is well-scaled, producing slow and high-variance starts.
At round~10, DP-FedSOFIM leads DP-FedAdam by $+17.13\%$ on ResNet-20 CIFAR-10
$\varepsilon=5$ (61.05\% vs.\ 43.92\%; Table~\ref{tab:cifar10}), $+18.01\%$ at
$\varepsilon=10$ (61.52\% vs.\ 43.51\%; Table~\ref{tab:cifar10}), and $+16.47\%$
on ResNet-20 PathMNIST $\varepsilon=5$ (64.22\% vs.\ 47.75\%;
Table~\ref{tab:pathmnist}), with round-10 standard deviations as large as
$\pm4.17\%$ on PathMNIST (Table~\ref{tab:pathmnist}) and $\pm7.44\%$ on VGG-16
PathMNIST $\varepsilon=10$ (Table~\ref{tab:pathmnist_vgg}). DP-FedSOFIM's
explicit Fisher proxy is informative from the very first rounds, making it the
clearly superior choice whenever the round budget is limited. DP-AdaFedProx,
the second-strongest adaptive baseline, leads DP-FedSOFIM narrowly at round~10
in a handful of regimes (e.g., ResNet-20 PathMNIST $\varepsilon=1$: 59.81\%
vs.\ 58.49\%; Table~\ref{tab:pathmnist}) but is overtaken by round~20 in every
case and does not contend for the final-round lead in any configuration.

\subsubsection{Phenomenon 4: DP-SCAFFOLD Degrades Under Tight Privacy;
DP-FTRL's Budget-Accounting Advantage Is Confined to the Extreme-Tightness
Regime}

DP-SCAFFOLD exhibits a characteristic failure mode under tight privacy. On
ResNet-20 CIFAR-10 at $\varepsilon=0.5$ it reaches only 49.78\% final accuracy,
far below DP-FedGD (58.37\%) and DP-FedSOFIM (58.86\%;
Table~\ref{tab:cifar10}, Figure~\ref{fig:cifar10}); at $\varepsilon=1$ it again
trails substantially (56.03\% vs.\ 63.43\% for DP-FedSOFIM;
Table~\ref{tab:cifar10}). The same pattern holds on ResNet-20 PathMNIST
(Table~\ref{tab:pathmnist}), where it is the weakest method at
$\varepsilon\in\{0.5,1\}$ (61.05\%, 64.97\%). SCAFFOLD's variance-reduction
relies on control variates maintained across rounds; under high noise these
variates become corrupted and the client-drift correction amplifies rather than
reduces variance. DP-SCAFFOLD recovers competitiveness once noise is low enough
to preserve the variates: at $\varepsilon\in\{5,10\}$ it climbs to
66.42\%/67.85\% on ResNet-20 CIFAR-10 and 69.77\%/70.13\% on ResNet-20
PathMNIST (Tables~\ref{tab:cifar10}--\ref{tab:pathmnist}), though it remains
below DP-FedSOFIM throughout. On VGG-16, DP-SCAFFOLD is consistently
outperformed by DP-FedSOFIM across all budgets
(Tables~\ref{tab:cifar10_vgg}--\ref{tab:pathmnist_vgg}), and its inability to
maintain accurate control variates under stringent privacy makes it unsuitable
for strong-privacy applications.

DP-FTRL presents a distinct profile. Its tree aggregation mechanism provides
favorable effective privacy accounting at extreme budget tightness, yielding the
best final-round accuracy on VGG-16 CIFAR-10 at $\varepsilon=0.5$ (52.38\%;
Table~\ref{tab:cifar10_vgg}). This is the single configuration across all 16
where DP-FTRL leads at the final round, and even here the margin over
DP-FedSOFIM is only 0.41\% (51.97\%). Outside this narrow regime, DP-FTRL
suffers from severe cold-start instability: its round-10 accuracy falls as low
as 27.12\% on ResNet-20 CIFAR-10 $\varepsilon=10$ (Table~\ref{tab:cifar10})
and 24.22\% on ResNet-20 PathMNIST $\varepsilon=10$ (Table~\ref{tab:pathmnist}),
with round-10 standard deviations exceeding $\pm7\%$ in multiple regimes. On
ResNet-20 CIFAR-10 at $\varepsilon=0.5$, DP-FTRL surges to 53.91\% by round~20
before plateauing at 55.73\% (Table~\ref{tab:cifar10}), while DP-FedSOFIM
continues to improve and reaches 58.86\% at the final round. DP-FedSOFIM's
server-side preconditioning degrades gracefully across the full privacy spectrum
and is not susceptible to the cold-start pathology that renders DP-FTRL
unreliable at all but the most extreme budgets.

\subsubsection{Phenomenon 5: Dataset- and Backbone-Dependent Curvature
Concentration}

The patterns of Phenomena~1 and~2 reveal a consistent asymmetry across datasets
(Tables~\ref{tab:cifar10}--\ref{tab:pathmnist_vgg},
Figures~\ref{fig:cifar10}--\ref{fig:pathmnist}): DP-FedSOFIM's early lead
consolidates into a final-round win on PathMNIST across all eight configurations
(both backbones, all four budgets), while on CIFAR-10 the late-round gap narrows
at relaxed privacy budgets, particularly with ResNet-20. We interpret this
geometrically, consistent with the operator and quadratic-form bounds of
Lemma~\ref{lem:SM_bounds} and the bias-and-noise neighborhood guarantee of
Theorem~\ref{thm:conv_sc}.

The benefit of Fisher-proxy preconditioning scales with the condition number of
the effective loss landscape: greater curvature anisotropy implies larger and
more durable gains from curvature-aware step scaling. We hypothesize that
PathMNIST's tissue-level histopathology signal is concentrated in a small number
of dominant gradient directions under both backbones. The momentum buffer aligns
with these directions quickly, and because the remaining low-curvature directions
carry little additional discriminative signal, DP-FedSOFIM's early lead is
durable through round~70. On CIFAR-10 the effective curvature is more diffuse
across both architectures: DP-FedSOFIM captures the dominant directions rapidly
(producing the large early lead) but assigns lower weight to the many
low-curvature directions that nonetheless carry slow, cumulative signal; adaptive
baselines continue extracting that residual signal throughout and partially close
the gap in late rounds. The CIFAR-10 late plateau is thus the signature of a
diffuse loss landscape, not of premature convergence to a suboptimal basin.

Across backbones, DP-FedSOFIM generalizes well: it wins 6 of 8 final-round
comparisons on each architecture, and three of the four non-winning margins are
under 0.4\% (Tables~\ref{tab:cifar10}--\ref{tab:pathmnist_vgg}). The VGG-16
results confirm that the curvature-concentration effect extends beyond a single
architecture. On VGG-16 PathMNIST, DP-FedSOFIM sweeps all four privacy budgets
at the final round (Table~\ref{tab:pathmnist_vgg}); on VGG-16 CIFAR-10, the
same diffuse-curvature dynamic present with ResNet-20 reappears at relaxed
budgets, with the additional specific case at $\varepsilon=0.5$ where DP-FTRL's
tree aggregation provides a budget-accounting advantage that is marginal in
magnitude and does not generalize to other budgets or datasets. This yields a
practical guideline: DP-FedSOFIM is most advantageous in domains whose
pretrained representation induces a structured, anisotropic loss landscape, such
as medical imaging, and in any communication-constrained deployment where
rounds-to-target rather than asymptotic accuracy is the binding constraint.

\section{Conclusion}

In this paper, we presented DP-FedSOFIM, a novel framework for differentially
private second-order federated optimization. By shifting the computational burden
of Fisher Information Matrix preconditioning entirely to the server and utilizing
the Sherman--Morrison formula for $O(d)$ updates, we addressed the prohibitive
memory constraints of existing second-order methods such as DP-FedNew and
DP-FedFC, reducing client-side complexity from $O(d^2)$ to $O(d)$. Our
theoretical analysis confirms that server-side preconditioning preserves
$(\varepsilon,\delta)$-differential privacy through the post-processing theorem,
while convergence guarantees establish linear convergence to a noise-determined
error floor under strongly convex objectives.

Empirically, we evaluated DP-FedSOFIM against eight baselines---DP-FedGD,
DP-FedAvg (whose hyperparameter search over local steps yielded one as optimal,
making it numerically equivalent to DP-FedGD in this setting), DP-FedFC,
DP-SCAFFOLD, DP-FedAdam, DP-FedYogi, DP-FTRL, and DP-AdaFedProx---on
CIFAR-10 and PathMNIST with ResNet-20 and VGG-16 backbones, $n=20$ clients, and
four privacy budgets ($\varepsilon\in\{0.5,1,5,10\}$). The defining result is
convergence speed. On ResNet-20, DP-FedSOFIM leads all methods at round~10 in 7
of 8 dataset/privacy configurations, with round-10 advantages over DP-FedGD as
large as $+20.31\%$ on CIFAR-10 ($\varepsilon=5$) and $+12.76\%$ on PathMNIST
($\varepsilon=10$). This translates into roughly a $4$--$5\times$ reduction in
rounds-to-target: on ResNet-20 CIFAR-10 at $\varepsilon\in\{5,10\}$,
DP-FedSOFIM surpasses 95\% of DP-FedGD's final accuracy at round~10 versus
round~50 for DP-FedGD, and exceeds DP-FedGD's entire-run final accuracy before
round~20. On VGG-16 the same acceleration holds at relaxed budgets, with
DP-FedSOFIM's round-10 accuracy surpassing DP-FedGD's round-70 final accuracy
by round~20 at $\varepsilon=10$ on both datasets.

DP-FedSOFIM achieves the best final-round accuracy in 12 of 16
dataset/backbone/privacy configurations, including all eight PathMNIST regimes
across both backbones. In the four configurations where it does not hold the top
position, three margins are below 0.4\% and within run-to-run variability, and
in each case DP-FedSOFIM had led throughout training before competitors drew
level only in the final rounds. Final-round gains over DP-FedGD reach up to
$+4.46\%$ on CIFAR-10 ($\varepsilon=10$, ResNet-20) and up to $+5.70\%$ on
PathMNIST ($\varepsilon=10$, VGG-16), with the larger medical-imaging margins
reflecting the greater benefit of curvature-aware preconditioning on more
anisotropic loss landscapes.

Our analysis identified five phenomena: substantially faster convergence from the
earliest rounds, with the curvature-aware preconditioner front-loading its gains;
a dataset-dependent late-round trajectory in which the early lead consolidates
on PathMNIST and levels off on CIFAR-10, attributable to curvature concentration
versus diffuseness in the effective loss landscape; DP-FedAdam emerging as the
strongest adaptive baseline asymptotically but requiring many rounds to warm up
its moment estimates; a characteristic failure mode of DP-SCAFFOLD under tight
privacy, where corrupted control variates amplify rather than reduce variance;
and DP-FTRL's tree-aggregation advantage being confined to the extreme-tight
budget regime on VGG-16 CIFAR-10, accompanied by severe cold-start instability
at all other configurations. DP-FedSOFIM, by contrast, degrades gracefully
across the full privacy spectrum and across both architectures.

These results establish DP-FedSOFIM as a scalable, communication-efficient, and
privacy-preserving solution for federated learning in privacy-critical domains
such as healthcare and finance, where both strong formal privacy guarantees and
high model utility are essential requirements.

\section{Discussion and Future Work}

We have introduced DP-FedSOFIM, a server-side second-order optimization framework
that bridges the gap between the communication efficiency of first-order methods
and the convergence stability of natural gradient descent. By leveraging the
Sherman-Morrison matrix inversion identity, we maintain $O(d)$ memory and
communication complexity, effectively overcoming the $O(d^2)$ bottleneck inherent
in prior second-order federated methods like DP-FedNew \citep{krouka2025communication}.
Our theoretical framework establishes that preconditioning acts as a
post-processing step, preserving the rigorous privacy guarantees of the Gaussian
mechanism while significantly reducing the convergence error floor induced by
noise. Beyond the strongly convex regime, the one-step descent argument
underlying our analysis extends to smooth non-convex objectives without
modification, yielding an $O(1/T)$ convergence rate to a stationary
neighbourhood whose size is governed by the same clipping bias, privacy
noise, and preconditioner coupling terms as in the convex case.

Empirically, DP-FedSOFIM exhibits strong early-round convergence across both
ResNet-20 and VGG-16 backbones: the exponential moving average in the momentum
buffer suppresses the Gaussian privacy noise within the first 10--20 rounds, so
the rank-one Fisher proxy yields informative curvature from the earliest rounds.
On ResNet-20, the optimizer leads all methods at round~10 in 7 of 8
dataset/privacy configurations; on VGG-16, where DP-AdaFedProx and DP-FTRL
compete more closely at early rounds under tight budgets, DP-FedSOFIM still
recovers the lead by round~20 in nearly every regime and dominates from the
midpoint of training onward. This stands in contrast to adaptive baselines such
as DP-FedAdam, whose server-side moment estimates require several rounds to warm
up before becoming effective, and to DP-FTRL, whose tree-aggregation mechanism
yields a narrow advantage only at the extreme tight-budget regime ($\varepsilon =
0.5$) while suffering from severe cold-start instability elsewhere. These
observations suggest that second-order methods are not only compatible with
differential privacy but are arguably necessary for maintaining utility in
communication-constrained environments. Several promising directions for future
research remain:

\noindent \textbf{Structured Curvature:} Investigating Kronecker-factored (K-FAC)
\citep{martens2015optimizing} or block-diagonal approximations could capture
higher-order inter-layer dependencies without sacrificing the $O(d)$ server-side
efficiency.

\noindent \textbf{User-Level Privacy:} Transitioning from record-level to
user-level DP presents a different sensitivity profile. Adapting the FIM
preconditioning to handle the increased noise variance required for user-level
protection is a critical next step for cross-device deployments.

\noindent \textbf{Adaptive Hyperparameters:} Automating the selection of the
regularization parameter $\rho$ and momentum $\beta$ based on the target
$\epsilon$ would further reduce the need for expensive grid searches in federated
settings.
	
	\section*{Impact Statement}
	
	This work advances the field of Differentially Private Federated Learning (DP-FL), providing a scalable solution for training high-utility models on decentralized, sensitive data. The efficiency and convergence stability of DP-FedSOFIM have direct implications for high-stakes domains, such as:
	
	\begin{itemize}
		\item \textbf{Healthcare and Medical Imaging:} Enabling the training of diagnostic models across multiple hospitals without the need for data centralization, thereby respecting patient confidentiality and complying with regulations like GDPR or HIPAA.
		
		\item \textbf{Financial Analytics:} Facilitating collaborative fraud detection or risk assessment between institutions while protecting individual transaction records.
	\end{itemize}
	
	While our technical contribution tightens the privacy-utility tradeoff, we recognize that formal privacy guarantees are only one pillar of ethical AI. Practitioners must remain vigilant regarding algorithmic fairness, as the noise introduced by differential privacy can sometimes disproportionately affect accuracy for minority subgroups. Furthermore, the use of pre-trained feature extractors, while efficient, may inherit biases from the source data. We encourage the community to deploy DP-FedSOFIM alongside comprehensive fairness auditing and transparent data governance frameworks.
	
	\bibliography{main}
	\bibliographystyle{tmlr}
	
	\newpage
	\appendix
	\onecolumn

	\section*{Appendices}
	\label{append}

	\section{Proofs of lemmas and main results}
	\label{app:proofs}
	
	In this section, we provide the proofs of the lemmas and the main results.

   \begin{proof}[Proof of Lemma~\ref{lem:noise_bound}]
From \eqref{eq:noise_variance},
\[
\nu_t^2
=
\frac{(C_g\sigma_g)^2}{n^3}
\sum_{i=1}^{n}\frac{1}{|\mathcal D_i|^2}.
\]
Since \( |\mathcal D_i|\ge m_{\min} \), we have
\( |\mathcal D_i|^{-2}\le m_{\min}^{-2} \), and hence
\[
\sum_{i=1}^{n}\frac{1}{|\mathcal D_i|^2}
\le
\frac{n}{m_{\min}^2}.
\]
Substituting this into the expression for \(\nu_t^2\) gives
\[
\nu_t^2
\le
\frac{(C_g\sigma_g)^2}{n^3}
\cdot
\frac{n}{m_{\min}^2}
=
\frac{(C_g\sigma_g)^2}{n^2m_{\min}^2}
=:\nu^2.
\]
Finally,
\[
\mathbb E[\|\xi_t\|_2^2\mid\mathcal F_t]
=
\operatorname{tr}\{\mathbb E[\xi_t\xi_t^\top\mid\mathcal F_t]\}
=
d\nu_t^2
\le
d\nu^2,
\]
which proves the claim.
\end{proof}
	
		\begin{proof}[Proof of Lemma~\ref{lem:gclip_norm}]
		Fix \(t\ge 0\).
		By \eqref{eq:clip}, every clipped per-example gradient satisfies
		\[
		\|\bar g(x,y)\|_2 \le C_g.
		\]
		Therefore, for each client \(i\),
		\[
		\left\|
		\frac{1}{|\mathcal D_i|}
		\sum_{(x,y)\in \mathcal D_i}\bar g(x,y)
		\right\|_2
		\le
		\frac{1}{|\mathcal D_i|}
		\sum_{(x,y)\in \mathcal D_i}\|\bar g(x,y)\|_2
		\le
		C_g.
		\]
		where we used the triangle inequality. Averaging these client-wise quantities gives
		\begin{align*}
			\|g_{\mathrm{clip}}(\theta_t)\|_2
			&=
			\left\|
			\frac{1}{n}\sum_{i=1}^n
			\frac{1}{|\mathcal D_i|}
			\sum_{(x,y)\in \mathcal D_i}
			\bar g(x,y)
			\right\|_2
			\\
			&\le
			\frac{1}{n}\sum_{i=1}^n
			\left\|
			\frac{1}{|\mathcal D_i|}
			\sum_{(x,y)\in \mathcal D_i}
			\bar g(x,y)
			\right\|_2
			\le
			\frac{1}{n}\sum_{i=1}^n C_g
			=
			C_g.
		\end{align*}
		This proves \eqref{eq:gclip_norm}.
		
		Next, by \eqref{eq:gradient_decomposition},
		\[
		G_t=g_{\mathrm{clip}}(\theta_t)+\xi_t.
		\]
		Since \(g_{\mathrm{clip}}(\theta_t)\) is \(\mathcal F_t\)-measurable and
		\(\mathbb E[\xi_t\mid \mathcal F_t]=0\),
		\begin{align*}
			\mathbb E\!\left[\|G_t\|_2^2\mid \mathcal F_t\right]
			&=
			\mathbb E\!\left[
			\|g_{\mathrm{clip}}(\theta_t)+\xi_t\|_2^2
			\mid \mathcal F_t
			\right]
			\\
			&=
			\|g_{\mathrm{clip}}(\theta_t)\|_2^2
			+
			2\,g_{\mathrm{clip}}(\theta_t)^\top
			\mathbb E[\xi_t\mid \mathcal F_t]
			+
			\mathbb E[\|\xi_t\|_2^2\mid \mathcal F_t]
			\\
			&=
			\|g_{\mathrm{clip}}(\theta_t)\|_2^2
			+
			\operatorname{tr}\bigl(\mathbb E[\xi_t\xi_t^\top\mid \mathcal F_t]\bigr)
			\\
			&=
			\|g_{\mathrm{clip}}(\theta_t)\|_2^2+d\nu_t^2.
		\end{align*}
		The final bound follows from \eqref{eq:gclip_norm} and Lemma~\ref{lem:noise_bound}.
	\end{proof}

		\begin{proof}[Proof of Lemma~\ref{lem:momentum_recursion}]
		Fix \(t\ge 0\).
		Using the recursion for \(M_t\) and the decomposition
		\(G_t=g_{\mathrm{clip}}(\theta_t)+\xi_t\),
		\[
		M_t
		=
		\beta M_{t-1}
		+
		(1-\beta)g_{\mathrm{clip}}(\theta_t)
		+
		(1-\beta)\xi_t.
		\]
		Conditional on \(\mathcal F_t\), both \(M_{t-1}\) and
		\(g_{\mathrm{clip}}(\theta_t)\) are deterministic, whereas \(\xi_t\) is
		mean-zero.
		Therefore
		\begin{align}
			\mathbb E\!\left[\|M_t\|_2^2\mid \mathcal F_t\right]
			&=
			\left\|
			\beta M_{t-1}+(1-\beta)g_{\mathrm{clip}}(\theta_t)
			\right\|_2^2
			+
			(1-\beta)^2\mathbb E[\|\xi_t\|_2^2\mid \mathcal F_t].
			\label{eq:M_cond_expand}
		\end{align}
		Indeed, if we write
		\[
		a_t:=\beta M_{t-1}+(1-\beta)g_{\mathrm{clip}}(\theta_t),
		\]
		then
		\[
		\|a_t+(1-\beta)\xi_t\|_2^2
		=
		\|a_t\|_2^2
		+
		2(1-\beta)a_t^\top \xi_t
		+
		(1-\beta)^2\|\xi_t\|_2^2,
		\]
		and the mixed term vanishes after conditioning on \(\mathcal F_t\). 
		
		Now use the convexity of the squared Euclidean norm:
		for every \(a,b\in\mathbb R^d\) and every \(\beta\in[0,1]\),
		\[
		\|\beta a+(1-\beta)b\|_2^2
		\le
		\beta\|a\|_2^2+(1-\beta)\|b\|_2^2.
		\]
		Applying this to \(a=M_{t-1}\) and \(b=g_{\mathrm{clip}}(\theta_t)\), and invoking Lemma~\ref{lem:gclip_norm}, gives
		\[
		\left\|
		\beta M_{t-1}+(1-\beta)g_{\mathrm{clip}}(\theta_t)
		\right\|_2^2
		\le
		\beta \|M_{t-1}\|_2^2+(1-\beta)C_g^2.
		\]
		Substituting this and
		\(\mathbb E[\|\xi_t\|_2^2\mid \mathcal F_t]=d\nu^2\)
		into \eqref{eq:M_cond_expand} proves
		\eqref{eq:conditional_moment_recursion}.
		
		Taking total expectation yields \eqref{eq:u_t_recursion}. To solve this recursion, let
		\[
		b:=(1-\beta)C_g^2+(1-\beta)^2 d\nu^2.
		\]
		Then \(u_t\le \beta u_{t-1}+b\), so by induction,
		\[
		u_t
		\le
		\beta^{t+1}u_{-1}+b\sum_{j=0}^{t}\beta^j.
		\]
		Since \(M_{-1}=0_d\), we have \(u_{-1}=0\), hence
		\[
		u_t
		\le
		b\,\frac{1-\beta^{t+1}}{1-\beta}
		=
		\bigl(1-\beta^{t+1}\bigr)\bigl(C_g^2+(1-\beta)d\nu^2\bigr)
		\le
		C_g^2+(1-\beta)d\nu^2.
		\]
		This proves \eqref{eq:uniform_M_bound}.
	\end{proof}

		\begin{proof}[Proof of Lemma~\ref{lem:SM_bounds}]
		The identity \eqref{eq:SM_formula} is the Sherman--Morrison formula  applied to the rank-one perturbation \(\rho I_d+M_tM_t^\top\):
		\[
		(\rho I_d+uu^\top)^{-1}
		=
		\frac{1}{\rho}I_d
		-
		\frac{uu^\top}{\rho(\rho+\|u\|_2^2)},
		\qquad u\in\mathbb R^d.
		\]
		Substituting \(u=M_t\) yields \eqref{eq:SM_formula}.
		
		The matrix
		\[
		\frac{M_tM_t^\top}{\rho(\rho+\|M_t\|_2^2)}
		\]
		is positive semidefinite, hence
		\[
		0\preceq H_t\preceq \frac{1}{\rho}I_d.
		\]
		The operator norm bound \eqref{eq:Ht_op_bd} follows immediately:
		\[
		\|H_t v\|_2\le \|H_t\|_{\mathrm{op}}\|v\|_2
		\le \frac{1}{\rho}\|v\|_2.
		\]
		Squaring both sides gives \eqref{eq:Ht_square_bd}.
		
		For the quadratic bounds, use \eqref{eq:SM_formula} directly:
\[
v^\top H_t v
=
\frac{1}{\rho}\|v\|_2^2
-
\frac{(M_t^\top v)^2}{\rho(\rho+\|M_t\|_2^2)}.
\]
The upper bound follows immediately because the second term is nonnegative:
\[
v^\top H_t v\le \frac1\rho\|v\|_2^2.
\]
For the lower bound, by Cauchy--Schwarz,
\(
(M_t^\top v)^2
\le
\|M_t\|_2^2\|v\|_2^2.
\)
Hence
\[
\frac{(M_t^\top v)^2}{\rho(\rho+\|M_t\|_2^2)}
\le
\frac{\|M_t\|_2^2\|v\|_2^2}{\rho(\rho+\|M_t\|_2^2)}.
\]
Therefore
\[
v^\top H_t v
\ge
\left\{
\frac1\rho
-
\frac{\|M_t\|_2^2}{\rho(\rho+\|M_t\|_2^2)}
\right\}
\|v\|_2^2
=
\frac{1}{\rho+\|M_t\|_2^2}
\|v\|_2^2.
\]
This proves \eqref{eq:Ht_quad_bounds}. Substituting
\(v=\nabla F(\theta_t)\) gives
\[
\nabla F(\theta_t)^\top H_t\nabla F(\theta_t)
\ge
\frac{1}{\rho+\|M_t\|_2^2}
\|\nabla F(\theta_t)\|_2^2,
\]
which proves \eqref{eq:Ht_grad_lower_nonvacuous}.
	\end{proof}

       \begin{proof}[Proof of Corollary~\ref{cor:nonvacuous_momentum_event}]
By Lemma~\ref{lem:momentum_recursion},
\[
\mathbb E\|M_t\|_2^2\le \bar M^2
\qquad
\text{for every }t\ge0.
\]
Therefore, by Markov's inequality,
\[
\mathbb P\left(\|M_t\|_2^2>M_\delta^2\right)
\le
\frac{\bar M^2}{M_\delta^2}
=
\frac{\delta_M}{T}.
\]
Applying the union bound over \(t=0,\ldots,T-1\) gives
\[
\mathbb P\left(
\max_{0\le t\le T-1}\|M_t\|_2^2>M_\delta^2
\right)
\le
\sum_{t=0}^{T-1}
\mathbb P\left(\|M_t\|_2^2>M_\delta^2\right)
\le
\delta_M.
\]
Thus, with probability at least \(1-\delta_M\),
\[
\|M_t\|_2^2\le M_\delta^2
\qquad
\text{for all }t=0,\ldots,T-1.
\]
On this event, Lemma~\ref{lem:SM_bounds} implies
\[
\nabla F(\theta_t)^\top H_t\nabla F(\theta_t)
\ge
\frac{1}{\rho+\|M_t\|_2^2}
\|\nabla F(\theta_t)\|_2^2
\ge
\frac{1}{\rho+M_\delta^2}
\|\nabla F(\theta_t)\|_2^2.
\]
This proves the claim.
\end{proof}

		\begin{proof}[Proof of Lemma~\ref{lem:one_step}]
		Set
		\[
		\Delta_t:=\theta_{t+1}-\theta_t=-\eta H_t G_t.
		\]
		By \(L\)-smoothness, \eqref{eq:smoothness_descent} gives
		\begin{equation}
			F(\theta_{t+1})
			\le
			F(\theta_t)
			+
			\langle \nabla F(\theta_t),\Delta_t\rangle
			+
			\frac{L}{2}\|\Delta_t\|_2^2.
			\label{eq:smooth_start}
		\end{equation}
		Substituting \(\Delta_t=-\eta H_tG_t\) yields
		\begin{equation}
			F(\theta_{t+1})
			\le
			F(\theta_t)
			-
			\eta \langle \nabla F(\theta_t),H_tG_t\rangle
			+
			\frac{L\eta^2}{2}\|H_tG_t\|_2^2.
			\label{eq:smooth_start2}
		\end{equation}
		We bound the linear and quadratic terms separately.
		
		\medskip
		\noindent
		\textbf{Step 1: decomposition of the linear term.}
		
		Using \eqref{eq:gradient_decomposition},
		\[
		G_t=\nabla F(\theta_t)+\zeta_t+\xi_t.
		\]
		Therefore
		\begin{equation}
			\langle \nabla F(\theta_t),H_tG_t\rangle
			=
			\nabla F(\theta_t)^\top H_t \nabla F(\theta_t)
			+
			\nabla F(\theta_t)^\top H_t \zeta_t
			+
			\nabla F(\theta_t)^\top H_t \xi_t.
			\label{eq:linear_expand}
		\end{equation}
		
		All bounds below are \emph{pathwise}, i.e. they hold for each realized
		\(M_t\) and \(\xi_t\). Hence no conditional independence between \(H_t\) and \(\xi_t\) is needed.
		
		\medskip
		\noindent
		\textbf{Step 2: lower bound on the curvature term.}
		
		Using the Sherman--Morrison representation \eqref{eq:SM_formula}, we also have
the conservative bound
\[
\nabla F(\theta_t)^\top H_t\nabla F(\theta_t)
\ge
\frac{1}{\rho}\|\nabla F(\theta_t)\|_2^2
-
\frac{\|M_t\|_2^2}{\rho(\rho+\|M_t\|_2^2)}
\|\nabla F(\theta_t)\|_2^2.
\]
Since
\[
\frac{1}{\rho(\rho+\|M_t\|_2^2)}
\le
\frac{1}{\rho^2}
\]
and \(\|\nabla F(\theta_t)\|_2\le G_{\max}\), this yields
\begin{equation}
\nabla F(\theta_t)^\top H_t\nabla F(\theta_t)
\ge
\frac{1}{\rho}\|\nabla F(\theta_t)\|_2^2
-
\frac{G_{\max}^2}{\rho^2}\|M_t\|_2^2.
\label{eq:curvature_lower}
\end{equation}
		
		\medskip
		\noindent
		\textbf{Step 3: control of the bias cross-term.}
		
		Using Cauchy--Schwarz, \eqref{eq:Ht_op_bd}, and Assumption~\ref{ass:bias},
		\begin{align}
			\bigl|
			\nabla F(\theta_t)^\top H_t \zeta_t
			\bigr|
			&\le
			\|\nabla F(\theta_t)\|_2\,\|H_t\zeta_t\|_2
			\le
			\frac{\|\nabla F(\theta_t)\|_2\,\|\zeta_t\|_2}{\rho}
			\le
			\frac{\|\nabla F(\theta_t)\|_2\,\zeta_{\max}}{\rho}.
			\label{eq:bias_cross}
		\end{align}
		Apply Young's inequality \(ab\le \frac{\tau_1}{2}a^2+\frac{1}{2\tau_1}b^2\)
		with
		\[
		a=\|\nabla F(\theta_t)\|_2,
		\qquad
		b=\frac{\zeta_{\max}}{\rho},
		\]
		to obtain
		\begin{equation}
			\bigl|
			\nabla F(\theta_t)^\top H_t \zeta_t
			\bigr|
			\le
			\frac{\tau_1}{2}\|\nabla F(\theta_t)\|_2^2
			+
			\frac{\zeta_{\max}^2}{2\tau_1\rho^2}.
			\label{eq:bias_young}
		\end{equation}
		
		\medskip
		\noindent
		\textbf{Step 4: control of the noise cross-term.}
		
		Similarly,
		\begin{align}
			\bigl|
			\nabla F(\theta_t)^\top H_t \xi_t
			\bigr|
			&\le
			\|\nabla F(\theta_t)\|_2\,\|H_t\xi_t\|_2
			\le
			\frac{\|\nabla F(\theta_t)\|_2\,\|\xi_t\|_2}{\rho}.
			\label{eq:noise_cross}
		\end{align}
		Applying Young's inequality with
		\[
		a=\|\nabla F(\theta_t)\|_2,
		\qquad
		b=\frac{\|\xi_t\|_2}{\rho},
		\]
		gives
		\begin{equation}
			\bigl|
			\nabla F(\theta_t)^\top H_t \xi_t
			\bigr|
			\le
			\frac{\tau_2}{2}\|\nabla F(\theta_t)\|_2^2
			+
			\frac{\|\xi_t\|_2^2}{2\tau_2\rho^2}.
			\label{eq:noise_young}
		\end{equation}
		
		Combining \eqref{eq:linear_expand}, \eqref{eq:curvature_lower},
		\eqref{eq:bias_young}, and \eqref{eq:noise_young}, we obtain the
		pathwise lower bound
		\begin{align}
			\langle \nabla F(\theta_t),H_tG_t\rangle
			&\ge
			\left(
			\frac{1}{\rho}-\frac{\tau_1+\tau_2}{2}
			\right)
			\|\nabla F(\theta_t)\|_2^2
			-
			\frac{G_{\max}^2}{\rho^2}\|M_t\|_2^2
			\label{eq:linear_lower}
			\\
			&\quad
			-
			\frac{\zeta_{\max}^2}{2\tau_1\rho^2}
			-
			\frac{\|\xi_t\|_2^2}{2\tau_2\rho^2}.
			\nonumber
		\end{align}
		
		\medskip
		\noindent
		\textbf{Step 5: upper bound on the quadratic term.}
		
		By \eqref{eq:Ht_square_bd},
		\[
		\|H_tG_t\|_2^2
		\le
		\frac{1}{\rho^2}\|G_t\|_2^2.
		\]
		Taking conditional expectation given \(\mathcal F_t\) and then applying  Lemma~\ref{lem:gclip_norm},
		\begin{equation}
			\mathbb E\!\left[\|H_tG_t\|_2^2\mid \mathcal F_t\right]
			\le
			\frac{1}{\rho^2}
			\mathbb E\!\left[\|G_t\|_2^2\mid \mathcal F_t\right]
			\le
			\frac{C_g^2+d\nu^2}{\rho^2}.
			\label{eq:quad_bound}
		\end{equation}
		
		\medskip
		\noindent
		\textbf{Step 6: combine the bounds and take conditional expectation.}
		
		Substitute \eqref{eq:linear_lower} into \eqref{eq:smooth_start2}:
		\begin{align*}
			F(\theta_{t+1})
			&\le
			F(\theta_t)
			-
			\eta
			\left(
			\frac{1}{\rho}-\frac{\tau_1+\tau_2}{2}
			\right)
			\|\nabla F(\theta_t)\|_2^2
			+
			\frac{\eta G_{\max}^2}{\rho^2}\|M_t\|_2^2
			\\
			&\quad
			+
			\frac{\eta\zeta_{\max}^2}{2\tau_1\rho^2}
			+
			\frac{\eta}{2\tau_2\rho^2}\|\xi_t\|_2^2
			+
			\frac{L\eta^2}{2}\|H_tG_t\|_2^2.
		\end{align*}
		Now we condition on \(\mathcal F_t\). Since \(F(\theta_t)\) and \(\nabla F(\theta_t)\) are \(\mathcal F_t\)-measurable,
		\[
		\mathbb E[\|\xi_t\|_2^2\mid \mathcal F_t]=d\nu_t^2 \le d\nu^2,
		\]
		and \eqref{eq:quad_bound} holds, we obtain
		\begin{align*}
			\mathbb E\!\left[F(\theta_{t+1})\mid \mathcal F_t\right]
			&\le
			F(\theta_t)
			-
			\eta
			\left(
			\frac{1}{\rho}-\frac{\tau_1+\tau_2}{2}
			\right)
			\|\nabla F(\theta_t)\|_2^2
			+
			\frac{\eta G_{\max}^2}{\rho^2}
			\mathbb E\!\left[\|M_t\|_2^2\mid \mathcal F_t\right]
			\\
			&\quad
			+
			\frac{\eta\,\zeta_{\max}^2}{2\tau_1\rho^2}
			+
			\frac{\eta\,d\,\nu^2}{2\tau_2\rho^2}
			+
			\frac{L\eta^2}{2\rho^2}\bigl(C_g^2+d\nu^2\bigr).
		\end{align*}
		This completes the proof.
	\end{proof}

	
	\begin{proof}[Proof of Lemma~\ref{lem:one_step_unconditional}]
		Take total expectation in \eqref{eq:descent_conditional_pre}:
		\begin{align*}
			\mathbb E[F(\theta_{t+1})]
			&\le
			\mathbb E[F(\theta_t)]
			-
			\eta c_\nabla\,\mathbb E\|\nabla F(\theta_t)\|_2^2
			+
			\frac{\eta G_{\max}^2}{\rho^2}\mathbb E\|M_t\|_2^2
			\\
			&\quad
			+
			\frac{\eta\,\zeta_{\max}^2}{2\tau_1\rho^2}
			+
			\frac{\eta\,d\,\nu^2}{2\tau_2\rho^2}
			+
			\frac{L\eta^2}{2\rho^2}\bigl(C_g^2+d\nu^2\bigr).
		\end{align*}
		By Lemma~\ref{lem:momentum_recursion},
		\[
		\mathbb E\|M_t\|_2^2\le \bar M^2.
		\]
		Substituting this bound yields \eqref{eq:one_step_uncond}.
	\end{proof}
	
	\begin{proof}[Proof of Theorem~\ref{thm:conv_sc}]
		By Lemma~\ref{lem:one_step_unconditional},
		\[
		\mathbb E[F(\theta_{t+1})]
		\le
		\mathbb E[F(\theta_t)]
		-
		\eta c_\nabla\,\mathbb E\|\nabla F(\theta_t)\|_2^2
		+
		\Gamma.
		\]
		By \eqref{eq:sc_grad_gap}, strong convexity implies the pointwise bound  
		\[
		\|\nabla F(\theta_t)\|_2^2
		\ge
		2\mu\bigl(F(\theta_t)-F(\theta^\star)\bigr).
		\]
		Taking expectations preserves the inequality:
		\[
		\mathbb E\|\nabla F(\theta_t)\|_2^2
		\ge
		2\mu\,\mathbb E\!\left[F(\theta_t)-F(\theta^\star)\right].
		\]
		Substituting this into the previous display gives
		\begin{equation}
			\mathbb E[F(\theta_{t+1})-F(\theta^\star)]
			\le
			\bigl(1-2\mu\eta c_\nabla\bigr)
			\mathbb E[F(\theta_t)-F(\theta^\star)]
			+
			\Gamma.
			\label{eq:gap_recursion}
		\end{equation}
		Define
		\[
		\Delta_t:=\mathbb E[F(\theta_t)-F(\theta^\star)].
		\]
		Then \eqref{eq:gap_recursion} becomes
		\[
		\Delta_{t+1}\le r\Delta_t+\Gamma.
		\]
		Repeatedly applying this recursion yields
		\[
		\Delta_T
		\le
		r^T\Delta_0+\Gamma\sum_{j=0}^{T-1}r^j
		=
		r^T\Delta_0+\Gamma\frac{1-r^T}{1-r},
		\]
		which proves \eqref{eq:main_bound_geom}.
		Since \(1-r=2\mu\eta c_\nabla\) and \(1-r^T\le 1\), we obtain
		\eqref{eq:main_bound}.
	\end{proof}
    

    \begin{proof}[Proof of Lemma~\ref{lem:dp-fedgd-onestep}]
By \(L\)-smoothness,
\[
F(\theta_{t+1}^{\rm GD})
\le
F(\theta_t^{\rm GD})
-\eta\langle\nabla F(\theta_t^{\rm GD}),G_t\rangle
+\frac{L\eta^2}{2}\|G_t\|_2^2.
\]
Since \(G_t=\nabla F(\theta_t^{\rm GD})+\zeta_t+\xi_t\) and
\(\mathbb E[\xi_t\mid\mathcal F_t]=0\),
\[
\mathbb E[\langle\nabla F(\theta_t^{\rm GD}),G_t\rangle\mid\mathcal F_t]
=
\|\nabla F(\theta_t^{\rm GD})\|_2^2
+
\langle\nabla F(\theta_t^{\rm GD}),\zeta_t\rangle .
\]
Young's inequality gives
\[
-\langle\nabla F(\theta_t^{\rm GD}),\zeta_t\rangle
\le
\frac{\tau}{2}\|\nabla F(\theta_t^{\rm GD})\|_2^2
+
\frac{\zeta_{\max}^2}{2\tau}.
\]
Together with
\(\mathbb E[\|G_t\|_2^2\mid\mathcal F_t]\le C_g^2+d\nu^2\), this proves the
claim.
\end{proof}


\begin{proof}[Proof of Theorem~\ref{thm:dp-fedgd-floor}]
Taking expectations in Lemma~\ref{lem:dp-fedgd-onestep} and using strong
convexity,
\(\|\nabla F(\theta)\|_2^2\ge 2\mu\{F(\theta)-F(\theta^\star)\}\), gives
\[
\mathbb E[F(\theta_{t+1}^{\rm GD})-F(\theta^\star)]
\le
(1-2\mu\eta c_{\rm GD})
\mathbb E[F(\theta_t^{\rm GD})-F(\theta^\star)]
+
\Gamma_{\rm GD}.
\]
Iterating the recursion proves the result.
\end{proof}

	\begin{proof}[Proof of Theorem~\ref{thm:conv_pl}]
		Lemma~\ref{lem:one_step_unconditional} gives
		\[
		\mathbb E[F(\theta_{t+1})]
		\le
		\mathbb E[F(\theta_t)]
		-
		\eta c_\nabla\,\mathbb E\|\nabla F(\theta_t)\|_2^2
		+
		\Gamma.
		\]
		By the PL inequality \eqref{eq:pl_def},
		\[
		\|\nabla F(\theta_t)\|_2^2
		\ge
		2\mu_{\mathrm{PL}}\bigl(F(\theta_t)-F(\theta^\star)\bigr)
		\]
		holds pointwise, hence also after taking expectations:
		\[
		\mathbb E\|\nabla F(\theta_t)\|_2^2
		\ge
		2\mu_{\mathrm{PL}}\,
		\mathbb E\!\left[F(\theta_t)-F(\theta^\star)\right].
		\]
		Substituting this into the one-step descent bound yields
		\begin{equation}
			\mathbb E[F(\theta_{t+1})-F(\theta^\star)]
			\le
			\bigl(1-2\mu_{\mathrm{PL}}\eta c_\nabla\bigr)
			\mathbb E[F(\theta_t)-F(\theta^\star)]
			+
			\Gamma.
			\label{eq:pl_gap_recursion}
		\end{equation}
		Defining
		\[
		\Delta_t:=\mathbb E[F(\theta_t)-F(\theta^\star)],
		\]
		we obtain
		\[
		\Delta_{t+1}\le r_{\mathrm{PL}}\Delta_t+\Gamma.
		\]
		Iterating the recursion and summing the geometric series proves \eqref{eq:pl_bound_geom}; the simplified bound \eqref{eq:pl_bound} follows from \(1-r_{\mathrm{PL}}=2\mu_{\mathrm{PL}}\eta c_\nabla\).
	\end{proof}
	
		\begin{proof}[Proof of Proposition~\ref{prop:biased-aggregate-descent}]
		By $L$-smoothness,
		\[
		F(\theta_{t+1})
		\le
		F(\theta_t)
		-\eta\langle \nabla F(\theta_t),H_tG_t\rangle
		+
		\frac{L\eta^2}{2}\|H_tG_t\|_2^2.
		\]
		Write $G_t=\nabla F(\theta_t)+b_t+\xi_t$. Then
		\[
		-\langle \nabla F(\theta_t),H_tG_t\rangle
		=
		-\nabla F(\theta_t)^\top H_t\nabla F(\theta_t)
		-\nabla F(\theta_t)^\top H_tb_t
		-\nabla F(\theta_t)^\top H_t\xi_t.
		\]
		Using the same preconditioner bound \eqref{eq:curvature_lower},
		\[
		\nabla F(\theta_t)^\top H_t\nabla F(\theta_t)
		\ge
		\frac{1}{\rho}\|\nabla F(\theta_t)\|_2^2
		-
		\frac{G_{\max}^2}{\rho^2}\|M_t\|_2^2.
		\]
		Also,
		\[
		|\nabla F(\theta_t)^\top H_tb_t|
		\le
		\frac{1}{\rho}
		\|\nabla F(\theta_t)\|_2\|b_t\|_2
		\le
		\frac{\tau_1}{2}\|\nabla F(\theta_t)\|_2^2
		+
		\frac{B^2}{2\tau_1\rho^2},
		\]
		and similarly,
		\[
		\mathbb E\left[
		|\nabla F(\theta_t)^\top H_t\xi_t|
		\mid \mathcal F_t
		\right]
		\le
		\frac{\tau_2}{2}\|\nabla F(\theta_t)\|_2^2
		+
		\frac{d\nu^2}{2\tau_2\rho^2}.
		\]
		Finally, since $\|H_tG_t\|_2\le \rho^{-1}\|G_t\|_2$,
		\[
		\mathbb E[\|H_tG_t\|_2^2\mid\mathcal F_t]
		\le
		\frac{1}{\rho^2}
		\mathbb E[\|G_t\|_2^2\mid\mathcal F_t]
		\le
		\frac{G_0^2+d\nu^2}{\rho^2}.
		\]
		Combining the preceding bounds gives the conditional inequality. Taking
		expectations and using $\mathbb E\|M_t\|_2^2\le \overline M^2$ gives the
		unconditional bound.
	\end{proof}

    	\begin{proof}[Proof of Theorem~\ref{thm:nonconvex-stationarity}]
		From Lemma~\ref{lem:one_step_unconditional},
		\[
		\mathbb E[F(\theta_{t+1})]
		\leq
		\mathbb E[F(\theta_t)]
		-
		\eta c_\nabla
		\mathbb E\|\nabla F(\theta_t)\|_2^2
		+
		\Gamma.
		\]
		Rearranging gives
		\[
		\eta c_\nabla
		\mathbb E\|\nabla F(\theta_t)\|_2^2
		\leq
		\mathbb E[F(\theta_t)]
		-
		\mathbb E[F(\theta_{t+1})]
		+
		\Gamma.
		\]
		Summing from $t=0$ to $T-1$ yields
		\[
		\eta c_\nabla
		\sum_{t=0}^{T-1}
		\mathbb E\|\nabla F(\theta_t)\|_2^2
		\leq
		F(\theta_0)
		-
		\mathbb E[F(\theta_T)]
		+
		T\Gamma.
		\]
		Since $F(\theta_T)\geq F_{\inf}$, we obtain
		\[
		\eta c_\nabla
		\sum_{t=0}^{T-1}
		\mathbb E\|\nabla F(\theta_t)\|_2^2
		\leq
		F(\theta_0)-F_{\inf}+T\Gamma.
		\]
		Dividing by $\eta c_\nabla T$ gives the first claim. The randomized-iterate
		statement follows because
		\[
		\mathbb E\|\nabla F(\theta_R)\|_2^2
		=
		\frac{1}{T}\sum_{t=0}^{T-1}
		\mathbb E\|\nabla F(\theta_t)\|_2^2.
		\]
	\end{proof}

\begin{proof}[Proof of Corollary~\ref{cor:biased-nonconvex-stationarity}]
		The proof is identical to that of Theorem~\ref{thm:nonconvex-stationarity}, using
		the descent recursion in Proposition~\ref{prop:biased-aggregate-descent} with
		$\Gamma_B$ in place of $\Gamma$.
	\end{proof}
	
    \begin{proof}[Proof of Lemma~\ref{lem:privacy_aggregate}]
		
		The mapping
		\[
		(g_{1,t},\dots,g_{n,t})
		\mapsto
		\frac{1}{n}\sum_{i=1}^n g_{i,t}
		\]
		is deterministic.
		Therefore $G_t$ is a deterministic post-processing of the released  output $\mathcal{M}_t(\mathcal{D})$.
		
		Since differential privacy is invariant under post-processing  \cite{dwork2014algorithmic}, the privacy guarantee is preserved.  
	\end{proof}

    \begin{proof}[Proof of Lemma~\ref{lem:privacy_round}]
		
		Condition on the previous server state $(\theta_t,M_{t-1})$. Once this state is fixed, the quantities
		\[
		M_t = \beta M_{t-1} + (1-\beta)G_t,
		\qquad
		\widehat{\mathcal I}_t = \rho I_d + M_t M_t^\top,
		\qquad
		H_t = \widehat{\mathcal I}_t^{-1},
		\qquad
		\theta_{t+1} = \theta_t - \eta_t H_t G_t
		\]
		are deterministic functions of $G_t$.
		
		Thus the mapping
		\[
		G_t \mapsto (M_t,H_t,\theta_{t+1})
		\]
		is deterministic.
		The result therefore follows from the post-processing property of differential privacy.
		
	\end{proof}
	
	\section{Computational Complexity Analysis}
	\label{app:complexity}
	
	We now quantify the computational cost of DP-FedSOFIM and compare it with  matrix-based second-order alternatives. The key point is that the proposed server-side preconditioner preserves the linear-in-d structure of first-order federated optimization while introducing only a negligible additional cost beyond standard aggregation.
	
	\begin{lemma}[Per-round complexity of DP-FedSOFIM]
		\label{lem:complexity}
		Let $n$ denote the number of participating clients and let $d$ denote the model dimension. Under full participation, one communication round of DP-FedSOFIM has total client-side computational cost $O(nd)$ and server-side computational cost $O(nd)$. The additional cost incurred by the SOFIM preconditioning step itself is only $O(d)$ per round, with $O(d)$ server memory.
	\end{lemma}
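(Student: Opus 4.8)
The plan is to bound the floating-point operation count of each line of Algorithm~\ref{alg:dpsofim} in turn, treating the number of local examples processed by a client (the local batch size) as an absorbed constant, exactly as is standard when one quotes the $O(nd)$ per-round cost of first-order federated methods such as DP-FedGD. The only non-routine point — and the one the proof must make explicit — is that the preconditioner $H_t$ is never materialized as a dense $d\times d$ matrix: the server stores only the pair $(\rho,M_t)$ together with the scalar $\|M_t\|_2^2$, and every use of $H_t$ goes through the closed form \eqref{eq:inverse_fim}.

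For the client side, fix a client $i$. Each per-example gradient $\nabla\ell(\theta_t;x,y)$ of the trainable linear head, its $\ell_2$ norm, and the clipping rescaling cost $O(d)$; accumulating clipped gradients into $S_{i,t}$ costs $O(d)$ per example; drawing the isotropic Gaussian $E_{i,t}$ costs $O(d)$; and the normalization by $|\mathcal D_i|$ costs $O(d)$. Summing over the (constant) number of local examples, client $i$ performs $O(d)$ work, so the total client-side cost over all $n$ clients is $O(nd)$, with $O(d)$ memory per client (only $\theta_t$ and a running sum need be held).

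For the server side, the aggregation $G_t=\tfrac1n\sum_{i=1}^n g_{i,t}$ of \eqref{eq:aggregate} sums $n$ length-$d$ vectors and costs $O(nd)$; this term dominates and already appears in DP-FedGD. The SOFIM-specific steps are all $O(d)$: the momentum update \eqref{eq:momentum} is a convex combination of two length-$d$ vectors; maintaining $\|M_t\|_2^2$ is one inner product; and applying the preconditioner via
\[
H_tG_t=\frac{1}{\rho}G_t-\frac{M_t\,(M_t^\top G_t)}{\rho^2+\rho\|M_t\|_2^2}
\]
needs one further inner product ($M_t^\top G_t$), one scalar–vector scaling, and one vector subtraction, followed by the parameter update $\theta_{t+1}=\theta_t-\eta_t H_tG_t$ of \eqref{eq:preconditioned_update}, again $O(d)$. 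Hence the total server cost is $O(nd)$ and the additional cost beyond plain aggregation-and-descent is $O(d)$. The server state consists only of $\theta_t$, $M_{t-1}$, the current $G_t$, and a constant number of scalars, so server memory is $O(d)$.

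The main obstacle — such as it is — is purely conceptual: one must verify that at no point does the algorithm require the dense $M_tM_t^\top$ or $H_t$, since forming either would incur $O(d^2)$ time and memory and defeat the claim. This is exactly what the Sherman--Morrison identity buys: \eqref{eq:inverse_fim} writes $H_t$ as a scaled identity minus a rank-one term, and every occurrence of $H_t$ in Algorithm~\ref{alg:dpsofim} is through the vector $H_tG_t$, which the displayed formula evaluates with two inner products and $O(d)$ vector arithmetic. By contrast, Newton-type or feature-covariance methods must store and factor a $d\times d$ curvature matrix, giving $O(d^2)$ memory and $O(d^2)$–$O(d^3)$ time, which yields the comparison asserted in the surrounding text.
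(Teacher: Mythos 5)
Your proposal is correct and follows essentially the same route as the paper's own proof: bound each client-side vector operation by $O(d)$ (absorbing the local example count), let the $O(nd)$ server aggregation dominate, and observe that the momentum update and the Sherman--Morrison evaluation of $H_tG_t$ via two inner products and $O(d)$ vector arithmetic never require materializing $M_tM_t^\top$ or $H_t$, giving $O(d)$ extra time and memory. Your explicit emphasis that the preconditioner is only ever applied as a matrix--vector product is exactly the key point the paper's argument rests on, so nothing is missing.
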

	
	\begin{proof}
		We separate the computation into the client and server components.
		
		On the client side, each client $i$ computes per-example gradients in  $\mathbb R^d$, clips them according to \eqref{eq:clip}, forms the sum of  clipped gradients, and adds Gaussian noise before normalization. At the level of vector operations, both gradient clipping and the formation of the released update require work linear in $d$. Thus the computational cost per client is $O(d)$, and summing over the $n$ participating clients yields an overall client-side cost of $O(nd)$ per round.  
		
		On the server side, the first step is the aggregation
		\[
		G_t=\frac{1}{n}\sum_{i=1}^n g_{i,t},
		\]
		which requires summing $n$ vectors in $\mathbb R^d$ and therefore costs  $O(nd)$. The momentum update
		\[
		M_t=\beta M_{t-1}+(1-\beta)G_t
		\]
		requires one vector scaling and one vector addition, hence $O(d)$ operations.
		
		The distinctive step in DP-FedSOFIM is the application of the rank-one
		preconditioner. Using the Sherman--Morrison representation,
		\[
		H_t G_t
		=
		\frac{1}{\rho}G_t
		-
		\frac{M_t^\top G_t}{\rho^2+\rho\|M_t\|_2^2}\,M_t,
		\]
		the preconditioned direction can be computed using two inner products, one scalar division, one scalar-vector multiplication, and one vector subtraction. Each of these operations is linear in $d$, so the total cost of forming $H_tG_t$ is $O(d)$. The parameter update
		\[
		\theta_{t+1}=\theta_t-\eta H_tG_t
		\]
		is again a single vector operation and therefore costs $O(d)$.
		
		Combining these terms, the total server-side cost per round is
		\[
		O(nd)+O(d)=O(nd),
		\]
		with the aggregation step dominating the asymptotics. The additional curvature computation introduced by DP-FedSOFIM is therefore only $O(d)$ beyond the standard first-order federated pipeline. Finally, since the server stores only the current vectors $G_t$, $M_t$, and $\theta_t$, the additional server memory required by the preconditioner is $O(d)$.
		
		In contrast, matrix-based second-order methods that maintain a dense curvature surrogate require at least $O(d^2)$ storage and typically $O(d^3)$ inversion or factorization cost per round. DP-FedSOFIM avoids these costs by exploiting the rank-one structure of the SOFIM approximation.
	\end{proof}
	
	The preceding result shows that DP-FedSOFIM preserves the principal computational advantage of first-order federated optimization. In particular, the method does not require clients to transmit or store any matrix-valued curvature information, and all curvature-aware computation remains confined to a linear-time server-side post-processing step.

    \subsection*{Runtime Analysis}
Table~\ref{tab:runtime} reports the average wall-clock time per communication
round across 70 rounds on CIFAR-10 and PathMNIST under both backbone
architectures ($n=20$, non-IID Dirichlet $\alpha=0.5$, seed=42). All methods
use $K=1$ local step per round except DP-SCAFFOLD and DP-AdaFedProx at
$\varepsilon \geq 1$, which use the $K$ values selected by hyperparameter
search at each privacy level. VGG-16 incurs approximately $3.5$--$4\times$
longer per-round times than ResNet-20 across all methods and datasets, owing to
its larger feature dimension ($512$ vs.\ $64$), which increases the
per-example gradient computation cost. PathMNIST runs are slightly slower than
CIFAR-10 under the same backbone due to its larger training set (${\approx}4500$
samples per client vs.\ ${\approx}2500$ for CIFAR-10).
DP-FedSOFIM adds negligible overhead over DP-FedGD in all four settings
($<\!2\%$), confirming the $O(d)$ complexity of the Sherman--Morrison
preconditioning step established in Lemma~B.1.
DP-SCAFFOLD incurs significantly higher training cost at $\varepsilon \geq 1$
due to its multiple local steps ($K=5$ at $\varepsilon=1$, $K=7$ at
$\varepsilon \geq 5$), reaching up to ${\approx}6\times$ the wall-clock time
of DP-FedSOFIM on PathMNIST/VGG at $\varepsilon \geq 5$.
DP-AdaFedProx likewise scales with $K$, requiring ${\approx}4$--$5\times$
longer total training time than DP-FedSOFIM at $\varepsilon \geq 1$.

\begin{table}[h]
\centering
\small
\begin{threeparttable}
\caption{Average wall-clock time per round (seconds), $n=20$ clients,
non-IID Dirichlet $\alpha=0.5$, seed=42, $T=70$ rounds. Runtimes for all
methods except DP-SCAFFOLD and DP-AdaFedProx are invariant to $\varepsilon$.}
\label{tab:runtime}
\setlength{\tabcolsep}{5pt}
\begin{tabular}{@{}lcccc@{}}
\toprule
& \multicolumn{2}{c}{\textbf{CIFAR-10}} 
& \multicolumn{2}{c}{\textbf{PathMNIST}} \\
\cmidrule(lr){2-3}\cmidrule(lr){4-5}
\textbf{Algorithm} 
  & ResNet-20 & VGG-16 
  & ResNet-20 & VGG-16 \\
\midrule
DP-FedGD              & 0.04 & 0.15 & 0.05 & 0.23 \\
DP-FedSOFIM           & 0.04 & 0.15 & 0.05 & 0.22 \\
DP-FedAvg             & 0.04 & 0.14 & 0.05 & 0.23 \\
DP-FedAdam            & 0.04 & 0.15 & 0.06 & 0.22 \\
DP-FedYogi            & 0.04 & 0.15 & 0.05 & 0.22 \\
DP-FedFC              & 0.04 & 0.14 & 0.05 & 0.22 \\
DP-FTRL               & 0.04 & 0.16 & 0.05 & 0.23 \\
DP-AdaFedProx ($\varepsilon=0.5$, $K=1$) 
                      & 0.04 & 0.15 & 0.05 & 0.22 \\
DP-AdaFedProx ($\varepsilon \geq 1$, $K=5$) 
                      & 0.18 & 0.71 & 0.23 & 1.07 \\
\midrule
\multicolumn{5}{@{}l}{\textit{DP-SCAFFOLD}$^\dagger$ 
  \textit{(per privacy level)}} \\
\quad $\varepsilon=0.5$ ($K=1$) & 0.04 & 0.14 & 0.05 & 0.21 \\
\quad $\varepsilon=1$   ($K=5$) & 0.17 & 0.68 & 0.21 & 1.07 \\
\quad $\varepsilon=5$   ($K=7$) & 0.24 & 0.93 & 0.30 & 1.43 \\
\quad $\varepsilon=10$  ($K=7$) & 0.24 & 0.93 & 0.30 & 1.44 \\
\bottomrule
\end{tabular}
\begin{tablenotes}
\small
\item[$\dagger$] DP-SCAFFOLD uses $K \in \{1,5,7,7\}$ local steps at
  $\varepsilon \in \{0.5,1,5,10\}$ respectively. All other methods use $K=1$.
  Total training time $= \text{avg round time} \times 70$ plus fixed
  initialisation overhead ($<\!1$\,s).
\end{tablenotes}
\end{threeparttable}
\end{table}
	\section{Variance Reduction via the Momentum Buffer}
	
	An important feature of DP-FedSOFIM is that the curvature proxy is not formed from a single privatized aggregate, but from the exponentially weighted moving average
	\[
	M_t=\beta M_{t-1}+(1-\beta)G_t.
	\]
	Since the aggregated gradient $G_t$ contains Gaussian privacy noise, this recursion implicitly smooths the noise across rounds. The next result makes this effect precise.
	
	\begin{lemma}[Variance reduction induced by exponential averaging]
		\label{lem:momentum_variance}
		Suppose that the aggregated gradient admits the decomposition
		\[
		G_t=\nabla F(\theta_t)+\xi_t,
		\]
		where $\{\xi_t\}_{t\ge 0}$ are independent mean-zero Gaussian vectors with
		covariance $\nu^2 I_d$. If
		\[
		M_t=(1-\beta)\sum_{s=0}^{t}\beta^{\,t-s}G_s,
		\qquad \beta\in[0,1),
		\]
		then the noise component of $M_t$ has covariance
		\[
		\mathrm{Cov}(M_t)
		=
		\nu^2(1-\beta)^2
		\sum_{s=0}^{t}\beta^{2(t-s)}I_d
		=
		\nu^2(1-\beta)^2
		\frac{1-\beta^{2(t+1)}}{1-\beta^2}\,I_d.
		\]
		Consequently, as $t\to\infty$,
		\[
		\mathrm{Cov}(M_t)
		\longrightarrow
		\frac{1-\beta}{1+\beta}\,\nu^2 I_d.
		\]
	\end{lemma}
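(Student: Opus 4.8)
The plan is to isolate the stochastic part of the exponentially weighted average, compute its covariance using only the mutual independence of the per-round privacy noise, and then evaluate a single geometric series before passing to the limit.

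First I would substitute $G_s=\nabla F(\theta_s)+\xi_s$ into the closed form for $M_t$ assumed in the statement, splitting $M_t$ into a signal term $(1-\beta)\sum_{s=0}^{t}\beta^{t-s}\nabla F(\theta_s)$ and a noise term $N_t:=(1-\beta)\sum_{s=0}^{t}\beta^{\,t-s}\xi_s$; the quantity $\mathrm{Cov}(M_t)$ in the lemma is exactly $\mathrm{Cov}(N_t)$. (This closed form is simply the unrolling of \eqref{eq:momentum} with $M_{-1}=\mathbf 0_d$, so no additional hypothesis is invoked.) By bilinearity of covariance,
\[
\mathrm{Cov}(N_t)
=
(1-\beta)^2\sum_{s=0}^{t}\sum_{s'=0}^{t}\beta^{\,t-s}\beta^{\,t-s'}\,\mathbb E[\xi_s\xi_{s'}^\top],
\]
and since the $\xi_s$ are independent and mean-zero we have $\mathbb E[\xi_s\xi_{s'}^\top]=0$ for $s\neq s'$ and $\mathbb E[\xi_s\xi_s^\top]=\nu^2 I_d$, leaving only the diagonal:
\[
\mathrm{Cov}(N_t)
=
\nu^2(1-\beta)^2\Bigl(\sum_{s=0}^{t}\beta^{2(t-s)}\Bigr)I_d.
\]

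Next I would re-index with $k=t-s$ and sum the finite geometric series $\sum_{k=0}^{t}(\beta^2)^k=\frac{1-\beta^{2(t+1)}}{1-\beta^2}$, which is legitimate because $\beta\in[0,1)$ forces $\beta^2\neq 1$; this produces exactly the closed form displayed in the lemma. Letting $t\to\infty$ and using $\beta^{2(t+1)}\to 0$ (again because $0\le\beta<1$), the bracketed sum tends to $\frac{1}{1-\beta^2}$, and factoring $1-\beta^2=(1-\beta)(1+\beta)$ gives $\frac{(1-\beta)^2\nu^2}{1-\beta^2}=\frac{1-\beta}{1+\beta}\nu^2$, the claimed limiting covariance.

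There is no genuinely difficult step: this is an elementary second-moment computation. The only points deserving care are that the cross terms $\mathbb E[\xi_s\xi_{s'}^\top]$ with $s\neq s'$ vanish purely by independence and zero mean — so nothing need be assumed about the signal terms $\nabla F(\theta_s)$, which may themselves depend on past noise — and that $\beta$ is strictly less than $1$, which is precisely what makes both the geometric sum finite and the $t\to\infty$ limit meaningful (the edge case $\beta=0$ recovers $\mathrm{Cov}(M_t)=\nu^2 I_d$, consistent with no averaging).
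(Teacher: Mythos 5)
Your proposal is correct and follows essentially the same route as the paper's proof: isolate the noise component $(1-\beta)\sum_{s=0}^{t}\beta^{\,t-s}\xi_s$, use independence and zero mean to eliminate cross terms, evaluate the geometric series, and pass to the limit via $1-\beta^2=(1-\beta)(1+\beta)$. Your remark that the signal terms may depend on past noise (so the statement is really about the covariance of the noise component) is a fair clarification of the same interpretation the paper adopts implicitly.
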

	
	\begin{proof}
		Expanding the recursion for $M_t$ gives
		\[
		M_t=(1-\beta)\sum_{s=0}^{t}\beta^{\,t-s}
		\bigl(\nabla F(\theta_s)+\xi_s\bigr).
		\]
		Only the Gaussian noise terms contribute to the covariance, so it suffices to
		consider
		\[
		M_t^{\mathrm{noise}}
		=
		(1-\beta)\sum_{s=0}^{t}\beta^{\,t-s}\xi_s.
		\]
		Because the vectors $\xi_s$ are independent and satisfy
		\[
		\mathbb E[\xi_s]=0,
		\qquad
		\mathrm{Cov}(\xi_s)=\nu^2 I_d,
		\]
		all cross-covariance terms vanish, and we obtain
		\begin{align*}
			\mathrm{Cov}(M_t)
			&=
			(1-\beta)^2
			\sum_{s=0}^{t}\beta^{2(t-s)}\mathrm{Cov}(\xi_s) \\
			&=
			\nu^2(1-\beta)^2
			\sum_{s=0}^{t}\beta^{2(t-s)}I_d.
		\end{align*}
		Evaluating the geometric sum yields
		\[
		\sum_{s=0}^{t}\beta^{2(t-s)}
		=
		\sum_{j=0}^{t}\beta^{2j}
		=
		\frac{1-\beta^{2(t+1)}}{1-\beta^2},
		\]
		and therefore
		\[
		\mathrm{Cov}(M_t)
		=
		\nu^2(1-\beta)^2
		\frac{1-\beta^{2(t+1)}}{1-\beta^2}\,I_d.
		\]
		Since $1-\beta^2=(1-\beta)(1+\beta)$, the limit as $t\to\infty$ is
		\[
		\mathrm{Cov}(M_t)
		=
		\frac{1-\beta}{1+\beta}\,\nu^2 I_d.
		\]
		This proves the claim.
	\end{proof}
	
	The lemma shows that the moving-average buffer attenuates the isotropic privacy noise by the factor \((1-\beta)/(1+\beta)\). For values such as 
	\(\beta=0.9\), this factor is approximately \(0.0526\), corresponding to a variance reduction of about \(19\times\) relative to the raw per-round noise. This provides a useful quantitative explanation for why the rank-one preconditioner built from \(M_t\) remains stable even when the individual privatized aggregates are substantially corrupted by Gaussian noise.
	
	\section{Sherman--Morrison Derivation for the Regularized SOFIM Preconditioner}
	\label{app:sherman-morrison}
	
	For completeness, we record the derivation of the closed-form inverse used in DP-FedSOFIM. At round $t$, the server constructs the regularized rank-one curvature approximation
	\[
	\widehat{\mathcal F}_t=M_tM_t^\top+\rho I_d,
	\]
	with regularization parameter $\rho>0$. The corresponding preconditioner is
	\[
	H_t=\widehat{\mathcal F}_t^{-1}.
	\]
	A direct inversion of a dense $d\times d$ matrix would cost $O(d^3)$, but the rank-one structure of \(\widehat{\mathcal F}_t\) permits an exact closed form.
	
	The Sherman--Morrison identity states that for any invertible matrix $A$ and vectors $u,v$ such that \(1+v^\top A^{-1}u\neq 0\),
	\[
	(A+uv^\top)^{-1}
	=
	A^{-1}
	-
	\frac{A^{-1}uv^\top A^{-1}}{1+v^\top A^{-1}u}.
	\]
	Applying this identity with
	\[
	A=\rho I_d,
	\qquad
	u=v=M_t,
	\]
	we obtain
	\begin{align*}
		H_t
		&=
		(\rho I_d+M_tM_t^\top)^{-1} \\
		&=
		\frac{1}{\rho}I_d
		-
		\frac{\frac{1}{\rho}M_tM_t^\top\frac{1}{\rho}}
		{1+M_t^\top\frac{1}{\rho}M_t}.
	\end{align*}
	Since
	\[
	1+M_t^\top\frac{1}{\rho}M_t
	=
	\frac{\rho+\|M_t\|_2^2}{\rho},
	\]
	this simplifies to
	\[
	H_t
	=
	\frac{1}{\rho}I_d
	-
	\frac{M_tM_t^\top}{\rho^2+\rho\|M_t\|_2^2},
	\]
	which is exactly the formula used in the main text.
	
	This representation is particularly valuable because one never needs to form the matrix \(H_t\) explicitly. Indeed, applying the preconditioner to the aggregated gradient yields
	\[
	H_tG_t
	=
	\left(
	\frac{1}{\rho}I_d
	-
	\frac{M_tM_t^\top}{\rho^2+\rho\|M_t\|_2^2}
	\right)G_t
	=
	\frac{1}{\rho}G_t
	-
	\frac{M_t^\top G_t}{\rho^2+\rho\|M_t\|_2^2}M_t.
	\]
	Thus the action of \(H_t\) on \(G_t\) reduces to two inner products and a small number of vector operations, all of which are linear in \(d\). This is why the preconditioning step adds only \(O(d)\) computation and \(O(d)\) storage to the standard federated gradient pipeline.
	
	The formula also makes the geometry transparent. Along the direction spanned by \(M_t\), the effective scaling is
	\[
	\frac{1}{\rho+\|M_t\|_2^2},
	\]
	whereas any direction orthogonal to \(M_t\) is scaled by \(1/\rho\). The  preconditioner therefore contracts the update more strongly along the dominant historical gradient direction while leaving orthogonal directions less damped. This anisotropic rescaling is precisely the mechanism through which the rank-one SOFIM approximation captures curvature information at linear cost.
	
	\section{Hockey-Stick Divergence and Privacy Accounting}
	\label{app:privacy-accounting}
	
	We now describe the privacy accounting used to calibrate the Gaussian noise multiplier \(\sigma_g\) for a target privacy budget \((\varepsilon,\delta)\) over \(T\) communication rounds under full client participation. Since the client-side release mechanism in DP-FedSOFIM is identical to that of DP-FedGD, the privacy analysis is entirely determined by the noisy release of the clipped gradients; the momentum buffer, Fisher proxy construction, and Sherman--Morrison preconditioning are deterministic post-processing steps and therefore do not contribute additional privacy loss.
	
	At round \(t\), client \(i\) releases
	\[
	g_{i,t}
	=
	\frac{1}{|\mathcal D_i|}
	\left(
	\sum_{(x,y)\in\mathcal D_i}
	\bar g(x,y)
	+
	E_{i,t}
	\right),
	\qquad
	E_{i,t}\sim
	\mathcal N\!\left(0,\frac{(C_g\sigma_g)^2}{n}I_d\right).
	\]
	We adopt record-level replace-one adjacency: two federated datasets are  neighboring if they differ in exactly one record in exactly one client's local dataset. Under this notion of adjacency, changing a single record can alter the sum of clipped gradients on client \(i\) by at most \(2C_g\) in Euclidean norm. After normalization by \(|\mathcal D_i|\), the \(\ell_2\)-sensitivity of the released client update is therefore
	\[
	\Delta_i=\frac{2C_g}{|\mathcal D_i|}.
	\]
	To obtain a single conservative bound across all clients, we use
	\[
	|\mathcal D_{\min}|:=\min_i |\mathcal D_i|,
	\qquad
	\Delta:=\frac{2C_g}{|\mathcal D_{\min}|}.
	\]
	
	Because the Gaussian perturbation is added before normalization, the released vector \(g_{i,t}\) has noise standard deviation
	\[
	\sigma_{\mathrm{release}}
	=
	\frac{C_g\sigma_g}{\sqrt n\,|\mathcal D_i|}.
	\]
	Again, for conservative accounting, we upper bound using \(|\mathcal D_{\min}|\), giving
	\[
	\sigma_{\mathrm{release}}
	=
	\frac{C_g\sigma_g}{\sqrt n\,|\mathcal D_{\min}|}.
	\]
	
	For a single Gaussian mechanism with sensitivity \(\Delta\) and noise standard deviation \(\sigma_{\mathrm{release}}\), the exact
	\((\varepsilon,\delta)\)-tradeoff under the hockey-stick divergence is
	\[
	\delta(\varepsilon)
	=
	\Phi\!\left(
	-\frac{\varepsilon\sigma_{\mathrm{release}}}{\Delta}
	+
	\frac{\Delta}{2\sigma_{\mathrm{release}}}
	\right)
	-
	e^\varepsilon
	\Phi\!\left(
	-\frac{\varepsilon\sigma_{\mathrm{release}}}{\Delta}
	-
	\frac{\Delta}{2\sigma_{\mathrm{release}}}
	\right),
	\]
	where \(\Phi\) denotes the standard normal distribution function. Under
	\(T\) rounds with independent Gaussian perturbations, the corresponding  composed expression becomes
	\[
	\delta(\varepsilon)
	=
	\Phi\!\left(
	-\frac{\varepsilon\sigma_{\mathrm{release}}}{\sqrt{T}\,\Delta}
	+
	\frac{\sqrt{T}\,\Delta}{2\sigma_{\mathrm{release}}}
	\right)
	-
	e^\varepsilon
	\Phi\!\left(
	-\frac{\varepsilon\sigma_{\mathrm{release}}}{\sqrt{T}\,\Delta}
	-
	\frac{\sqrt{T}\,\Delta}{2\sigma_{\mathrm{release}}}
	\right).
	\]
	
	Substituting the mechanism parameters
	\[
	\Delta=\frac{2C_g}{|\mathcal D_{\min}|},
	\qquad
	\sigma_{\mathrm{release}}
	=
	\frac{C_g\sigma_g}{\sqrt n\,|\mathcal D_{\min}|},
	\]
	yields
	\[
	\delta(\varepsilon)
	=
	\Phi\!\left(
	\frac{\sqrt{nT}}{\sigma_g}
	-
	\frac{\varepsilon\sigma_g}{2\sqrt{nT}}
	\right)
	-
	e^\varepsilon
	\Phi\!\left(
	-\frac{\sqrt{nT}}{\sigma_g}
	-
	\frac{\varepsilon\sigma_g}{2\sqrt{nT}}
	\right).
	\]
	Given a target pair \((\varepsilon,\delta)\), we determine the smallest
	\(\sigma_g\) satisfying the inequality
	\[
	\delta(\varepsilon)\le \delta
	\]
	by a one-dimensional numerical search, such as binary search over a suitably large interval.
	
	A few remarks are worth making. First, because all clients participate at every round, there is no privacy amplification by subsampling in the present setting; the accounting above therefore corresponds directly to full participation. Second, the factor \(1/\sqrt n\) appearing in the release noise scale arises solely from the client-side parameterization of the Gaussian perturbation in \eqref{eq:client_release}; it should not be interpreted as a consequence of subsampling or privacy amplification. Third, the hockey-stick divergence yields the exact privacy tradeoff for the Gaussian mechanism and is therefore tighter than moment-based upper bounds such as standard Rényi-DP conversions in many regimes.
	
	Most importantly for the present paper, the privacy guarantee of DP-FedSOFIM is identical to that of the underlying DP-FedGD release mechanism. The server-side updates
	\[
	M_t=\beta M_{t-1}+(1-\beta)G_t,
	\qquad
	H_t=(\rho I_d+M_tM_t^\top)^{-1},
	\qquad
	\theta_{t+1}=\theta_t-\eta H_tG_t
	\]
	depend only on previously released privatized quantities and therefore preserve privacy by the post-processing theorem. Consequently, once the noise multiplier \(\sigma_g\) has been calibrated for the client-side Gaussian mechanism, the same \((\varepsilon,\delta)\)-DP guarantee carries over unchanged to DP-FedSOFIM.

	\section{Privacy Analysis of the Gaussian Mechanism}
	\label{app:privacy}
	
	This appendix derives the differential privacy parameters of the privatized client update defined in \eqref{eq:client_release}.
	
	\paragraph{Sensitivity of the clipped client update}
	
	Let $\mathcal D_i$ and $\mathcal D_i'$ be neighboring datasets under the  replace-one adjacency of Definition~\ref{def:adjacency}, differing in  exactly one record. Because each per-example gradient is clipped to have Euclidean norm at most $C_g$, the change in the summed gradient satisfies
	\[
	\|S_{i,t}(\mathcal D_i)-S_{i,t}(\mathcal D_i')\|_2 \le 2C_g .
	\]
	
	Since the released update is normalized by the dataset size $|D_i|$, the $\ell_2$-sensitivity of the released vector equals
	\[
	\Delta_i = \frac{2C_g}{|D_i|}.
	\]
	
	\paragraph{Gaussian mechanism}
	
	Each client releases a privatized update obtained by adding Gaussian  noise to the summed gradient prior to normalization:
	\[
	g_{i,t}
	=
	\frac{1}{|D_i|}\bigl(S_{i,t}+E_{i,t}\bigr),
	\qquad
	E_{i,t}\sim
	\mathcal N\!\left(0,\frac{(C_g\sigma_g)^2}{n}I_d\right).
	\]
	
	Equivalently, the released vector can be written as
	\[
	g_{i,t}
	=
	\frac{1}{|D_i|}S_{i,t}
	+
	\zeta_{i,t},
	\qquad
	\zeta_{i,t}\sim
	\mathcal N\!\left(
	0,\,
	\frac{(C_g\sigma_g)^2}{n|D_i|^2}I_d
	\right).
	\]
	
	Thus the client update contains isotropic Gaussian noise with standard deviation
	\[
	\frac{C_g\sigma_g}{\sqrt{n}\,|D_i|}.
	\]
	
	\paragraph{Composition across rounds}
	
	The above mechanism is applied independently at each training round. Let $(\varepsilon_t,\delta_t)$ denote the privacy guarantee of a single round. Applying the composition result of Theorem~\ref{thm:privacy_comp} yields the overall differential privacy guarantee for the full DP-FedSOFIM training procedure.

\section{IID Experimental Results}
\label{app:iid}

Tables~\ref{tab:cifar10_iid}--\ref{tab:pathmnist_iid} report test accuracy trajectories under the IID data partitioning scheme for CIFAR-10 and PathMNIST respectively. In the IID setting, data is partitioned uniformly at random across clients, eliminating label heterogeneity. Hyperparameters were retuned independently for the IID setting using the same grid as the non-IID experiments.

The IID results are broadly consistent with the non-IID findings: DP-FedSOFIM maintains competitive or superior performance across all privacy regimes. Notably, DP-FedGD and DP-FedAvg produce near-identical results in both settings, confirming that with a single local iteration ($K=1$), FedAvg reduces to FedGD. DP-FedFC exhibits higher variance under IID at tight privacy budgets ($\varepsilon = 0.5, 1$), likely due to sensitivity of the covariance preconditioning step to the Gaussian noise scale.

\renewcommand{\arraystretch}{1.1}
\setlength{\tabcolsep}{4pt}
\setlength{\LTleft}{0pt}
\setlength{\LTright}{0pt}

{\small
\begin{longtable}{@{}lccccccc@{}}
    \caption{Test accuracy (\%) on CIFAR-10 under IID partitioning across federated rounds. Results shown at 10-round intervals (mean $\pm$ std over 3 seeds). Best result per privacy regime is in \textbf{bold}.}
    \label{tab:cifar10_iid}\\
    \toprule
    Method & \multicolumn{7}{c}{Federated Round} \\
    \cmidrule(lr){2-8}
    & 10 & 20 & 30 & 40 & 50 & 60 & 70 \\
    \midrule
    \endfirsthead
    \multicolumn{8}{l}{\small\textit{Continued from previous page}}\\
    \toprule
    Method & \multicolumn{7}{c}{Federated Round} \\
    \cmidrule(lr){2-8}
    & 10 & 20 & 30 & 40 & 50 & 60 & 70 \\
    \midrule
    \endhead
    \midrule
    \multicolumn{8}{r}{\small\textit{Continued on next page}}\\
    \endfoot
    \bottomrule
    \endlastfoot

    \multicolumn{8}{l}{\textit{\(\varepsilon=0.5\)}} \\
    DP-FedGD      & 38.56$\pm$1.96 & 49.61$\pm$1.17 & 54.36$\pm$1.20 & 57.04$\pm$1.20 & 58.79$\pm$0.86 & 60.13$\pm$0.82 & 61.02$\pm$0.35 \\
    DP-FedAvg     & 38.56$\pm$1.96 & 49.61$\pm$1.17 & 54.36$\pm$1.20 & 57.04$\pm$1.20 & 58.79$\pm$0.86 & 60.13$\pm$0.82 & 61.02$\pm$0.35 \\
    DP-FedFC      & 23.90$\pm$5.11 & 36.66$\pm$2.97 & 46.10$\pm$0.69 & 51.71$\pm$0.67 & 55.21$\pm$0.79 & 56.76$\pm$0.85 & 58.27$\pm$0.88 \\
    DP-SCAFFOLD   & 33.06$\pm$2.58 & 43.07$\pm$1.62 & 47.70$\pm$2.11 & 49.65$\pm$2.11 & 51.18$\pm$1.11 & 51.96$\pm$1.19 & 53.22$\pm$0.43 \\
    DP-FedAdam    & 35.78$\pm$2.54 & 48.05$\pm$1.25 & 53.85$\pm$0.68 & 56.68$\pm$1.04 & 58.89$\pm$0.88 & 59.90$\pm$0.77 & \textbf{61.07$\pm$0.38} \\
    DP-FedYogi    & 30.69$\pm$3.48 & 44.11$\pm$1.78 & 50.97$\pm$1.09 & 55.21$\pm$0.93 & 57.78$\pm$0.72 & 59.21$\pm$0.75 & 60.46$\pm$0.54 \\
    DP-FedSOFIM   & \textbf{43.53$\pm$1.05} & \textbf{51.57$\pm$0.92} & \textbf{56.45$\pm$0.69} & \textbf{58.41$\pm$0.98} & \textbf{59.62$\pm$0.98} & \textbf{60.34$\pm$0.91} & 60.34$\pm$0.61 \\
    \midrule
    \multicolumn{8}{l}{\textit{\(\varepsilon=1\)}} \\
    DP-FedGD      & 40.61$\pm$1.54 & 51.69$\pm$0.70 & 56.71$\pm$0.51 & 59.39$\pm$0.76 & 61.37$\pm$0.54 & 62.63$\pm$0.47 & 63.41$\pm$0.44 \\
    DP-FedAvg     & 40.61$\pm$1.54 & 51.69$\pm$0.70 & 56.71$\pm$0.51 & 59.39$\pm$0.76 & 61.37$\pm$0.54 & 62.63$\pm$0.47 & 63.41$\pm$0.44 \\
    DP-FedFC      & 23.22$\pm$5.51 & 35.48$\pm$3.85 & 46.44$\pm$1.21 & 53.32$\pm$0.97 & 57.49$\pm$1.31 & 59.80$\pm$1.05 & 61.37$\pm$0.69 \\
    DP-SCAFFOLD   & 45.61$\pm$1.87 & 53.17$\pm$0.52 & 56.78$\pm$0.46 & 57.54$\pm$0.54 & 57.76$\pm$1.31 & 58.79$\pm$0.95 & 58.89$\pm$0.43 \\
    DP-FedAdam    & 51.61$\pm$1.51 & \textbf{60.21$\pm$0.67} & 62.60$\pm$0.57 & 63.52$\pm$0.44 & 63.82$\pm$0.39 & 63.64$\pm$0.49 & 63.85$\pm$0.43 \\
    DP-FedYogi    & 43.82$\pm$7.72 & 55.46$\pm$2.39 & 59.29$\pm$1.00 & 60.31$\pm$0.43 & 60.13$\pm$0.82 & 59.80$\pm$1.18 & 59.18$\pm$1.13 \\
    DP-FedSOFIM   & \textbf{53.36$\pm$0.72} & 60.09$\pm$0.27 & \textbf{62.80$\pm$0.53} & \textbf{63.72$\pm$0.77} & \textbf{64.24$\pm$0.53} & \textbf{64.61$\pm$0.45} & \textbf{64.39$\pm$0.55} \\
    \midrule
    \multicolumn{8}{l}{\textit{\(\varepsilon=5\)}} \\
    DP-FedGD      & 41.55$\pm$1.39 & 52.73$\pm$0.44 & 57.65$\pm$0.14 & 60.50$\pm$0.51 & 62.35$\pm$0.48 & 63.44$\pm$0.48 & 64.30$\pm$0.51 \\
    DP-FedAvg     & 41.55$\pm$1.39 & 52.73$\pm$0.44 & 57.65$\pm$0.14 & 60.50$\pm$0.51 & 62.35$\pm$0.48 & 63.44$\pm$0.48 & 64.30$\pm$0.51 \\
    DP-FedFC      & 53.98$\pm$1.17 & 62.39$\pm$0.52 & 65.13$\pm$0.50 & 66.31$\pm$0.27 & 66.93$\pm$0.26 & 67.35$\pm$0.23 & 67.47$\pm$0.04 \\
    DP-SCAFFOLD   & 58.04$\pm$0.21 & 63.18$\pm$0.77 & 64.92$\pm$0.50 & 65.83$\pm$0.34 & 66.30$\pm$0.18 & 66.69$\pm$0.17 & 67.16$\pm$0.26 \\
    DP-FedAdam    & 42.03$\pm$5.69 & 58.58$\pm$2.51 & 63.04$\pm$0.85 & 66.43$\pm$0.78 & 67.28$\pm$0.27 & 68.02$\pm$0.06 & 67.94$\pm$0.15 \\
    DP-FedYogi    & 53.62$\pm$1.41 & 64.10$\pm$0.13 & 66.56$\pm$0.42 & 67.41$\pm$0.31 & 67.96$\pm$0.14 & 68.19$\pm$0.19 & \textbf{68.39$\pm$0.13} \\
    DP-FedSOFIM   & \textbf{61.49$\pm$0.73} & \textbf{65.82$\pm$0.29} & \textbf{67.46$\pm$0.09} & \textbf{67.70$\pm$0.14} & \textbf{68.01$\pm$0.11} & \textbf{68.28$\pm$0.11} & 68.11$\pm$0.08 \\
    \midrule
    \multicolumn{8}{l}{\textit{\(\varepsilon=10\)}} \\
    DP-FedGD      & 41.69$\pm$1.42 & 52.77$\pm$0.45 & 57.75$\pm$0.15 & 60.61$\pm$0.51 & 62.34$\pm$0.47 & 63.52$\pm$0.52 & 64.29$\pm$0.50 \\
    DP-FedAvg     & 41.69$\pm$1.42 & 52.77$\pm$0.45 & 57.75$\pm$0.15 & 60.61$\pm$0.51 & 62.34$\pm$0.47 & 63.52$\pm$0.52 & 64.29$\pm$0.50 \\
    DP-FedFC      & 23.02$\pm$5.11 & 36.41$\pm$3.12 & 47.30$\pm$0.99 & 53.66$\pm$1.34 & 57.57$\pm$1.04 & 59.89$\pm$0.73 & 61.50$\pm$0.79 \\
    DP-SCAFFOLD   & 59.05$\pm$0.28 & 64.05$\pm$0.69 & 65.81$\pm$0.33 & 66.63$\pm$0.25 & 67.09$\pm$0.31 & 67.46$\pm$0.31 & 67.84$\pm$0.24 \\
    DP-FedAdam    & 47.14$\pm$2.41 & 58.58$\pm$1.23 & 64.37$\pm$1.26 & 66.01$\pm$0.48 & 67.27$\pm$0.47 & 67.88$\pm$0.12 & 68.05$\pm$0.09 \\
    DP-FedYogi    & 45.48$\pm$2.66 & 57.83$\pm$1.56 & 63.90$\pm$1.45 & 65.80$\pm$0.56 & 67.37$\pm$0.47 & 67.62$\pm$0.18 & 68.12$\pm$0.40 \\
    DP-FedSOFIM   & \textbf{61.68$\pm$0.76} & \textbf{66.13$\pm$0.36} & \textbf{67.73$\pm$0.13} & \textbf{68.26$\pm$0.17} & \textbf{68.56$\pm$0.04} & \textbf{68.87$\pm$0.11} & \textbf{69.00$\pm$0.20} \\

\end{longtable}
}

\begin{figure}[h]
    \centering
    \includegraphics[width=0.9\textwidth]{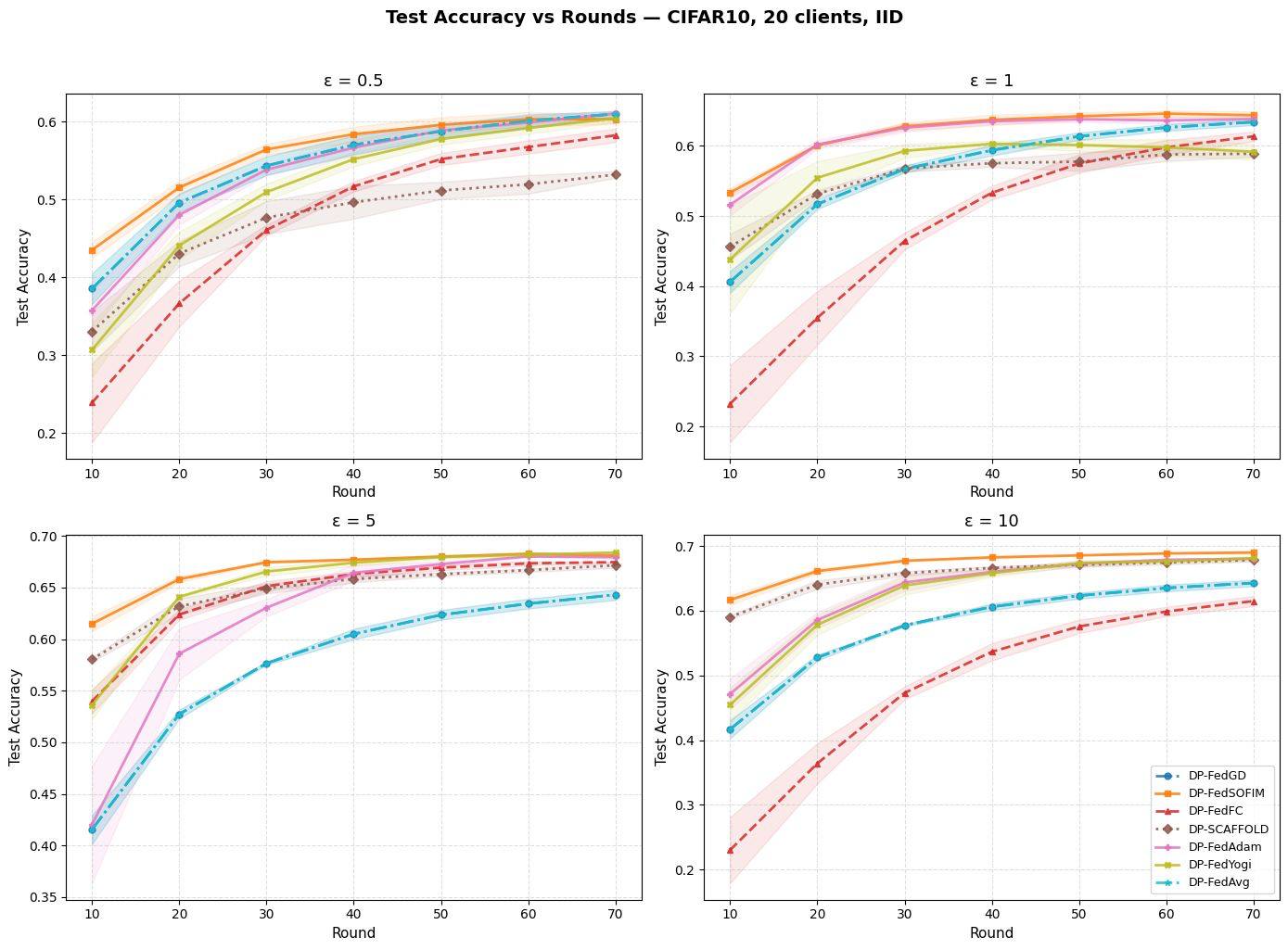}
    \caption{Convergence trajectories on CIFAR-10, 20 clients, IID across privacy regimes.}
    \label{fig:cifar10_iid}
\end{figure}

{\small
\begin{longtable}{@{}lccccccc@{}}
    \caption{Test accuracy (\%) on PathMNIST under IID partitioning across federated rounds. Results shown at 10-round intervals (mean $\pm$ std over 3 seeds). Best result per privacy regime is in \textbf{bold}.}
    \label{tab:pathmnist_iid}\\
    \toprule
    Method & \multicolumn{7}{c}{Federated Round} \\
    \cmidrule(lr){2-8}
    & 10 & 20 & 30 & 40 & 50 & 60 & 70 \\
    \midrule
    \endfirsthead
    \multicolumn{8}{l}{\small\textit{Continued from previous page}}\\
    \toprule
    Method & \multicolumn{7}{c}{Federated Round} \\
    \cmidrule(lr){2-8}
    & 10 & 20 & 30 & 40 & 50 & 60 & 70 \\
    \midrule
    \endhead
    \midrule
    \multicolumn{8}{r}{\small\textit{Continued on next page}}\\
    \endfoot
    \bottomrule
    \endlastfoot

    \multicolumn{8}{l}{\textit{\(\varepsilon=0.5\)}} \\
    DP-FedGD      & 51.90$\pm$0.68 & 57.77$\pm$0.77 & 59.87$\pm$1.38 & 62.21$\pm$0.59 & 63.31$\pm$0.78 & 64.18$\pm$0.96 & 64.33$\pm$0.74 \\
    DP-FedAvg     & 51.90$\pm$0.68 & 57.77$\pm$0.77 & 59.87$\pm$1.38 & 62.21$\pm$0.59 & 63.31$\pm$0.78 & 64.18$\pm$0.96 & 64.33$\pm$0.74 \\
    DP-FedFC      & 24.73$\pm$5.66 & 47.06$\pm$6.77 & 55.71$\pm$4.63 & 60.99$\pm$1.90 & 62.92$\pm$0.43 & 64.26$\pm$0.45 & 65.33$\pm$0.43 \\
    DP-SCAFFOLD   & 50.51$\pm$0.39 & 56.79$\pm$1.46 & 57.73$\pm$2.24 & 60.72$\pm$0.22 & 61.73$\pm$0.67 & 62.05$\pm$1.00 & 62.12$\pm$0.59 \\
    DP-FedAdam    & 50.61$\pm$3.57 & 57.98$\pm$1.04 & 61.26$\pm$0.71 & 63.17$\pm$0.23 & 64.34$\pm$0.71 & 64.81$\pm$0.85 & 65.63$\pm$0.67 \\
    DP-FedYogi    & 43.56$\pm$4.08 & 56.38$\pm$0.98 & 58.77$\pm$1.22 & 61.62$\pm$0.81 & 63.10$\pm$0.86 & 64.18$\pm$0.83 & 64.76$\pm$0.89 \\
    DP-FedSOFIM   & \textbf{54.49$\pm$2.68} & \textbf{60.34$\pm$2.30} & \textbf{63.85$\pm$0.84} & \textbf{64.86$\pm$1.73} & \textbf{66.44$\pm$0.70} & \textbf{66.38$\pm$1.14} & \textbf{67.44$\pm$0.52} \\
    \midrule
    \multicolumn{8}{l}{\textit{\(\varepsilon=1\)}} \\
    DP-FedGD      & 52.36$\pm$1.19 & 58.05$\pm$0.64 & 60.39$\pm$1.13 & 62.49$\pm$0.93 & 63.64$\pm$0.98 & 64.77$\pm$0.91 & 65.14$\pm$0.58 \\
    DP-FedAvg     & 52.36$\pm$1.19 & 58.05$\pm$0.64 & 60.39$\pm$1.13 & 62.49$\pm$0.93 & 63.64$\pm$0.98 & 64.77$\pm$0.91 & 65.14$\pm$0.58 \\
    DP-FedFC      & 20.16$\pm$4.96 & 43.98$\pm$7.91 & 53.34$\pm$5.86 & 60.10$\pm$2.91 & 63.67$\pm$0.86 & 65.74$\pm$0.17 & 67.16$\pm$0.40 \\
    DP-SCAFFOLD   & 57.00$\pm$0.97 & 60.50$\pm$1.02 & 63.86$\pm$0.82 & 64.97$\pm$0.73 & 66.19$\pm$0.60 & 65.81$\pm$0.60 & 66.54$\pm$0.29 \\
    DP-FedAdam    & 58.73$\pm$2.14 & \textbf{64.81$\pm$0.60} & \textbf{66.83$\pm$0.66} & \textbf{68.04$\pm$0.68} & \textbf{68.39$\pm$0.61} & \textbf{68.41$\pm$0.31} & 68.91$\pm$0.42 \\
    DP-FedYogi    & 40.51$\pm$2.53 & 53.85$\pm$7.85 & 63.61$\pm$2.99 & 66.45$\pm$1.08 & 66.72$\pm$0.06 & 67.57$\pm$0.41 & 67.61$\pm$0.24 \\
    DP-FedSOFIM   & \textbf{58.76$\pm$0.27} & 64.67$\pm$0.47 & 66.40$\pm$0.46 & 67.31$\pm$0.60 & 67.69$\pm$0.48 & 68.07$\pm$0.61 & \textbf{68.95$\pm$0.15} \\
    \midrule
    \multicolumn{8}{l}{\textit{\(\varepsilon=5\)}} \\
    DP-FedGD      & 52.59$\pm$1.63 & 58.21$\pm$0.87 & 60.60$\pm$1.26 & 62.45$\pm$1.26 & 63.92$\pm$1.09 & 64.93$\pm$0.91 & 65.73$\pm$0.63 \\
    DP-FedAvg     & 52.59$\pm$1.63 & 58.21$\pm$0.87 & 60.60$\pm$1.26 & 62.45$\pm$1.26 & 63.92$\pm$1.09 & 64.93$\pm$0.91 & 65.73$\pm$0.63 \\
    DP-FedFC      & 59.55$\pm$0.71 & 64.68$\pm$1.15 & 67.17$\pm$0.62 & 68.50$\pm$0.27 & 69.21$\pm$0.37 & 69.69$\pm$0.38 & 70.03$\pm$0.34 \\
    DP-SCAFFOLD   & 61.15$\pm$1.27 & 65.32$\pm$0.55 & 67.34$\pm$0.29 & 68.39$\pm$0.05 & 69.04$\pm$0.14 & 69.50$\pm$0.26 & 69.81$\pm$0.40 \\
    DP-FedAdam    & 48.69$\pm$4.47 & 61.69$\pm$3.13 & 66.88$\pm$1.89 & 68.25$\pm$0.39 & 70.26$\pm$0.64 & 70.67$\pm$0.12 & 71.56$\pm$0.19 \\
    DP-FedYogi    & 56.46$\pm$1.59 & 64.47$\pm$1.57 & 67.22$\pm$0.20 & 69.05$\pm$0.20 & 70.06$\pm$0.50 & 70.36$\pm$0.24 & 70.76$\pm$0.29 \\
    DP-FedSOFIM   & \textbf{62.76$\pm$2.37} & \textbf{67.25$\pm$2.01} & \textbf{69.31$\pm$0.83} & \textbf{69.68$\pm$0.31} & \textbf{70.34$\pm$0.69} & \textbf{71.30$\pm$0.15} & \textbf{71.65$\pm$0.14} \\
    \midrule
    \multicolumn{8}{l}{\textit{\(\varepsilon=10\)}} \\
    DP-FedGD      & 52.57$\pm$1.62 & 58.20$\pm$0.92 & 60.61$\pm$1.31 & 62.43$\pm$1.32 & 63.89$\pm$1.08 & 64.98$\pm$0.90 & 65.80$\pm$0.68 \\
    DP-FedAvg     & 52.57$\pm$1.62 & 58.20$\pm$0.92 & 60.61$\pm$1.31 & 62.43$\pm$1.32 & 63.89$\pm$1.08 & 64.98$\pm$0.90 & 65.80$\pm$0.68 \\
    DP-FedFC      & 23.79$\pm$4.87 & 46.25$\pm$6.80 & 54.31$\pm$3.53 & 59.38$\pm$0.71 & 61.62$\pm$1.57 & 62.93$\pm$1.69 & 64.03$\pm$1.45 \\
    DP-SCAFFOLD   & 61.51$\pm$1.28 & 65.73$\pm$0.61 & 67.60$\pm$0.40 & 68.52$\pm$0.16 & 69.32$\pm$0.31 & 69.61$\pm$0.33 & 69.84$\pm$0.36 \\
    DP-FedAdam    & 57.62$\pm$3.14 & 62.45$\pm$5.07 & 67.95$\pm$0.44 & 66.90$\pm$0.88 & 69.32$\pm$0.78 & 69.80$\pm$0.49 & 70.73$\pm$0.21 \\
    DP-FedYogi    & 57.99$\pm$2.34 & 64.96$\pm$2.97 & 65.62$\pm$1.39 & 67.16$\pm$0.97 & 70.17$\pm$0.43 & 69.67$\pm$0.69 & 70.23$\pm$0.48 \\
    DP-FedSOFIM   & \textbf{62.86$\pm$2.29} & \textbf{67.20$\pm$2.09} & \textbf{69.31$\pm$0.75} & \textbf{69.90$\pm$0.29} & \textbf{70.48$\pm$0.73} & \textbf{71.59$\pm$0.11} & \textbf{71.75$\pm$0.16} \\

\end{longtable}
}

\begin{figure}[h]
    \centering
    \includegraphics[width=0.9\textwidth]{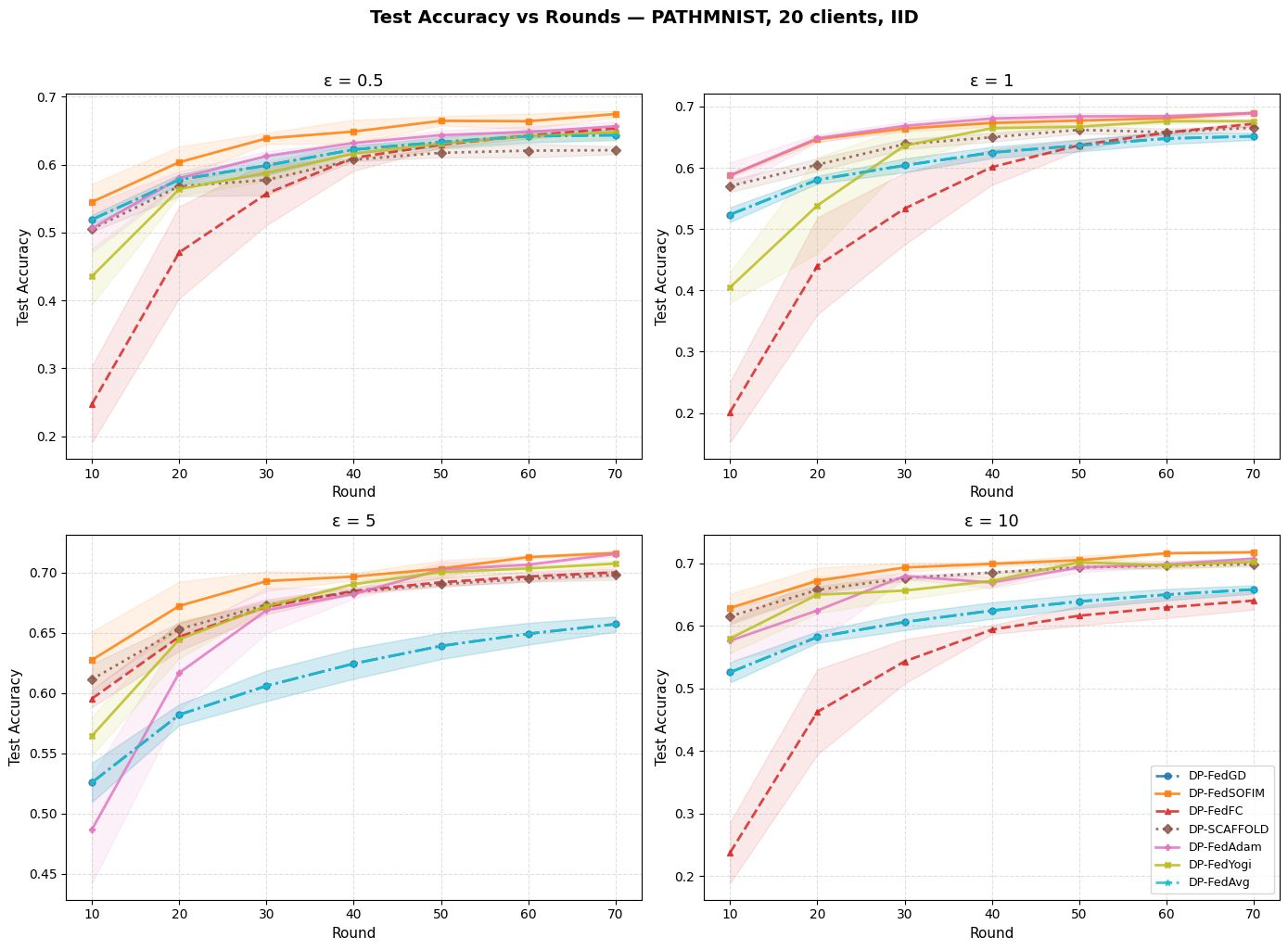}
    \caption{Convergence trajectories on PathMNIST, 20 clients, IID across privacy regimes.}
    \label{fig:pathmnist_iid}
\end{figure}

\section{Ablation Study and Hyperparameter Sensitivity}
\label{app:ablation}

\subsection{Component Ablation}

Tables~\ref{tab:ablation_cifar10}--\ref{tab:ablation_pathmnist} and
Figures~\ref{fig:ablation_cifar10}--\ref{fig:ablation_pathmnist} report the
contribution of each component of DP-FedSOFIM under the non-IID (Dirichlet
$\alpha=0.5$) setting on CIFAR-10 and PathMNIST respectively, with 20 clients.
We compare three variants: (i)~\textit{Grad-only}, which applies no EMA and no
Sherman--Morrison (SM) preconditioning, reducing to standard DP-FedGD;
(ii)~\textit{EMA-only}, which applies exponential moving average accumulation of
gradients but omits the SM curvature correction; and (iii)~\textit{DP-FedSOFIM
(full)}, the complete method.

The benefit of each component depends jointly on the privacy budget and the
curvature structure of the dataset. At $\varepsilon=0.5$, per-round Gaussian
noise dominates any directional curvature signal: all three variants perform
comparably on CIFAR-10, and on PathMNIST the EMA component provides only a
small lift. The SM step is most beneficial at $\varepsilon=1$, where noise is
high enough to corrupt isotropic gradient steps yet low enough for the momentum
buffer to accumulate a stable curvature signal; the full method outperforms
EMA-only by approximately 7 percentage points on both datasets
(Tables~\ref{tab:ablation_cifar10}--\ref{tab:ablation_pathmnist}). At
$\varepsilon\ge5$, the picture diverges by dataset: on CIFAR-10 the diffuse
curvature landscape (Phenomenon~5) means EMA-smoothed gradient directions
already span the informative subspace, so EMA-only largely matches or marginally
surpasses the full method (67.97\% vs.\ 67.32\% at $\varepsilon=5$; 59.04\%
vs.\ 58.43\% at $\varepsilon=0.5$; Table~\ref{tab:ablation_cifar10}), whereas
on PathMNIST the concentrated curvature structure continues to reward explicit
preconditioning and the full method retains a consistent advantage across all
budgets (Table~\ref{tab:ablation_pathmnist}).

In summary, the SM curvature correction is most effective when (i)~privacy noise
is moderate to high and (ii)~the loss landscape has structured anisotropy that
the rank-one proxy can capture. It provides limited benefit when noise is so
large that curvature estimation is unreliable ($\varepsilon=0.5$), or when the
landscape is diffuse enough that EMA smoothing alone covers the informative
gradient subspace (CIFAR-10, $\varepsilon\ge5$).

\renewcommand{\arraystretch}{1.1}
\setlength{\tabcolsep}{6pt}

\begin{table}[h]
    \centering
    \caption{Ablation study on CIFAR-10, 20 clients, Non-IID (Dirichlet $\alpha=0.5$). Final test accuracy (\%) at round 70 (mean $\pm$ std over 3 seeds). Best result per privacy regime is in \textbf{bold}.}
    \label{tab:ablation_cifar10}
    \small
    \begin{tabular}{@{}lcccc@{}}
        \toprule
        Method & $\varepsilon=0.5$ & $\varepsilon=1$ & $\varepsilon=5$ & $\varepsilon=10$ \\
        \midrule
        Grad-only (no EMA, no SM) & 58.72$\pm$1.02          & 52.46$\pm$2.35          & 57.17$\pm$0.67          & 54.73$\pm$1.64 \\
        EMA-only (no SM)          & \textbf{59.04$\pm$0.78} & 54.03$\pm$1.91          & \textbf{67.97$\pm$0.15} & 68.45$\pm$0.02 \\
        DP-FedSOFIM (full)        & 58.43$\pm$0.46          & \textbf{61.35$\pm$0.94} & 67.32$\pm$0.19          & \textbf{68.60$\pm$0.09} \\
        \bottomrule
    \end{tabular}
\end{table}

\begin{figure}[h]
    \centering
    \includegraphics[width=0.8\textwidth]{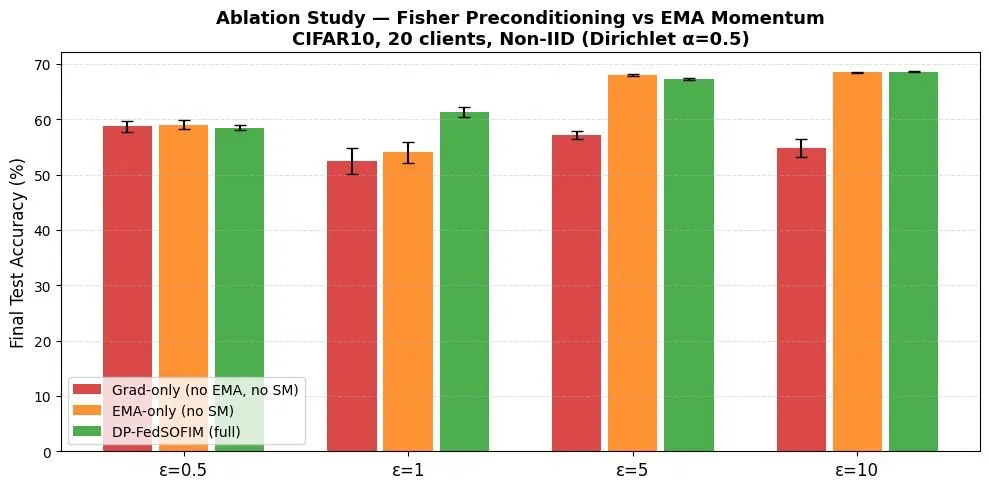}
    \caption{Ablation study --- Fisher preconditioning vs.\ EMA momentum on CIFAR-10, 20 clients, Non-IID (Dirichlet $\alpha=0.5$). Final test accuracy at round 70 across privacy regimes.}
    \label{fig:ablation_cifar10}
\end{figure}

\begin{table}[h]
    \centering
    \caption{Ablation study on PathMNIST, 20 clients, Non-IID (Dirichlet $\alpha=0.5$). Final test accuracy (\%) at round 70 (mean $\pm$ std over 3 seeds). Best result per privacy regime is in \textbf{bold}.}
    \label{tab:ablation_pathmnist}
    \small
    \begin{tabular}{@{}lcccc@{}}
        \toprule
        Method & $\varepsilon=0.5$ & $\varepsilon=1$ & $\varepsilon=5$ & $\varepsilon=10$ \\
        \midrule
        Grad-only (no EMA, no SM) & 64.41$\pm$0.99 & 60.25$\pm$2.66 & 55.07$\pm$1.96 & 53.33$\pm$3.97 \\
        EMA-only (no SM)          & \textbf{65.66$\pm$1.14} & 61.67$\pm$2.82 & 70.77$\pm$0.09 & 70.94$\pm$0.08 \\
        DP-FedSOFIM (full)        & 65.24$\pm$1.40 & \textbf{68.48$\pm$0.63} & \textbf{71.32$\pm$0.20} & \textbf{71.63$\pm$0.20} \\
        \bottomrule
    \end{tabular}
\end{table}

\begin{figure}[h]
    \centering
    \includegraphics[width=0.8\textwidth]{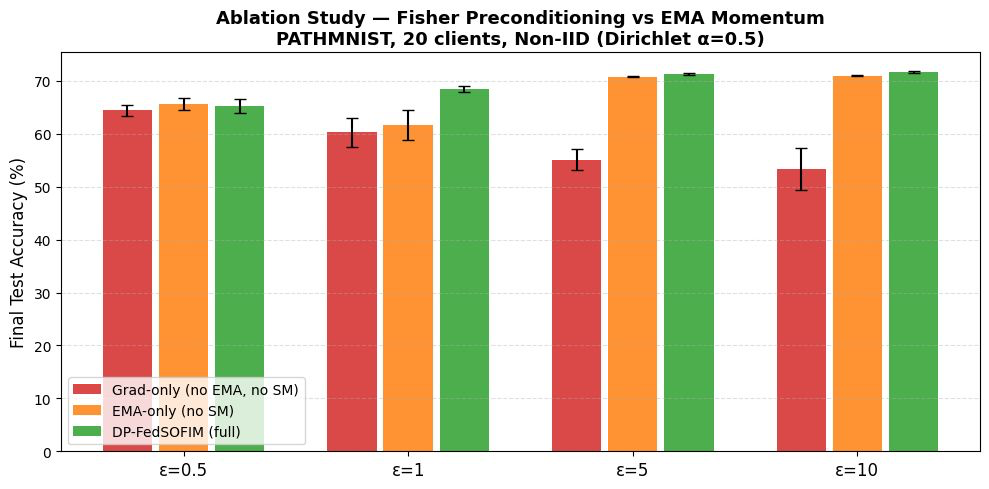}
    \caption{Ablation study --- Fisher preconditioning vs.\ EMA momentum on PathMNIST, 20 clients, Non-IID (Dirichlet $\alpha=0.5$). Final test accuracy at round 70 across privacy regimes.}
    \label{fig:ablation_pathmnist}
\end{figure}

\subsection{Sensitivity to $\rho$}

Table~\ref{tab:rho_sensitivity} report final test accuracy at round 70 as a function of the EMA decay parameter $\rho \in \{0.01, 0.1, 0.5, 1.0, 2.0, 5.0, 10.0\}$ on CIFAR-10 and PathMNIST respectively, 20 clients, Non-IID (Dirichlet $\alpha=0.5$), at $\varepsilon \in \{1, 5\}$.

On CIFAR-10, performance peaks near $\rho=1.0$ at $\varepsilon=5$ and increases monotonically with $\rho$ at $\varepsilon=1$, where higher momentum better suppresses the large Gaussian noise. On PathMNIST, the optimal $\rho$ shifts to $0.5$ at $\varepsilon=5$ and again to $10.0$ at $\varepsilon=1$, consistent with the noise-dominance interpretation. In both cases the method is robust across the range $\rho \in [0.5, 5.0]$ at $\varepsilon=5$, with degradation only at the extremes. We select $\rho=1.0$ as the default for $\varepsilon \geq 5$ and $\rho=10.0$ for tight privacy budgets.

\begin{table}[h]
    \centering
    \caption{$\rho$ sensitivity of DP-FedSOFIM. Final test accuracy (\%) at round 70
    (mean $\pm$ std over 3 seeds), Non-IID (Dirichlet $\alpha=0.5$), $n=20$ clients.
    Best result per $\varepsilon$ is in \textbf{bold}.}
    \label{tab:rho_sensitivity}
    \small
    \setlength{\tabcolsep}{5pt}
    \begin{tabular}{@{}lcccc@{}}
        \toprule
        & \multicolumn{2}{c}{\textbf{CIFAR-10}} & \multicolumn{2}{c}{\textbf{PathMNIST}} \\
        \cmidrule(lr){2-3} \cmidrule(lr){4-5}
        $\rho$ & $\varepsilon=1$ & $\varepsilon=5$ & $\varepsilon=1$ & $\varepsilon=5$ \\
        \midrule
        0.01 & 51.08$\pm$1.39 & 59.37$\pm$1.72 & 54.37$\pm$3.60 & 49.69$\pm$5.28 \\
        0.1  & 51.36$\pm$1.35 & 63.02$\pm$0.43 & 57.75$\pm$2.63 & 63.74$\pm$5.81 \\
        0.5  & 52.18$\pm$1.47 & 67.32$\pm$0.19 & 61.13$\pm$2.41 & \textbf{71.32$\pm$0.20} \\
        1.0  & 53.11$\pm$1.39 & \textbf{68.03$\pm$0.05} & 61.49$\pm$1.71 & 70.92$\pm$0.13 \\
        2.0  & 54.57$\pm$1.39 & 67.55$\pm$0.22 & 62.32$\pm$0.50 & 69.83$\pm$0.22 \\
        5.0  & 57.94$\pm$1.17 & 65.56$\pm$0.53 & 66.92$\pm$1.43 & 67.71$\pm$0.14 \\
        10.0 & \textbf{61.35$\pm$0.94} & 61.99$\pm$0.62 & \textbf{68.48$\pm$0.63} & 64.94$\pm$0.05 \\
        \bottomrule
    \end{tabular}
\end{table}



\section{Non-IID Partition Statistics}
\label{app:partition_stats}
 
Tables~\ref{tab:pathmnist_partition} and~\ref{tab:cifar10_partition}
report the per-client, per-class sample counts for PathMNIST and
CIFAR-10 respectively, generated with a fixed seed under the Dirichlet
($\alpha=0.5$) partition used in all non-IID experiments.
These statistics confirm the moderate label heterogeneity described in
Section~\ref{sec:experimental_setup}: clients are rarely missing classes
entirely (mean 0.1 absent classes per client on PathMNIST), but a single
class typically contributes 40--70\% of a client's local data.
 
\begin{table}[ht]
\centering
\caption{Per-client, per-class sample counts for \textbf{PathMNIST}
  (89,996 training samples, 9 classes, $n=20$ clients, Dirichlet
  $\alpha=0.5$, fixed seed). Class labels: 0\,=\,adipose,
  1\,=\,background, 2\,=\,debris, 3\,=\,lymphocytes, 4\,=\,mucus,
  5\,=\,smooth~muscle, 6\,=\,normal~colon~mucosa,
  7\,=\,cancer-assoc.\ stroma, 8\,=\,adenocarcinoma~epithelium.
  Entries of 0 indicate a completely absent class.}
\label{tab:pathmnist_partition}
\scriptsize
\setlength{\tabcolsep}{4pt}
\begin{tabular}{r rrrrrrrrrr}
\toprule
Client & cls\,0 & cls\,1 & cls\,2 & cls\,3 & cls\,4 & cls\,5 & cls\,6 & cls\,7 & cls\,8 & Total \\
\midrule
 0 &     8 &   259 &    41 &     7 &    89 & 2\,689 &   694 & 2\,985 & 1\,840 & 8\,612 \\
 1 &   288 &    95 &   148 &   629 &   492 &      6 &    78 &    74  &   908  & 2\,718 \\
 2 &    12 &    29 &   776 &   498 &   114 & 1\,732 &     8 &   851  & 1\,887 & 5\,907 \\
 3 &    13 &   578 &   194 &    63 &   147 &    119 &   447 &   202  &   326  & 2\,089 \\
 4 & 1\,498 &   280 &   934 & 1\,088 &     1 & 1\,345 & 1\,682 &   248 &   299  & 7\,375 \\
 5 &    79 &   160 &    53 &    87 &   445 &     21 &   214 &    90  &    75  & 1\,224 \\
 6 &   187 &   142 &    36 &   977 &    55 &      7 &   739 &   754  &   233  & 3\,130 \\
 7 &   506 & 1\,193 &   324 & 2\,038 &     3 &     69 &    13 &   433  &    76  & 4\,655 \\
 8 &     0 &     5 &   149 &   174 & 3\,046 &      0 &    97 &   463  &   543  & 4\,477 \\
 9 &    20 &   101 &   400 &   936 &   241 &     11 &   873 &   265  &   403  & 3\,250 \\
10 & 2\,052 &   148 &   130 &   319 &   745 &     59 & 1\,247 &    49  &    32  & 4\,781 \\
11 &     9 & 1\,781 & 2\,556 &   426 & 1\,211 &    149 &    76 &   587  &    36  & 6\,831 \\
12 &   261 &   844 & 1\,499 &   438 &    26 &     13 &    28 &    23  &   351  & 3\,483 \\
13 &   496 &   627 & 2\,424 &   245 &    26 & 3\,035 &   118 &   102  &   852  & 7\,925 \\
14 &   317 &   242 &   147 & 1\,207 &   262 &    807 &    14 &   586  &     0  & 3\,582 \\
15 &   525 &    62 &     1 &   264 &    61 &     26 &    30 &     7  &    69  & 1\,045 \\
16 &   130 &   550 &    13 &    17 &   804 &    378 &    44 &   246  &   401  & 2\,583 \\
17 &    40 &    37 &    23 &   170 &   113 &    430 &    62 &     3  &   178  & 1\,056 \\
18 &   127 & 1\,878 &   511 &    46 &    19 & 1\,263 & 1\,391 &   426  & 4\,325 & 9\,986 \\
19 & 2\,798 &   498 &     1 &   772 &   106 &     23 &    31 & 1\,007 &    51  & 5\,287 \\
\midrule
\textbf{Total} & 9\,366 & 9\,509 & 10\,360 & 10\,201 & 8\,105 & 14\,181 & 8\,836 & 11\,401 & 14\,885 & 96\,844\rlap{$^\dagger$} \\
\bottomrule
\end{tabular}
\vspace{2pt}
\begin{flushleft}
\scriptsize
$^\dagger$ The column total of 96,844 reflects samples assigned to the
federated training split; 89,996 samples are used in training after
excluding held-out validation samples.
Summary statistics: min\,=\,1,045 (client 15), max\,=\,9,986 (client 18),
mean\,=\,4,500, std\,=\,2,526.
Mean KL divergence from the uniform class distribution: 0.581
(max 1.115; where 0\,=\,perfectly IID).
Mean absent classes per client: 0.1; maximum: 2 (clients~8 and~14).
\end{flushleft}
\end{table}

\begin{table}[ht]
\centering
\caption{Per-client, per-class sample counts for \textbf{CIFAR-10}
  (50,000 training samples, 10 classes, $n=20$ clients, Dirichlet
  $\alpha=0.5$, fixed seed). Class labels: 0\,=\,airplane,
  1\,=\,automobile, 2\,=\,bird, 3\,=\,cat, 4\,=\,deer, 5\,=\,dog,
  6\,=\,frog, 7\,=\,horse, 8\,=\,ship, 9\,=\,truck.
  Entries of 0 indicate a completely absent class.}
\label{tab:cifar10_partition}
\scriptsize
\setlength{\tabcolsep}{3.5pt}
\begin{tabular}{r rrrrrrrrrr r}
\toprule
Client & cls\,0 & cls\,1 & cls\,2 & cls\,3 & cls\,4 & cls\,5 & cls\,6 & cls\,7 & cls\,8 & cls\,9 & Total \\
\midrule
 0 &    14 &   122 &   240 &   126 &   630 &    74 &   193 &    71 &   230 &     6 & 1\,706 \\
 1 &   122 &    37 & 1\,488 &    88 &    12 &    43 &     0 &   123 &    43 &   419 & 2\,375 \\
 2 &    17 &   113 &    97 &   122 &   647 & 1\,030 &   609 &   822 &    27 &    14 & 3\,498 \\
 3 &   467 & 1\,253 &     2 &   151 &     7 &   149 &    20 &   130 &    66 &   367 & 2\,612 \\
 4 &   794 &     1 &   117 &     0 &    40 &     0 &     5 &    83 &    86 &   180 & 1\,306 \\
 5 &    38 &   386 &     1 &   967 &   777 &   134 &   436 &    44 &    16 &   216 & 3\,015 \\
 6 &    29 &   729 &   335 &    95 &   126 & 1\,200 &     5 &     9 &    45 &   783 & 3\,356 \\
 7 &    32 &   106 &   266 &    19 &   226 &   472 &   101 &    29 &    90 &   174 & 1\,515 \\
 8 &   161 &   619 &    20 &   585 &    46 &    66 &    55 &    45 &   222 &   120 & 1\,939 \\
 9 &   173 &    72 &   524 &   102 &   225 &   326 &    68 &    32 &   256 &   766 & 2\,544 \\
10 &    33 &   175 &   929 &   508 &   477 &    35 &   116 &    39 &    37 &   187 & 2\,536 \\
11 &   138 &   877 &     1 &   105 &    62 &     2 & 1\,019 &     0 & 1\,191 &     1 & 3\,396 \\
12 &    45 &     3 &     1 &   751 &   138 &   104 &     7 &   100 &   951 & 1\,212 & 3\,312 \\
13 &    43 &    83 &   186 &   669 &    14 &     3 &    54 & 1\,333 &   373 &   168 & 2\,926 \\
14 &   192 &     4 &    51 &    90 &   303 &    83 &   744 & 1\,241 &   588 &    33 & 3\,329 \\
15 &    44 &    16 &    64 &   311 & 1\,047 &    25 &   727 &    44 &   494 &   208 & 2\,980 \\
16 &   331 &   163 &   119 &     3 &     9 &    92 &    22 &    67 &    10 &    75 &   891 \\
17 &     7 &    92 &    12 &    95 &    11 &   746 &    91 &   413 &    26 &    46 & 1\,539 \\
18 &   449 &     0 &   104 &   192 &    93 &     0 &   723 &   204 &   246 &    14 & 2\,025 \\
19 & 1\,871 &   149 &   443 &    21 &   110 &   416 &     5 &   171 &     3 &    11 & 3\,200 \\
\midrule
\textbf{Total} & 5\,000 & 5\,000 & 5\,000 & 5\,000 & 5\,000 & 5\,000 & 5\,000 & 5\,000 & 5\,000 & 5\,000 & 50\,000 \\
\bottomrule
\end{tabular}
\vspace{2pt}
\begin{flushleft}
\scriptsize
Summary statistics: min\,=\,891 (client~16), max\,=\,3,498 (client~2),
mean\,=\,2,500, std\,=\,779.6.
Mean KL divergence from the uniform class distribution: 0.666
(max 1.039; where 0\,=\,perfectly IID).
Mean absent classes per client: 0.3; maximum: 2
(client~4 missing classes~3 and~5; client~18 missing classes~1 and~5).
\end{flushleft}
\end{table}



\section{McNemar's Test for Statistical Significance}
\label{app:mcnemar}

\paragraph{Setup.}
We conduct McNemar's test~\citep{mcnemar1947, dietterich1998approximate} comparing DP-FedSOFIM against each baseline across all four experimental settings at every privacy budget
$\varepsilon \in \{0.5, 1, 5, 10\}$.  McNemar's test is a paired
non-parametric test operating on the per-sample predictions of two classifiers
evaluated on the \emph{same} $N$ test samples within a single model run.  For
each pair (DP-FedSOFIM, baseline) we record $n_{10}$ (samples correct for
SOFIM, wrong for baseline) and $n_{01}$ (the converse).  The
continuity-corrected statistic
\[
  \chi^2 = \frac{(|n_{10}-n_{01}|-1)^2}{n_{10}+n_{01}}
\]
is compared against $\chi^2_1$ at $\alpha=0.05$.  A single seed is used by
design; pooling across seeds breaks the paired structure.  With $N=10{,}000$
(CIFAR-10) and $N=7{,}180$ (PathMNIST) statistical power is high.  Tests are
at the final round (Round~70).
\paragraph{Discussion.}
DP-FedSOFIM is statistically significantly better than all baselines at
$\varepsilon \in \{1, 5, 10\}$ on CIFAR-10/VGG, and at $\varepsilon \in
\{1, 10\}$ on CIFAR-10/ResNet. On PathMNIST with both architectures,
significance is established at $\varepsilon = 10$ against all baselines, and
largely at $\varepsilon \in \{0.5, 1\}$ as well.

\textbf{Non-significant cases at $\varepsilon = 5$.}
On CIFAR-10/ResNet, differences against DP-FedAdam, DP-FedYogi, DP-FedFC,
and DP-AdaFedProx are not statistically significant at Round~70. On
PathMNIST/ResNet, DP-FedAdam, DP-FedYogi, and DP-FedFC are likewise
non-significant; on PathMNIST/VGG, DP-FedAdam is non-significant. These
cases reflect convergence behaviour rather than an absence of advantage:
SOFIM reaches high accuracy substantially faster in earlier rounds. For
instance, at $\varepsilon = 5$ on CIFAR-10/ResNet, SOFIM achieves 61.05\%
accuracy at Round~10 versus 43.92\% for DP-FedAdam and 55.37\% for
DP-FedYogi, with parity only around Round~50--60. In communication-constrained
federated learning, this convergence-speed advantage is a practically important
property not captured by final-round McNemar tests.

\textbf{High-noise regime ($\varepsilon = 0.5$).}
At $\varepsilon = 0.5$, the advantage of DP-FedSOFIM does not hold uniformly
on CIFAR-10: several adaptive baselines achieve statistically significant
differences on CIFAR-10/ResNet, and performance is statistically comparable
across most methods on CIFAR-10/VGG. This is consistent with the known
sensitivity of second-order methods to gradient noise --- at $\varepsilon =
0.5$, the per-example DP noise dominates the Fisher information estimate,
diminishing the preconditioner's effectiveness. On PathMNIST, SOFIM remains
significantly better than most baselines at $\varepsilon = 0.5$ across both
architectures, suggesting that the dataset's class-structure regularity makes
curvature information more recoverable under noise. The $\varepsilon = 0.5$
regime on CIFAR-10 represents a boundary condition for preconditioner-based
approaches under extreme privacy budgets.

\textbf{Isolated non-significant cases.}
DP-FedFC achieves statistically comparable final-round accuracy to SOFIM on
PathMNIST/ResNet at $\varepsilon \in \{0.5, 1, 5\}$, with SOFIM numerically
higher in each case. DP-AdaFedProx is comparable to SOFIM on PathMNIST/VGG
at $\varepsilon = 1$ (0.6124 vs.\ 0.6116), and DP-FedYogi is comparable on
PathMNIST/VGG at $\varepsilon = 10$ (0.6649 vs.\ 0.6627); in these two cases
the baseline is marginally higher but the difference does not reach
significance.
\smallskip\noindent
\textbf{Result legend:}
$\uparrow^*$ SOFIM significantly better;
$\dag$ significant difference, baseline numerically higher;
$-$ not significant.


\begin{table}[p]
\caption{McNemar's test: DP-FedSOFIM vs.\ baselines --- CIFAR-10.
         Left: ResNet-20. Right: VGG-16.}
\label{tab:mcnemar_cifar10}
\centering
%
\begin{minipage}[t]{0.47\linewidth}
\centering
\scriptsize
\setlength{\tabcolsep}{3pt}
\begin{tabular}{lrrrrrr}
\toprule
Baseline & Acc & $n_{10}$ & $n_{01}$ & $\chi^2$ & $p$ & Res.\\
\midrule
\multicolumn{7}{l}{\textit{$\varepsilon=0.5$\ \ SOFIM acc $=0.5691$, $N=10{,}000$}}\\
\midrule
DP-FedGD      & 0.5862 &  529 &  700 &  23.515 & $<$0.0001 & $\dag$       \\
DP-FedAvg     & 0.5862 &  529 &  700 &  23.515 & $<$0.0001 & $\dag$       \\
DP-FedAdam    & 0.5856 &  488 &  653 &  23.572 & $<$0.0001 & $\dag$       \\
DP-FedYogi    & 0.5846 &  568 &  723 &  18.370 & $<$0.0001 & $\dag$       \\
DP-FedFC      & 0.5381 &  914 &  604 &  62.899 & $<$0.0001 & $\uparrow^*$ \\
DP-SCAFFOLD   & 0.4789 & 1231 &  329 & 520.385 & $<$0.0001 & $\uparrow^*$ \\
DP-FTRL       & 0.5605 & 1258 & 1172 &   2.973 &    0.0847 & $-$          \\
DP-AdaFedProx & 0.5862 &  529 &  700 &  23.515 & $<$0.0001 & $\dag$       \\
\midrule
\multicolumn{7}{l}{\textit{$\varepsilon=1$\ \ SOFIM acc $=0.6206$}}\\
\midrule
DP-FedGD      & 0.6254 &  499 &  547 &   2.112 &    0.1462 & $-$          \\
DP-FedAvg     & 0.6254 &  499 &  547 &   2.112 &    0.1462 & $-$          \\
DP-FedAdam    & 0.6149 &  252 &  195 &   7.016 &    0.0081 & $\uparrow^*$ \\
DP-FedYogi    & 0.5562 &  912 &  268 & 350.381 & $<$0.0001 & $\uparrow^*$ \\
DP-FedFC      & 0.5851 &  882 &  527 &  88.940 & $<$0.0001 & $\uparrow^*$ \\
DP-SCAFFOLD   & 0.5663 & 1419 &  876 & 128.002 & $<$0.0001 & $\uparrow^*$ \\
DP-FTRL       & 0.5908 & 1103 &  805 &  46.231 & $<$0.0001 & $\uparrow^*$ \\
DP-AdaFedProx & 0.5916 & 1273 &  983 &  37.022 & $<$0.0001 & $\uparrow^*$ \\
\midrule
\multicolumn{7}{l}{\textit{$\varepsilon=5$\ \ SOFIM acc $=0.6731$}}\\
\midrule
DP-FedGD      & 0.6385 &  951 &  605 &  76.494 & $<$0.0001 & $\uparrow^*$ \\
DP-FedAvg     & 0.6385 &  951 &  605 &  76.494 & $<$0.0001 & $\uparrow^*$ \\
DP-FedAdam    & 0.6714 &  161 &  144 &   0.839 &    0.3596 & $-$          \\
DP-FedYogi    & 0.6772 &  213 &  254 &   3.426 &    0.0642 & $-$          \\
DP-FedFC      & 0.6711 &  474 &  454 &   0.389 &    0.5328 & $-$          \\
DP-SCAFFOLD   & 0.6644 &  737 &  650 &   5.332 &    0.0209 & $\uparrow^*$ \\
DP-FTRL       & 0.6146 & 1248 &  663 & 178.470 & $<$0.0001 & $\uparrow^*$ \\
DP-AdaFedProx & 0.6729 &  575 &  573 &   0.001 &    0.9765 & $-$          \\
\midrule
\multicolumn{7}{l}{\textit{$\varepsilon=10$\ \ SOFIM acc $=0.6849$}}\\
\midrule
DP-FedGD      & 0.6389 &  967 &  507 & 142.931 & $<$0.0001 & $\uparrow^*$ \\
DP-FedAvg     & 0.6389 &  967 &  507 & 142.931 & $<$0.0001 & $\uparrow^*$ \\
DP-FedAdam    & 0.6772 &  323 &  246 &  10.151 &    0.0014 & $\uparrow^*$ \\
DP-FedYogi    & 0.6795 &  323 &  269 &   4.745 &    0.0294 & $\uparrow^*$ \\
DP-FedFC      & 0.6024 & 1416 &  591 & 338.304 & $<$0.0001 & $\uparrow^*$ \\
DP-SCAFFOLD   & 0.6759 &  535 &  445 &   8.083 &    0.0045 & $\uparrow^*$ \\
DP-FTRL       & 0.6304 & 1073 &  528 & 184.844 & $<$0.0001 & $\uparrow^*$ \\
DP-AdaFedProx & 0.6760 &  472 &  383 &   9.057 &    0.0026 & $\uparrow^*$ \\
\bottomrule
\end{tabular}
\end{minipage}
\hfill
%
\begin{minipage}[t]{0.47\linewidth}
\centering
\scriptsize
\setlength{\tabcolsep}{3pt}
\begin{tabular}{lrrrrrr}
\toprule
Baseline & Acc & $n_{10}$ & $n_{01}$ & $\chi^2$ & $p$ & Res.\\
\midrule
\multicolumn{7}{l}{\textit{$\varepsilon=0.5$\ \ SOFIM acc $=0.5195$, $N=10{,}000$}}\\
\midrule
DP-FedGD      & 0.5220 &  553 &  578 &   0.509 &    0.4754 & $-$          \\
DP-FedAvg     & 0.5235 &  516 &  556 &   1.419 &    0.2336 & $-$          \\
DP-FedAdam    & 0.5220 &  521 &  546 &   0.540 &    0.4625 & $-$          \\
DP-FedYogi    & 0.5164 &  612 &  581 &   0.754 &    0.3851 & $-$          \\
DP-FedFC      & 0.4905 &  980 &  690 &  50.013 & $<$0.0001 & $\uparrow^*$ \\
DP-SCAFFOLD   & 0.4583 & 1109 &  497 & 232.454 & $<$0.0001 & $\uparrow^*$ \\
DP-FTRL       & 0.5362 &  643 &  810 &  18.965 & $<$0.0001 & $\dag$       \\
DP-AdaFedProx & 0.5235 &  516 &  556 &   1.419 &    0.2336 & $-$          \\
\midrule
\multicolumn{7}{l}{\textit{$\varepsilon=1$\ \ SOFIM acc $=0.5463$}}\\
\midrule
DP-FedGD      & 0.5389 &  292 &  218 &  10.449 &    0.0012 & $\uparrow^*$ \\
DP-FedAvg     & 0.5366 &  388 &  291 &  13.573 &    0.0002 & $\uparrow^*$ \\
DP-FedAdam    & 0.5393 &  336 &  266 &   7.909 &    0.0049 & $\uparrow^*$ \\
DP-FedYogi    & 0.4983 & 1092 &  612 & 134.648 & $<$0.0001 & $\uparrow^*$ \\
DP-FedFC      & 0.5120 &  913 &  570 &  78.870 & $<$0.0001 & $\uparrow^*$ \\
DP-SCAFFOLD   & 0.4823 & 1412 &  772 & 186.960 & $<$0.0001 & $\uparrow^*$ \\
DP-FTRL       & 0.5397 &  557 &  491 &   4.031 &    0.0447 & $\uparrow^*$ \\
DP-AdaFedProx & 0.4987 & 1389 &  913 &  98.013 & $<$0.0001 & $\uparrow^*$ \\
\midrule
\multicolumn{7}{l}{\textit{$\varepsilon=5$\ \ SOFIM acc $=0.5818$}}\\
\midrule
DP-FedGD      & 0.5477 &  856 &  515 &  84.318 & $<$0.0001 & $\uparrow^*$ \\
DP-FedAvg     & 0.5477 &  856 &  515 &  84.318 & $<$0.0001 & $\uparrow^*$ \\
DP-FedAdam    & 0.5493 &  681 &  356 & 101.230 & $<$0.0001 & $\uparrow^*$ \\
DP-FedYogi    & 0.5659 &  569 &  410 &  25.499 & $<$0.0001 & $\uparrow^*$ \\
DP-FedFC      & 0.5677 &  520 &  379 &  21.802 & $<$0.0001 & $\uparrow^*$ \\
DP-SCAFFOLD   & 0.5633 &  687 &  502 &  28.474 & $<$0.0001 & $\uparrow^*$ \\
DP-FTRL       & 0.5486 &  895 &  563 &  75.145 & $<$0.0001 & $\uparrow^*$ \\
DP-AdaFedProx & 0.5724 &  601 &  507 &   7.806 &    0.0052 & $\uparrow^*$ \\
\midrule
\multicolumn{7}{l}{\textit{$\varepsilon=10$\ \ SOFIM acc $=0.5919$}}\\
\midrule
DP-FedGD      & 0.5490 &  848 &  419 & 144.581 & $<$0.0001 & $\uparrow^*$ \\
DP-FedAvg     & 0.5490 &  848 &  419 & 144.581 & $<$0.0001 & $\uparrow^*$ \\
DP-FedAdam    & 0.5789 &  603 &  473 &  15.466 &    0.0001 & $\uparrow^*$ \\
DP-FedYogi    & 0.5714 &  664 &  459 &  37.058 & $<$0.0001 & $\uparrow^*$ \\
DP-FedFC      & 0.5415 &  951 &  447 & 180.979 & $<$0.0001 & $\uparrow^*$ \\
DP-SCAFFOLD   & 0.5711 &  547 &  339 &  48.362 & $<$0.0001 & $\uparrow^*$ \\
DP-FTRL       & 0.5513 &  849 &  443 & 126.954 & $<$0.0001 & $\uparrow^*$ \\
DP-AdaFedProx & 0.5762 &  491 &  334 &  29.498 & $<$0.0001 & $\uparrow^*$ \\
\bottomrule
\end{tabular}
\end{minipage}
\end{table}


\begin{table}[p]
\caption{McNemar's test: DP-FedSOFIM vs.\ baselines --- PathMNIST.
         Left: ResNet-20. Right: VGG-16.}
\label{tab:mcnemar_pathmnist}
\centering
%
\begin{minipage}[t]{0.47\linewidth}
\centering
\scriptsize
\setlength{\tabcolsep}{3pt}
\begin{tabular}{lrrrrrr}
\toprule
Baseline & Acc & $n_{10}$ & $n_{01}$ & $\chi^2$ & $p$ & Res.\\
\midrule
\multicolumn{7}{l}{\textit{$\varepsilon=0.5$\ \ SOFIM acc $=0.6436$, $N=7{,}180$}}\\
\midrule
DP-FedGD      & 0.6308 & 424 & 332 &  10.954 &    0.0009 & $\uparrow^*$ \\
DP-FedAvg     & 0.6308 & 424 & 332 &  10.954 &    0.0009 & $\uparrow^*$ \\
DP-FedAdam    & 0.6294 & 364 & 262 &  16.296 &    0.0001 & $\uparrow^*$ \\
DP-FedYogi    & 0.6237 & 409 & 266 &  29.873 & $<$0.0001 & $\uparrow^*$ \\
DP-FedFC      & 0.6396 & 449 & 420 &   0.902 &    0.3422 & $-$          \\
DP-SCAFFOLD   & 0.5866 & 654 & 245 & 185.166 & $<$0.0001 & $\uparrow^*$ \\
DP-FTRL       & 0.5996 & 778 & 462 &  80.020 & $<$0.0001 & $\uparrow^*$ \\
DP-AdaFedProx & 0.6308 & 424 & 332 &  10.954 &    0.0009 & $\uparrow^*$ \\
\midrule
\multicolumn{7}{l}{\textit{$\varepsilon=1$\ \ SOFIM acc $=0.6721$}}\\
\midrule
DP-FedGD      & 0.6545 & 332 & 205 &  29.564 & $<$0.0001 & $\uparrow^*$ \\
DP-FedAvg     & 0.6545 & 332 & 205 &  29.564 & $<$0.0001 & $\uparrow^*$ \\
DP-FedAdam    & 0.6650 & 171 & 120 &   8.591 &    0.0034 & $\uparrow^*$ \\
DP-FedYogi    & 0.6256 & 535 & 201 & 150.664 & $<$0.0001 & $\uparrow^*$ \\
DP-FedFC      & 0.6713 & 385 & 379 &   0.033 &    0.8565 & $-$          \\
DP-SCAFFOLD   & 0.6389 & 611 & 372 &  57.624 & $<$0.0001 & $\uparrow^*$ \\
DP-FTRL       & 0.6294 & 581 & 274 & 109.516 & $<$0.0001 & $\uparrow^*$ \\
DP-AdaFedProx & 0.6550 & 518 & 395 &  16.302 &    0.0001 & $\uparrow^*$ \\
\midrule
\multicolumn{7}{l}{\textit{$\varepsilon=5$\ \ SOFIM acc $=0.7100$}}\\
\midrule
DP-FedGD      & 0.6671 & 576 & 268 & 111.669 & $<$0.0001 & $\uparrow^*$ \\
DP-FedAvg     & 0.6671 & 576 & 268 & 111.669 & $<$0.0001 & $\uparrow^*$ \\
DP-FedAdam    & 0.7063 & 170 & 143 &   2.160 &    0.1417 & $-$          \\
DP-FedYogi    & 0.7075 & 144 & 126 &   1.070 &    0.3009 & $-$          \\
DP-FedFC      & 0.7064 & 235 & 209 &   1.408 &    0.2354 & $-$          \\
DP-SCAFFOLD   & 0.7024 & 301 & 246 &   5.331 &    0.0210 & $\uparrow^*$ \\
DP-FTRL       & 0.6538 & 690 & 286 & 166.403 & $<$0.0001 & $\uparrow^*$ \\
DP-AdaFedProx & 0.6951 & 358 & 251 &  18.450 & $<$0.0001 & $\uparrow^*$ \\
\midrule
\multicolumn{7}{l}{\textit{$\varepsilon=10$\ \ SOFIM acc $=0.7155$}}\\
\midrule
DP-FedGD      & 0.6691 & 581 & 248 & 132.960 & $<$0.0001 & $\uparrow^*$ \\
DP-FedAvg     & 0.6691 & 581 & 248 & 132.960 & $<$0.0001 & $\uparrow^*$ \\
DP-FedAdam    & 0.7084 & 207 & 156 &   6.887 &    0.0087 & $\uparrow^*$ \\
DP-FedYogi    & 0.7081 & 207 & 154 &   7.490 &    0.0062 & $\uparrow^*$ \\
DP-FedFC      & 0.6600 & 681 & 283 & 163.495 & $<$0.0001 & $\uparrow^*$ \\
DP-SCAFFOLD   & 0.7052 & 266 & 192 &  11.635 &    0.0006 & $\uparrow^*$ \\
DP-FTRL       & 0.6606 & 625 & 231 & 180.431 & $<$0.0001 & $\uparrow^*$ \\
DP-AdaFedProx & 0.6985 & 337 & 215 &  26.524 & $<$0.0001 & $\uparrow^*$ \\
\bottomrule
\end{tabular}
\end{minipage}
\hfill
%
\begin{minipage}[t]{0.47\linewidth}
\centering
\scriptsize
\setlength{\tabcolsep}{3pt}
\begin{tabular}{lrrrrrr}
\toprule
Baseline & Acc & $n_{10}$ & $n_{01}$ & $\chi^2$ & $p$ & Res.\\
\midrule
\multicolumn{7}{l}{\textit{$\varepsilon=0.5$\ \ SOFIM acc $=0.6053$, $N=7{,}180$}}\\
\midrule
DP-FedGD      & 0.5751 & 444 & 227 &  69.532 & $<$0.0001 & $\uparrow^*$ \\
DP-FedAvg     & 0.5783 & 410 & 216 &  59.503 & $<$0.0001 & $\uparrow^*$ \\
DP-FedAdam    & 0.5787 & 414 & 223 &  56.672 & $<$0.0001 & $\uparrow^*$ \\
DP-FedYogi    & 0.5777 & 436 & 238 &  57.580 & $<$0.0001 & $\uparrow^*$ \\
DP-FedFC      & 0.5838 & 442 & 288 &  32.067 & $<$0.0001 & $\uparrow^*$ \\
DP-SCAFFOLD   & 0.5581 & 530 & 191 & 158.452 & $<$0.0001 & $\uparrow^*$ \\
DP-FTRL       & 0.5928 & 404 & 314 &  11.032 &    0.0009 & $\uparrow^*$ \\
DP-AdaFedProx & 0.5783 & 410 & 216 &  59.503 & $<$0.0001 & $\uparrow^*$ \\
\midrule
\multicolumn{7}{l}{\textit{$\varepsilon=1$\ \ SOFIM acc $=0.6116$}}\\
\midrule
DP-FedGD      & 0.6040 & 130 &  76 &  13.636 &    0.0002 & $\uparrow^*$ \\
DP-FedAvg     & 0.5967 & 209 & 102 &  36.129 & $<$0.0001 & $\uparrow^*$ \\
DP-FedAdam    & 0.6063 & 162 & 124 &   4.787 &    0.0287 & $\uparrow^*$ \\
DP-FedYogi    & 0.5967 & 563 & 456 &  11.026 &    0.0009 & $\uparrow^*$ \\
DP-FedFC      & 0.5568 & 611 & 218 & 185.361 & $<$0.0001 & $\uparrow^*$ \\
DP-SCAFFOLD   & 0.5918 & 523 & 381 &  21.992 & $<$0.0001 & $\uparrow^*$ \\
DP-FTRL       & 0.5939 & 277 & 150 &  37.180 & $<$0.0001 & $\uparrow^*$ \\
DP-AdaFedProx & 0.6124 & 456 & 462 &   0.027 &    0.8689 & $-$          \\
\midrule
\multicolumn{7}{l}{\textit{$\varepsilon=5$\ \ SOFIM acc $=0.6578$}}\\
\midrule
DP-FedGD      & 0.6146 & 611 & 301 & 104.694 & $<$0.0001 & $\uparrow^*$ \\
DP-FedAvg     & 0.6146 & 611 & 301 & 104.694 & $<$0.0001 & $\uparrow^*$ \\
DP-FedAdam    & 0.6525 & 226 & 188 &   3.307 &    0.0690 & $-$          \\
DP-FedYogi    & 0.6384 & 369 & 230 &  31.793 & $<$0.0001 & $\uparrow^*$ \\
DP-FedFC      & 0.6444 & 321 & 225 &  16.529 & $<$0.0001 & $\uparrow^*$ \\
DP-SCAFFOLD   & 0.6341 & 438 & 268 &  40.455 & $<$0.0001 & $\uparrow^*$ \\
DP-FTRL       & 0.6084 & 647 & 292 & 133.457 & $<$0.0001 & $\uparrow^*$ \\
DP-AdaFedProx & 0.6400 & 424 & 296 &  22.401 & $<$0.0001 & $\uparrow^*$ \\
\midrule
\multicolumn{7}{l}{\textit{$\varepsilon=10$\ \ SOFIM acc $=0.6627$}}\\
\midrule
DP-FedGD      & 0.6148 & 636 & 292 & 126.777 & $<$0.0001 & $\uparrow^*$ \\
DP-FedAvg     & 0.6148 & 636 & 292 & 126.777 & $<$0.0001 & $\uparrow^*$ \\
DP-FedAdam    & 0.6412 & 486 & 332 &  28.617 & $<$0.0001 & $\uparrow^*$ \\
DP-FedYogi    & 0.6649 & 345 & 361 &   0.319 &    0.5724 & $-$          \\
DP-FedFC      & 0.6053 & 710 & 298 & 167.580 & $<$0.0001 & $\uparrow^*$ \\
DP-SCAFFOLD   & 0.6418 & 361 & 211 &  38.813 & $<$0.0001 & $\uparrow^*$ \\
DP-FTRL       & 0.6116 & 658 & 291 & 141.155 & $<$0.0001 & $\uparrow^*$ \\
DP-AdaFedProx & 0.6426 & 396 & 252 &  31.557 & $<$0.0001 & $\uparrow^*$ \\
\bottomrule
\end{tabular}
\end{minipage}
\end{table}
    
\end{document}